\documentclass[review]{elsarticle}


\journal{Artificial Intelligence Journal}

\usepackage{graphicx}
\usepackage{placeins}
\usepackage{subfig} 
\usepackage{epstopdf}
\usepackage{tikz}
\usepackage{pgfplots}
\pgfplotsset{compat=1.11}

\usepackage{algorithm}
\usepackage[noend]{algpseudocode}

\usepackage{multirow}

\usepackage{amsfonts}
\usepackage{amsmath}
\usepackage{amsthm}
\usepackage{mathtools}
\usepackage{bm}
\usepackage{bbm}
\usepackage{thmtools} 
\usepackage{thm-restate}
\usepackage{rotating}

\usepackage[multiple]{footmisc}

\usepackage{eurosym}

\usepackage{tabulary}
\usepackage{booktabs}
\usepackage{amssymb}
\usepackage{placeins}
\usepackage{tabu}

\newcommand{\alg}{\textsf{AdComb}}

\newcommand{\bi}{x}
\newcommand{\bu}{y}

\newcommand{\realbi}{\hat{x}}
\newcommand{\realbu}{\hat{y}}

\usepackage{color}

\setlength{\jot}{0pt}

\allowdisplaybreaks

\bibliographystyle{elsarticle-num}

\begin{document}

\begin{frontmatter}

\title{Online Joint Bid/Daily Budget Optimization of Internet Advertising Campaigns}
\tnotetext[mytitlenote]{A preliminary version is published in~\cite{nuara2018combinatorial}. The novel contributions provided by this extended version are: a theoretical study of the properties of our algorithms and a more extensive experimental evaluation.}

\author{Alessandro Nuara and Francesco Trov\`o and Nicola Gatti and Marcello Restelli\\
Dipartimento di Elettronica, Informazione e Bioingegneria\\
Politecnico di Milano, Milano, Italy\\
\{alessandro.nuara, francesco1.trovo, nicola.gatti, marcello.restelli\}@polimi.it
}

\begin{abstract}
Pay-per-click advertising includes various formats (\emph{e.g.}, search, contextual, social) with a total investment  of more than $200$ billion USD per year worldwide.
An advertiser is given a daily budget to allocate over several, even thousands, campaigns, mainly distinguishing for the ad, target, or channel.
Furthermore, publishers choose the ads to display and how to allocate them employing auctioning mechanisms, in which every day the advertisers set for each campaign a bid corresponding to the maximum amount of money per click they are willing to pay  and the fraction of the daily budget to invest. 
In this paper, we study the problem of automating the online joint bid/daily budget optimization of pay-per-click advertising campaigns over multiple channels.
We formulate our problem as a \emph{combinatorial semi-bandit problem}, which requires solving a special case of the Multiple-Choice Knapsack problem every day.
Furthermore, for every campaign, we capture the dependency of the number of clicks on the bid and daily budget by \emph{Gaussian Processes}, thus requiring mild assumptions on the regularity of these functions. 
We design four algorithms and show that they suffer from a regret that is upper bounded with high probability as $O(\sqrt{T})$, where $T$ is the time horizon of the learning process.
We experimentally evaluate our algorithms with synthetic settings generated from real data from Yahoo!, and we present the results of the adoption of our algorithms in a real-world application with a daily average spent of 1,000 Euros  for more than one year.
\end{abstract}

\begin{keyword}
	Automatic Online Advertising, Combinatorial Bandits, Gaussian Processes
\end{keyword}

\end{frontmatter}

\section{Introduction}

Online advertising has been given wide attention by the scientific community as well as by industry from more than two decades.
In the first half of $2019$, in the US, the spent on \emph{search} advertising alone was more than $28$ billion USD~\cite{iab2018iab}, which represents about $50\%$ of the \emph{total} online advertising market. 
The development of techniques to automate the Internet advertising market is crucial both for the publishers and the advertisers, and artificial intelligence can play a prominent role in this scenario. 
In the present paper, we focus on \emph{pay-per-click} advertising---including different formats, \emph{e.g.}, \emph{search}, \emph{contextual}, \emph{social}---in which an advertiser pays only once a user has clicked her ad.

An advertiser is usually given a daily budget to allocate over several, even thousands, campaigns, distinguishing for the ad (\emph{e.g.}, including text content and/or images), target (\emph{e.g.}, keyword, geographical area, language, interests), or channel (\emph{e.g.}, Google, Facebook, Bing).
In pay-per-click advertising, to get an ad impressed, the advertisers take part in an auction carried out by the publisher or the advertising manager, in which they set a bid and  a daily budget for each campaign~\cite{qin2015sponsored}.
The bid represents the maximum amount of money the advertisers are willing to pay for a single click, whereas the daily budget is the maximum spent in a day for a campaign.
The advertisers' goal is to choose the values of the bid and daily budget for every single campaign to maximize the revenue subject to a cumulative budget constraint over all the campaigns.
This optimization problem is particularly challenging, as it includes many intricate subproblems.
For instance, the auctioning mechanisms are not \emph{truthful}, meaning that the best bid for an advertiser may be different from the actual value per click.
As a result, an advertiser needs to resort to learning tools to estimate the revenue provided by the values of the bid when searching for the best bidding strategy.
More precisely, the \emph{Generalized Second Price auction} (GSP), commonly used for search advertising, is well-known not to be  truthful even without budget constraints~\cite{king2007internet}.
Instead, the \emph{Vicrey-Clarke-Groves} mechanism (VCG), commonly used for contextual and social advertising, is truthful only in the unrealistic case in which there is no daily budget constraint~\cite{varian2014vcg,gatti2015truthful}.~\footnote{
Notably, the value per click is usually unknown at the setup of an advertising campaign, and its estimation may require a long time. 
Thus, even without daily budget constraints, an advertiser could not have enough information to bid truthfully in the VCG mechanism.}
Furthermore, every learning algorithm can collect at most a sample per day for every campaign, thus raising severe data scarcity issues.
Besides, the optimization of the daily budget is a combinatorial problem that needs to be solved every day in a short time.

\subsection{Previous Results on Bid and/or Daily Budget Optimimzation}

A few works in the algorithmic economic literature tackle the campaign advertising problem by combining learning and optimization techniques.
The joint  bid/daily budget optimization problem is studied in~\cite{zhang2012joint} and~\cite{kong2018combinational}, where the authors provide two offline learning approaches. 
These two works are characterized by models with a huge number of parameters whose estimates require a considerable amount of data.~\footnote{Some of these parameters (\emph{e.g.}, the position of the ad for every display and click) cannot be observed by advertisers, not allowing the employment of those approaches in practice.}
%
%
%
In~\cite{thomaidou2014toward}, the authors separate the optimization of the bid from that one of the daily budget and use a genetic algorithm to optimize the budget and subsequently applying some bidding strategies.
While the works in~\cite{zhang2012joint,kong2018combinational,thomaidou2014toward} provide no theoretical guarantees, some theoretical results are known for the convergence of some bidding strategies in a single-campaign scenario without budget constraints~\cite{markakis2010discrete}.
%
%
%
%
%
%
%

Online learning approaches with regret guarantees are known only for the restricted cases with a single campaign and a budget constraint over all the length of the campaign without temporal deadlines.~\footnote{
This last assumption rarely holds in real-world applications where, instead, results are expected by a given deadline.}
In particular, in~\cite{ding2013multi} and~\cite{xia2015thompson}, the authors work on a finite number of bid values and exploit a multi-armed bandit approach, whereas the approach proposed in~\cite{trovo2016budgeted} deals with a continuous space of values for the bid and shows that assuring worst-case guarantees leads to the worsening of the average-case performance.

We also mention some works dealing with daily budget optimization~\cite{xu2015smart,italia2017internet,li2018efficient, amin2012budget},  bidding strategies aiming to maximize the profit~\cite{ren2017bidding} and the number of conversions under a budget constraint~\cite{wang2016display, zhang2014optimal,lee2013real,wu2018budget}, cost-per-click~\cite{yang2019bid,weinar2016feedback} and cost-per-action constraints~\cite{ kong2018demystifying}.
%
%
%
Finally, some works study the attribution problem of conversions in display advertising~\cite{geyik2014multitouch,kireyev2016display}.

\subsection{Original Contributions}
We formulate the joint bid/daily budget optimization problem as a combinatorial semi bandit problem~\cite{chen2013combinatorial},
in which, at every round and for each campaign, an advertiser chooses a pair of bid/daily budget values and observes some information on the performance of the campaign.
We discretize the bid/daily budget space, and we formulate the optimization problem as a special case of the Multiple-Choice Knapsack problem~\cite{sinha1979multiple}, that we solve  by dynamic programming in a fashion similar to the approximation scheme for the knapsack problem. 
%
%
We resort to Gaussian Process (GP) regression models~\cite{rasmussen2006gaussian} to estimate the uncertain parameters of the optimization problem (\emph{e.g.}, number of clicks and value per click), as the adoption of GPs requires mild assumptions on the regularity of the functions we need to learn and allows one to capture the correlation among the data thus mitigating the data scarcity issue.
%
%
%

We design four bandit techniques to balance exploration and exploitation in the learning process, that return either samples or upper confidence bounds of the stochastic variables estimated by the GPs to use in the optimization problem.
%
%
We show that our algorithms suffer from a regret that is upper bounded with high probability as $O(\sqrt{T})$, where $T$ is the learning time horizon.

Finally, we experimentally evaluate the convergence of our algorithms to the optimal (clairvoyant) solution and its empirical regret as the size of the problem varies using a realistic simulator based on the \emph{Yahoo!} Webscope $A3$ dataset.
Furthermore, we present the results of the adoption of our algorithms in a real-world application over a period longer than one year, with an average cumulative daily budget of about 1,000 Euros.

\subsection{Structure of the Paper}

The paper is structured as follows. In Section~\ref{sec:problemmformulation}, we formulate our problem as a combinatorial semi bandit. In Section~\ref{sec:method}, we describe our algorithms, whereas, in Section~\ref{sec:analysis}, we provide their theoretical regret analysis. In Section~\ref{sec:experiments}, we present the experimental evaluation of the algorithms. Section~\ref{sec:conclusions} concludes the paper and provides a discussion on future works. In~\ref{sex::appendix}, we report the proofs of our theoretical results. 

\section{Problem Formulation} \label{sec:problemmformulation}

This section is structured as follows.
In Section~\ref{subsec:model}, we introduce the combinatorial optimization problem an advertiser needs to tackle every day to find the best pairs of bid/daily budget values.
In Section~\ref{subsec:banditformmulation}, we formulate the learning problem as a combinatorial semi bandit.
In Section~\ref{subsec:banditstateoftheart}, we describe the main results related to our problem known in the machine learning literature.

\subsection{Optimization Problem Formulation}
\label{subsec:model}

An advertiser is provided with a collection of $N \in \mathbb{N}$ advertising campaigns $\mathcal{C} = \{ C_1, \ldots, C_N \}$, where $C_j$ is the $j$-th campaign, a finite time horizon of $T \in \mathbb{N}$ days, and a \emph{spending plan} $\mathcal{B} = \{\overline{\bu}_1, \ldots, \overline{\bu}_T\}$, where $\overline{\bu}_t \in \mathbb{R}^+$ is the cumulative budget an advertiser is willing to spend at day $t \in \{1, \ldots, T\}$ over all the campaigns.~\footnote{
We assume that the set of campaigns $\mathcal{C}$ and spending plan $\mathcal{B}$ are given.
The campaigns distinguish for the ad, channel, and target.
In real-world applications, the set of campaigns and the spending plan can be optimized from data, \emph{e.g.}, by setting up campaigns with specific targets and adopting a different cumulative daily budget for every day~\cite{gasparini2018targeting}.}\footnote{
For the sake of presentation, from now on, we set one day as the unitary temporal step of our algorithms.
The application of our techniques to different time units is straightforward by opportunely scaling the variables and parameters.
%
}
For every day $t \in \{1, \ldots, T\}$ and every campaign $C_j, j \in \{1, \ldots, N \}$, an advertiser chooses a bid/daily budget pair $\bm{a}_{j,t} = (\bi_{j,t}, \bu_{j,t})$.
The bid $\bi_{j,t}$ takes values in a finite space $X \subset \mathbb{R}^+$ and is constrained to be in the interval $[\underline{\bi}_{j,t}, \overline{\bi}_{j,t}]$, where $\underline{\bi}_{j,t}$ and $\overline{\bi}_{j,t} \in \mathbb{R}^+$ are the minimum and maximum bid an advertiser can choose, respectively.
Similarly, the daily budget $\bu_{j,t}$ takes values in a finite and, for simplicity, evenly-spaced set $Y \subset \mathbb{R}^+$ and is constrained to be in $[\underline{\bu}_{j,t}, \overline{\bu}_{j,t}]$, where $\underline{\bu}_{j,t}$ and $\overline{\bu}_{j,t} \in \mathbb{R}^+$ are the minimum and maximum daily budget an advertiser can set, respectively.~\footnote{
The platforms allow different discretization for the bid and the daily budget. 
For instance, on the Facebook platform, the daily budget discretization has a step of $1.00$ Euro, while, on the Google Adwords platform, a step of $0.01$ Euro is allowed. }
By choosing a specific bid/daily budget pair $(\bi_{j,t}, \bu_{j,t})$ for a day $t$ on campaign $C_j$, an advertiser gets an expected revenue of $v_j \ n_j(\bi_{j,t}, \bu_{j,t})$, where $v_j$ is the value per click for campaign $C_j$ and $n_j(\bi_{j,t}, \bu_{j,t})$ is the corresponding expected number of clicks.
The goal of an advertiser at every day $t \in \{1, \ldots, T\}$ is the choice of the values of the bid and daily budget for every campaign to maximize the cumulative expected revenue.
These values can be found by solving the following optimization problem.
\begin{subequations}
\begin{align}
	\max_{(\bi_{j,t}, \bu_{j,t}) \in X \times Y} & \sum_{j=1}^N v_j \ n_j(\bi_{j,t}, \bu_{j,t}) \label{formulation:objectivefunction} \\
	\text{s.t.} & \sum_{j=1}^N \bu_{j,t} \leq \overline{\bu}_t \label{formulation:budgetconstraints} \\
	& \underline{\bi}_{j,t} \leq \bi_{j,t} \leq \overline{\bi}_{j,t}& \forall j \label{formulation:boxconstraints1} \\
	& \underline{\bu}_{j,t} \leq \bu_{j,t} \leq \overline{\bu}_{j,t} & \forall j \label{formulation:boxconstraints2} 
\end{align}
\end{subequations}
The objective function stated in Equation~\eqref{formulation:objectivefunction} is the weighted sum of the expected number of clicks $n_j$ generated by all the campaigns, where the weights $v_j$ are the campaigns' value per click.
The constraint in Equation~\eqref{formulation:budgetconstraints} is a budget constraint, forcing one not to spend more than the cumulative daily budget limit, while the constraints in Equations~\eqref{formulation:boxconstraints1} and~\eqref{formulation:boxconstraints2} force the variables to assume values the given ranges for bid and daily budget.
The above problem is a special case of the Multiple-Choice Knapsack (MCK)~\cite{kellerer2004multiple}, which is a variant of the knapsack problem where the items are divided into classes and at most one item per class can be chosen. 
In the problem above, a class corresponds to a campaign, whereas an item corresponds to a pair of bid/daily budget values. 
%

Each function $n_j(\bi, \bu)$ is fully described by $|X| \,\, |Y|$ parameters.
However, exploiting the structure of the problem, a much more concise representation using only $2\,|X|$ parameters can be provided.~\footnote{
The reduction of the number of the parameters from $|X|\,\, |Y|$ to $2\,|X|$ does not affect the complexity of the optimization problem, but it plays a crucial role when one needs to learn these parameters.}
We factorize the dependency of the number of clicks on $\bi$ and $\bu$ as follows:
\begin{equation} \label{eq:nmax}
	n_j(\bi, \bu) := \min \left\{ n_{j}^{\mathsf{sat}}(\bi), \bu \ e^{\mathsf{sat}}_{j}(\bi) \right\},
\end{equation}
where the functions $n_{j}^{\mathsf{sat}}(\bi)$ and $e^{\mathsf{sat}}_{j}(\bi)$ describe:
\begin{itemize}
	\item the maximum number of clicks $n_{j}^{\mathsf{sat}} : X \rightarrow \mathbb{R}^+$ that can be obtained with a given bid $\bi$ without any daily budget constraint (or, equivalently, letting $\bu \rightarrow +\infty$);
	\item the number of clicks per unit of daily budget $e_{j}^{\mathsf{sat}} : X \rightarrow \mathbb{R}^+$ with a given bid $\bi$, under the assumption that the number of clicks depends linearly on the daily budget when $n_j \leq n_{j}^{\mathsf{sat}}$.
\end{itemize}
The rationale is that the maximum number of clicks $n_{j}^{\mathsf{sat}}(\bi)$ obtainable with bid $\bi$ is finite and depends on the number of auctions an advertiser can win when using bid $\bi$.
More specifically, $n_{j}^{\mathsf{sat}}(\bi)$ is monotonically increasing in $\bi$, since the number of auctions won by an advertiser and the average quality of the slots in which the ad is displayed monotonically increase in $\bi$. 
Notably, the cost per click monotonically increases in $\bi$ and, therefore, $e^{\mathsf{sat}}_{j}(\bi)$ monotonically reduces in $\bi$.
Finally, fixed the value of $\bi$, the number of clicks increases linearly in the daily budget $\bu$, where the slope is the number of clicks per unit of daily budget $e_{j}^{\mathsf{sat}}(\bi)$, until the maximum number of clicks $n_{j}^{\mathsf{sat}}(\bi)$ obtainable with bid $\bi$ is achieved.
For the sake of clarity, we report in Figure~\ref{fig:curvaesempioclick} an example of function $n_j$ for four values of bid; the black dashed curve depicts $\max_{\bi \in X} \{n_j(\bi, \bu)\}$ as the daily budget $\bu$ varies.

\begin{figure}[ht!]
	\centering
	\includegraphics[width = .8 \columnwidth]{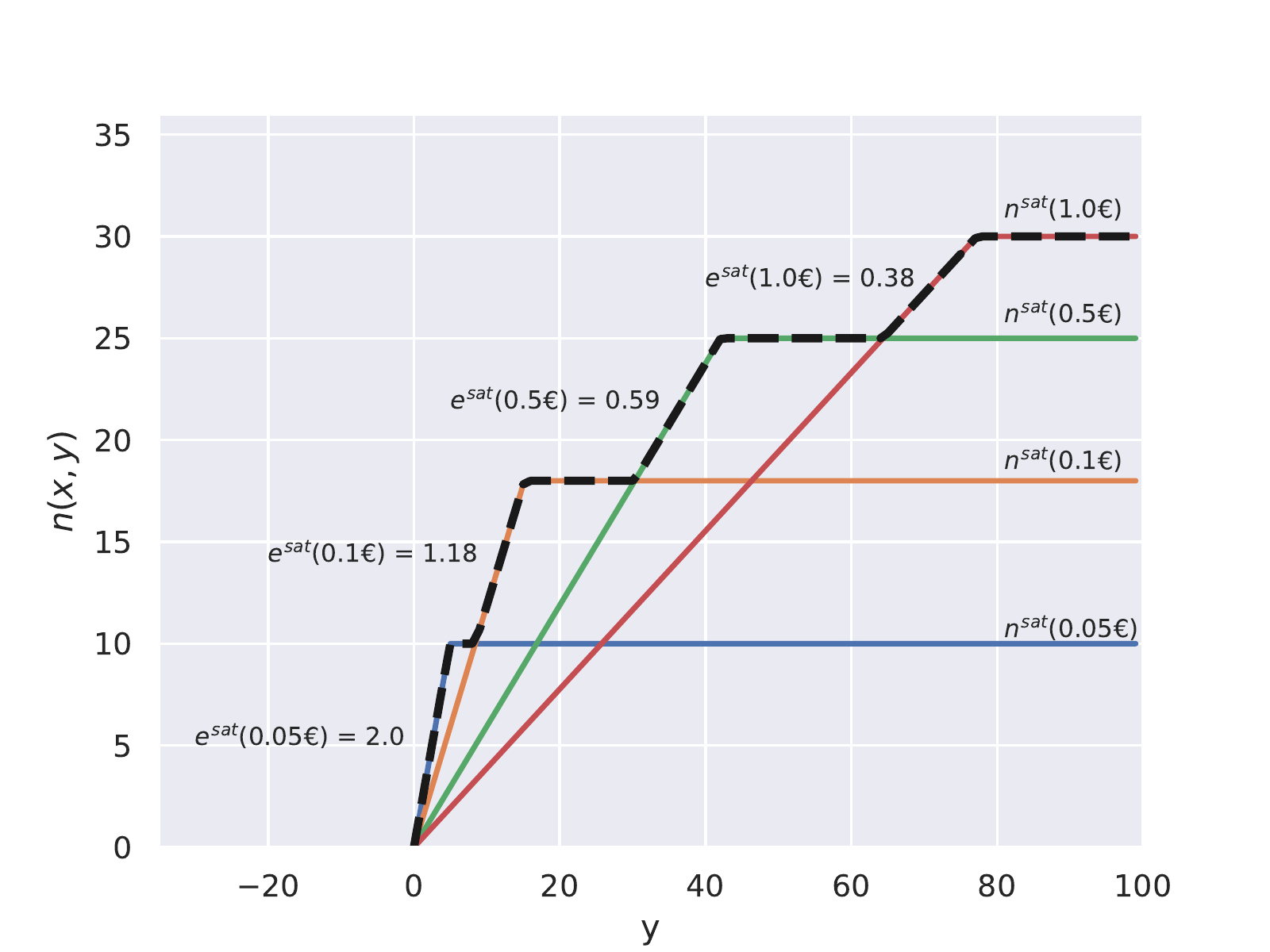}
	\caption{An example of $n_j$ for four values of bid.}
	\label{fig:curvaesempioclick}
\end{figure}

\subsection{Learning Problem Formulation}
\label{subsec:banditformmulation}

In concrete scenarios, the functions $n_j(\cdot, \cdot)$ and the parameters $v_j$ in the optimization problem stated in Equations~\eqref{formulation:objectivefunction}--\eqref{formulation:boxconstraints2} are not \emph{a priori} known, but they need to be estimated online.
Thus, an algorithm needs to gather as much information as possible about these functions during the operational life of the system, and, at the same time, not to lose too much revenue in exploring suboptimal bid/daily budget allocations (a.k.a.~exploration/exploitation dilemma).
Thus, our learning problem can be naturally formulated in a sequential decision fashion~\cite{cesa2006prediction} as a Combinatorial Semi Bandit problem (CSB)~\cite{chen2013combinatorial}.~\footnote{
Another approach to solve this problem is to use a multistage method, \emph{e.g.}, backward induction, but, even for problems with few campaigns and for only $2$ stages, such technique would require a huge computational effort that makes these methods an unfeasible solution in practice.}
In the CSB framework, at every round, the learner chooses from a finite set of options, called \emph{arms}, a subset of these, called \emph{superarm}, subject to some combinatorial constraints (\emph{e.g.}, knapsack constraints). 
Subsequently, the learner observes the payoffs of every single arm belonging to the chosen superarm and gets the corresponding reward.
In the optimization problem stated in Equations~\eqref{formulation:objectivefunction}--\eqref{formulation:boxconstraints2}, each arm corresponds to a bid/daily budget pair, each superarm corresponds to a collection of pairs, one per campaign, and the constraints consist in satisfying the overall daily budget constraint and the range constraints for the bid and daily budget.
The payoff of every arm is the revenue we obtain by setting a bid/daily budget pair.

We denote with $\mathcal{D} = X \times Y$, where $|\mathcal{D}| = M$, the finite space of bid/daily budget pairs. 
The learning process proceeds as follows.
Every day $t$, an advertiser, called \emph{learner} from hereafter, chooses a superarm $\mathcal{S} \in \mathcal{D}^N$, where $S_t := ( \bm{a}_{1,t}, \ldots, \bm{a}_{N,t} )$ and the arm $\bm{a}_{j,t} \in \mathcal{D}$ is the bid/daily budget pair we set for  campaign $C_j$ at day $t$.
Such a superarm $S_t$ must be feasible according to the constraints in Equations~\eqref{formulation:budgetconstraints}-\eqref{formulation:boxconstraints2}.
The choice of superarm $S_t$ leads to a revenue expressed in terms of clicks and value per click.
We denote the random variable corresponding to the number of clicks of campaign $C_j$ by $N_j(\bi_{j,t}, \bu_{j,t})$ and the random valuable corresponding to the value per click of campaign $C_j$ by $V_j$.
Thus, the revenue is a random variable $\sum_j V_j N_j(\bi_{j,t}, \bu_{j,t})$.
We denote with $r_{\bm{\mu}}(S_t)$ the expected value of the revenue when we pull superarm $S_t$, and the vector $\boldsymbol \mu$ of the expected revenues of each arm of every campaign is:
\begin{equation*}
	\bm{\mu} := (v_1 n_1(\bi_1, \bu_1), \ldots, v_1 n_1(\bi_{M}, \bu_{M}), \ldots, v_N n_N(\bi_1, \bu_1), \ldots, v_N n_N(\bi_{M}, \bu_{M}) ).
\end{equation*}
%
%
From now on, we refer to the problem defined above as the Advertisement Bid/ daily Budget Allocation (ABBA) problem.

A policy $\mathfrak{U}$ solving our problem is an algorithm returning, each day $t \in \{1, \ldots, T\}$, a superarm $S_t$.
Given a policy $\mathfrak{U}$, we define the expected \emph{pseudo-regret} over a time horizon of $T$ as:
\begin{equation*}
	\mathcal{R}_T(\mathfrak{U}) := T \,r_{\bm{\mu}}^* - \mathbb{E} \left[ \sum_{t = 1}^T r_{\bm{\mu}}(S_t) \right],
\end{equation*}
where $r_{\bm{\mu}}^* := r_{\bm{\mu}}(S^*)$ is the expected value of the revenue provided by the clairvoyant algorithm choosing the optimal superarm $S^* = (\bm{a}^*_1, \ldots \bm{a}^*_N \}_{j=1}^N$ that is the solution to the problem in Equations~\eqref{formulation:objectivefunction}-\eqref{formulation:boxconstraints2}, and the expectation $\mathbb{E}[\cdot]$ is taken with respect to the stochasticity of the policy $\mathfrak{U}$.
Our goal is the design of algorithms minimizing the pseudo-regret $\mathcal{R}_T(\mathfrak{U})$.
A recap of the notation defined in this section and used from now on is provided in Table~\ref{tab:notation}.

\begin{table}[t!]
\small
\caption{Notation}
\label{tab:notation}
\begin{center}
	\begin{tabular}{r c p{8cm} }
		\toprule
		$\mathcal{C}$ & $\triangleq$ & set of advertising campaigns\\
		$N$ & $\triangleq$ & number of campaigns\\
		${C_j}$ & $\triangleq$ & $j$-th campaign\\
		$t$ & $\triangleq$ & current day\\
		$T$ & $\triangleq$ & time horizon\\
		$\mathcal{B}$ & $\triangleq$ & spending plan\\
		$x_{j,t}$ & $\triangleq$ & set of bid values for the $j$-th campaign at time $t$\\
		$y_{j,t}$ & $\triangleq$ & set of the daily budget values for the $j$-th campaign at time $t$\\
		$\bm{a}_{j,t} = (x_{j,t},y_{j,t})$ & $\triangleq$ & set of the bid/daily budget pairs for the $j$-th campaign at time $t$\\
		$\mathcal{D}$ & $\triangleq$ & set of the possible bid/daily budget pairs for each campaign\\
		$X$ & $\triangleq$ & space of the possible bid values\\
		$Y$ & $\triangleq$ & space of the possible daily budget values\\
		$M$ & $\triangleq$ & cardinality of $\mathcal{D}$\\
		$S_{t}$ & $\triangleq$ & tuple of bid/daily budget pairs (superarms) for the campaigns at time $t$\\
		$v_j$ & $\triangleq$ & value per click of the $j$-th campaign\\
		$n_j(\bi_{j,t}, \bu_{j,t})$ & $\triangleq$ & expected number of clicks given by a bid/daily budget pair $(\bi_{j,t}, \bu_{j,t})$\\
		$\bm{\mu}$ & $\triangleq$ & vector of the expected revenues of each arm of every campaign\\
		$\bm{a}_{j}^*$ & $\triangleq$ & optimal bid/budget pair for the $j$-th campaign\\
		$S^*$ & $\triangleq$ & optimal superarm for the set of campaigns $\mathcal{C}$\\
		$r_{\bm{\mu}}(S_t)$  & $\triangleq$ & expected value of the revenue when we pull superarm $S_t$ at round $t$\\
		$r_{\bm{\mu}}^* = r_{\bm{\mu}}(S^*)$ & $\triangleq$ & expected value of the revenue when we pull the optimal superarm $S_t$ at round $t$\\
		\bottomrule
	\end{tabular}
\end{center}
\end{table}

\subsection{Previous Results on Related Learning Problems}
\label{subsec:banditstateoftheart}

The Combinatorial Bandit framework is introduced in the seminal work~\cite{chen2013combinatorial}, in which the authors also propose an algorithm based on statistical upper confidence bounds, namely CUCB.
Under the assumption that the support of the payoff functions is bounded on $[0,1]$, the CUCB algorithm provides a regret $O(\log(T))$.
The CUCB does not exploit the potential correlation existing among the expected reward of the arms, which makes its application to our specific scenario unfeasible due to the long time needed for learning the parameters.
Another work in the Combinatorial Bandit literature related to our paper is the one presented in~\cite{sankararaman2018combinatorial}, in which the authors design an algorithm for a combinatorial semi-bandit problem with knapsack constraints.
Differently from our setting, in this scenario, every single arm is assigned a specific budget that recedes every time the arm is pulled. 
The process stops as soon as one of the arms runs out of its budget. 
Differently, in our setting, we have a cumulative budget for every day, allowing the pull of the arms for an arbitrary number of times.
In the Combinatorial Bandit literature, few works are known to exploit arm correlation to speed up the learning process.
The most significant result, provided by Degenne and Perchet~\cite{degenne2016combinatorial}, describes an algorithm for the specific case of combinatorial constraints in which we are allowed to pull a fixed number of arms at each round.

Other works related to ours can be found in the (non-combinatorial) MAB literature.
More precisely, Srinivas \emph{et al.}~\cite{srinivas2010gaussian} propose the GP-UCB algorithm that employs GPs in the basic stochastic MAB setting where, at every round, only a single arm can be pulled.
The pseudo-regret of the GP-UCB algorithm is proved to be upper bounded with high probability as $\tilde{O}(\sqrt{T})$.
These algorithms cannot be directly applied to our scenario where, instead, we can pull a superarm subject to a set of constraints.
Notably, we extend the work in~\cite{srinivas2010gaussian} to the  more challenging combinatorial setting and show that it has the same upper bound on the pseudo-regret $\tilde{O}(\sqrt{T})$.

\section{Proposed Method} \label{sec:method}

In Section~\ref{sec:mainalgorithm}, we describe our main algorithm.
Subsequently, in Sections~\ref{sec:updatesubroutine}, \ref{sec:samplingsubroutine}, and~\ref{sec:optimmizesubroutine}, we provide the details of the subroutines used by the main algorithm.
%

\subsection{The Main Algorithm}
\label{sec:mainalgorithm}

Algorithm~\ref{alg:bbo}, named \textsf{AdComB}, provides the high-level pseudocode of our method, and its scheme is reported in Figure~\ref{fig:architecture}.
The input to the algorithm is composed of: the discrete set of bid values $X$, the discrete set of daily budget values $Y$, a model $\mathcal{M}^{(0)}_j$ that, for each campaign $C_j$, captures the prior knowledge of the learner about the function $n_j(\cdot, \cdot)$ and the parameter $v_j$, a spending plan $\mathcal{B}$, and a time horizon $T$.
We distinguish three phases that are repeated every day $t \in \{1. \ldots, T \}$.

\begin{algorithm}[th!]
\caption{\textsf{AdComB}}
\label{alg:bbo}
\begin{algorithmic}[1]
\State \textbf{Input}: set $X$ of bid values, set $Y$ of daily budget values, prior model $\{ \mathcal{M}_j^{(0)} \}_{j=1}^N$, spending plan $\mathcal{B}$, time horizon $T$
\For{ $t \in \{1, \ldots, T\}$ }															\label{line:for1}
	\For{ $j \in \{1, \ldots N\}$ }															\label{line:for2}
		\If{$t=1$}																	\label{line:if}
			\State $\mathcal{M}_j \gets \mathcal{M}^{(0)}_j$ 								\label{line:init} 
		\Else
			\State $\textsf{Get} \left( \tilde{n}_{j,t-1}, \tilde{c}_{j,t-1}, \tilde{g}_{j,t-1}, \tilde{v}_{j,t-1} \right)$ 		\label{line:get}
			\State $\mathcal{M}_j \leftarrow \textsf{Update}\left(\mathcal{M}_j, \left( \realbi_{j,t-1}, \realbu_{j,t-1}, \tilde{n}_{j,t-1}, \tilde{c}_{j,t-1},\tilde{g}_{j,t-1}, \tilde{v}_{j,t-1} \right) \right)$ 		\label{line:update}
		\EndIf		\label{line:endif}
		\State $\left( \hat{n}_j(\cdot,\cdot), \hat{v}_j \right)  \leftarrow \textsf{Sampling} \left( \mathcal{M}_j, X, Y \right)$ 			\label{line:bandit}
	\EndFor
	\State $\{\left( \realbi_{j,t}, \realbu_{j,t} \right)\}_{j = 1}^N \leftarrow \textsf{Optimize} \left(\{ \hat{n}_j(\cdot,\cdot), \hat{v}_j,  X, Y \}_{j=1}^N, \overline{y}_t \right)$ 	\label{line:opti}
	\State $\textsf{Pull} \left( \realbi_{1,t}, \realbu_{1,t}, \ldots, \realbi_{N,t}, \realbu_{N,t} \right)$ \label{line:set}
\EndFor
\end{algorithmic}
\end{algorithm}

\begin{figure}[ht!]
\centering
\includegraphics[width = 1.0 \columnwidth]{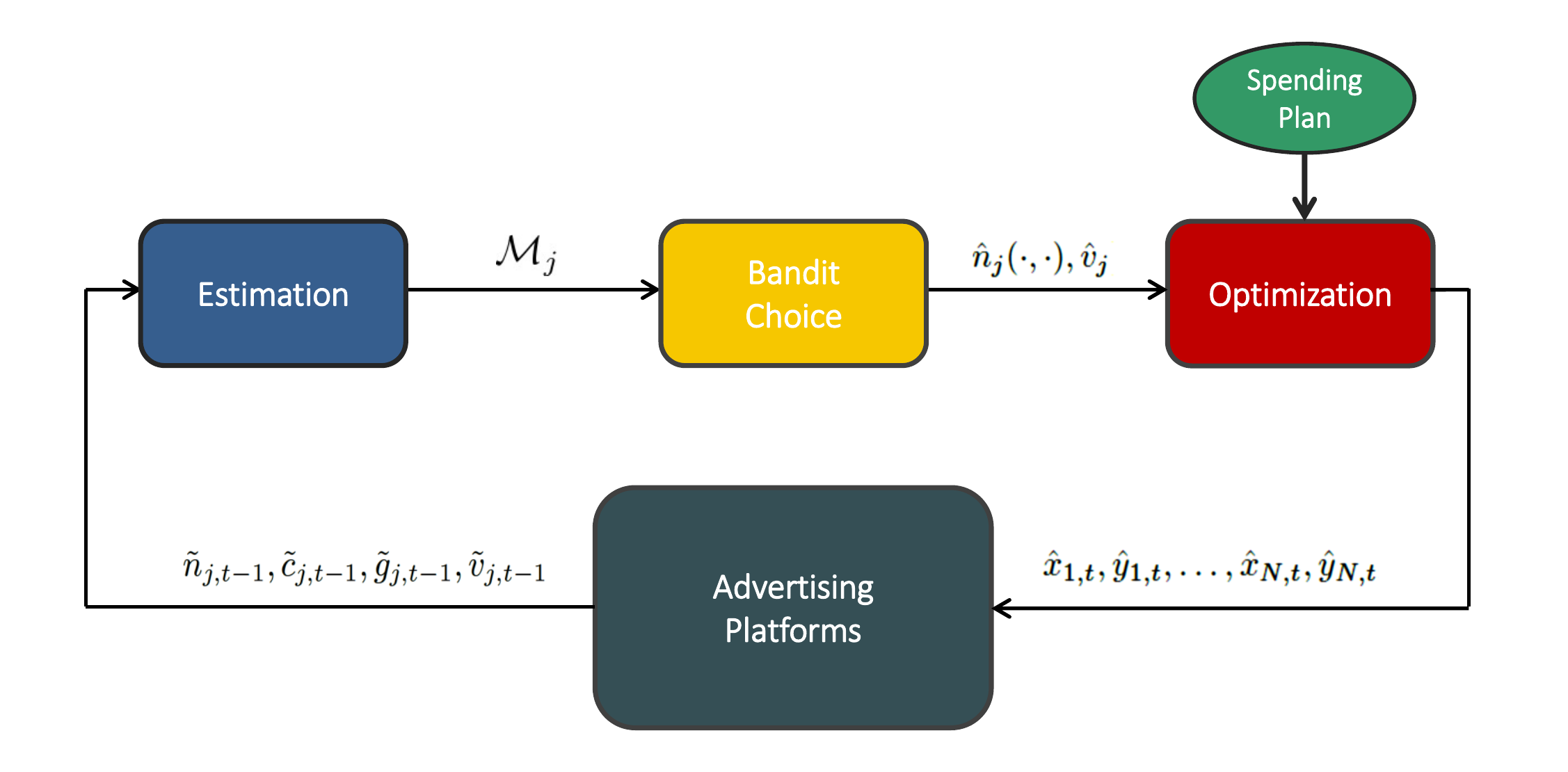}
\caption{The information flow in the \textsf{AdComB} algorithm along the three phases.}
\label{fig:architecture}
\end{figure}

In the first phase (Lines \ref{line:if}--\ref{line:endif}), denoted with \emph{Estimation} in Figure~\ref{fig:architecture}, the algorithm learns, from the observations of days $\{1, \ldots, t-1 \}$, the model $\mathcal{M}_j$ of every campaign $C_j$.
In particular, the model $\mathcal{M}_j$ provides a probability distribution over the average number of clicks $n_j(\bi, \bu)$ as the bid $\bi$ and the daily budget $\bu$ vary and over the average value per click $v_j$.
The first day the algorithm is executed, no observation is available, and, thus, the model $\mathcal{M}_j$ is based on the prior $\mathcal{M}^{(0)}_j$ (Line~\ref{line:init}).
Conversely, during the subsequent days, for every campaign $C_j$, the algorithm gets an observation corresponding to day $t-1$ (Line~\ref{line:get}) composed of:
\begin{itemize}
	\item ($\tilde{n}_{j,t-1}$) the actual number of clicks;
	\item ($\tilde{c}_{j,t-1}$) the actual total daily cost of the campaign;
	\item ($\tilde{g}_{j,t-1}$) the time when the daily budget $\overline{\bu}_{t-1}$ exhausted at $t-1$, if so;
	\item ($\tilde{v}_{j,t-1}$) the actual value per click;
\end{itemize}
and, subsequently, updates the model of each campaign $\mathcal{M}_j$ using those observations (Line~\ref{line:update}).

In the second phase (Line~\ref{line:bandit}), denoted with \emph{Bandit Choice} in Figure~\ref{fig:architecture}, the algorithm uses the model $\mathcal{M}_j$ just updated to infer an estimate of the function $n_j(\cdot, \cdot)$,  for every  value of bid and daily budget in $X$ and $Y$, respectively, and of the parameter $v_j$.
We denote these estimates with $\hat{n}_j(\cdot,\cdot)$ and $\hat{v}_j$, respectively.

In the third phase (Lines~\ref{line:opti}--\ref{line:set}), denoted with \emph{Optimization} in Figure~\ref{fig:architecture}, the algorithm employs the estimates $\hat{n}_j (\cdot, \cdot)$ and $\hat{v}_j$ in place of $n_j (\cdot, \cdot)$ and $v_j$ in the problem stated in Equations~\eqref{formulation:objectivefunction}--\eqref{formulation:boxconstraints2}.
Finally, it solves the optimization problem returning the bid/daily budget allocation for the next day $t$ (Line~\ref{line:set}).
%

In what follows, we provide a detailed description of the model $\mathcal{M}_j$ and of the subroutines $\textsf{Update}(\cdot)$, $\textsf{Sampling}(\cdot)$, and $\textsf{Optimize}(\cdot)$ used in Algorithm~\ref{alg:bbo}.

\subsection{Model and \textsf{Update} Subroutine}
\label{sec:updatesubroutine}

As mentioned before, the \textsf{Update} subroutine generates an estimate of the number of clicks $n_j(\cdot,\cdot)$ and value per click $v_j$ using the previous observations.
To avoid data scarcity issues and speed up the learning process, we make mild assumptions on the function $n_j(\cdot, \cdot)$, and we model it by resorting to GPs~\cite{rasmussen2006gaussian}.
These models, developed in the statistical learning field, capture the correlation of the nearby points in the input space exploiting kernel functions.
Moreover, they provide a probability distribution over the output space---in our case the number of clicks---for each point of the input space---in our case the discretized space of bid/daily budget pairs---, thus giving information both on the expected values of the quantities to estimate and their uncertainty.

For the sake of presentation, we describe how we model the maximum number of clicks $n_{j}^{\mathsf{sat}}(\cdot)$ with a GP regression model.
The model directly applies to the number of clicks per unit of daily budget $e_{j}^{\mathsf{sat}}(\cdot)$.
Furthermore, in some situations, the factorization introduced in Equation~\eqref{eq:nmax} may not be exploited by a learning algorithm, as we discuss below.
In these cases, one can adopt a $2$-dimensional GP to model $n_j(\cdot, \cdot)$.
The treatment of this case, called \emph{unfactorized} hereafter, is analogous to that of $n_{j}^{\mathsf{sat}}(\cdot)$, but, every time the factorized model can be employed, its use is preferable due to the curse of dimensionality~\cite{bishop2006pattern}.
In the following, we use \alg-\textsf{F} to refer to the algorithm when the factorized model is used, while we use \alg-\textsf{U} for the case in which we do not use the factorized model.

We model $n_{j}^{\mathsf{sat}}(\cdot)$ for campaign $C_j$ with a GP over the bid space $X$, \emph{i.e.}, using a collection of random variables having a joint Gaussian distribution.
Following the definition provided in~\cite{rasmussen2006gaussian}, a GP is completely specified by the mean $m : X \rightarrow \mathbb{R}$ and covariance $k : X \times X \rightarrow [0, 1]$ functions.
Hence, we denote the GP that models the maximum number of clicks in $C_j$ as follows:
\begin{equation*}
	n_{j}^{\mathsf{sat}}(\bi) := \mathcal{GP} \left( m(\bi), k(\bi, \cdot) \right), \forall \bi \in X.
\end{equation*}
More specifically, the correlation structure we use is given by a squared exponential kernel:
\begin{equation*}
	k(\bi, \bi') = \exp \left\{- \frac{(\bi - \bi')^2}{2\, l^2} \right\} \ \ \forall \bi, \bi' \in X,
\end{equation*}
where $l \in \mathbb{R^+}$ is a length-scale parameter determining the smoothness of the function.~\footnote{If available, \emph{a priori} information on the process can be employed to design a function $m(\bi)$ over the input space $X$ which specifies the mean value. For instance, when information on the maximum number $\theta$ of clicks achievable for any bid is available, one may use a linearly increasing function over the bid space as $m(\bi) = \frac{\bi \theta}{\max_{\bi' \in X} \bi'}$. If instead no \emph{a priori} information is available, one can use a uninformative prior mean by setting $m(\bi) = 0, \forall \bi \in X$.} 
Other common choices for the kernel can be found in~\cite{srinivas2010gaussian}.
%

According to GP model, at every day $t$, the predictive distribution corresponding to the maximum number of clicks $n_{j, t}^{\mathsf{sat}}(\bi)$ on campaign $C_j$ for the bid $\bi$ is estimated by $\mathcal{N}(\hat{\mu}_{j,t-1}(\bi), \hat{\sigma}^2_{j,t-1}(\bi))$ with:
\begin{align*}
	&\hat{\mu}_{j,t-1}(\bi) = m(\bi) + \mathbf{k}(\bi, \boldsymbol{\realbi}_{j,t-1})^\top \Phi^{-1} \left( \tilde{\boldsymbol{n}}_{j,t-1}^{\mathsf{sat}} - \mathbf{m}_{j,t-1} \right),\\
	&\hat{\sigma}^2_{j,t-1}(\bi) = \ k(\bi, \bi) - \mathbf{k}(\bi, \boldsymbol{\realbi}_{j,t-1})^\top \ \Phi^{-1} \ \mathbf{k}(\bi, \boldsymbol{\realbi}_{j,t-1}),
\end{align*}
where $\boldsymbol{\realbi}_{j, t-1}: = \left( \realbi_{j, 1}, \ldots, \realbi_{j, t-1} \right)^\top$ is the vector of the bid set so far, $\mathbf{k}(\bi, \boldsymbol{\realbi}_{j,t-1}) := (k(\bi, \realbi_{j, 1}), \ldots, k(\bi, \realbi_{j, t-1}) )^\top$ is the correlation value for the bid $\bi$ w.r.t.~each element of the vector $\boldsymbol{\realbi}_{j, t-1}$, $\mathbf{m}_{j,t-1} := \left( m(\realbi_{j, 1}), \ldots, m(\realbi_{j, t-1}) \right)^\top$ is the vector of the prior for the input in $\boldsymbol{\realbi}_{j, t-1}$, $\tilde{\boldsymbol{n}}^{\mathsf{sat}}_{j, t-1} := \left( \tilde{n}_{j,  1}^{\mathsf{sat}}, \ldots, \tilde{n}_{j,  t-1}^{\mathsf{sat}} \right)^\top$ is the vector of maximum number of clicks achieved the previous days, $[\Phi]_{h,k} := k(\realbi_{j,h}, \realbi_{j,k}) + \lambda$ is the Gram matrix built on the available data, and $\lambda$ is the variance of the realizations we use in the estimation process.~\footnote{
From now on, we denote with $\mathcal{N}(\mu, \sigma^2)$ the Gaussian with mean $\mu$ and variance~$\sigma^2$.}\footnote{
The computation cost of the estimation can be dramatically reduced by using an alternative, but much more involved, approach whereby the inverse of the Gram matrix $\Phi^{-1}$ is stored and updated iteratively at each day; see~\cite{rasmussen2006gaussian} for details.}
Note that the distribution of the maximum number of clicks at the first day is $\mathcal{N}(m(\bi), k(\bi,\bi))$ for each $\bi \in X$ since no information from the data can be used yet.

The parameter $\tilde{n}_{j, t}^{\mathsf{sat}}$ is set equal to the observation $\tilde{c}_{j, t}$ when the daily budget $\hat{y}_{j,t}$ used for campaign $C_j$ did not exhaust.
When instead $\hat{y}_{j,t}$ exhausted, we have not a direct observation of $\tilde{n}_{j, t}^{\mathsf{sat}}$, and, thus, we set $\tilde{n}_{j, t}^{\mathsf{sat}}$ as a function of the time $\tilde{g}_{j, t}$.
For instance, if we assume a uniform distribution of the clicks over the day, the value of $\tilde{n}_{j, t}^{\mathsf{sat}}$ has the following expression:
\begin{equation*}
	\tilde{n}_{j, t}^{\mathsf{sat}} := \frac{24}{\tilde{g}_{j, t}} \tilde{n}_{j, t},
\end{equation*}
where $\tilde{g}_{j, t} \in (0, 24]$ is expressed in hours.
In general, this relationship can be estimated from historical data coming from past advertising campaigns of products belonging to the same category (\emph{e.g.}, toys, insurances, beauty products).
Conversely, if no information on how the clicks distribute over the day is available, one has to rely on the unfactorized model for $n_j(\cdot, \cdot)$.
Similar considerations hold for the estimation of the value per click $v_j$. 


We estimate $v_j$, at day $t$, from the observations $\tilde{\boldsymbol{v}}_{j, t-1} := (\tilde{v}_{j,1}, \ldots, \tilde{v}_{j,t-1})^\top$ of the previous days up to $t-1$.
We use a single Gaussian probability distribution to model the value per click $v_j$, thus, at every day $t$, given the observations $\tilde{\boldsymbol{v}}_{j, t-1}$, we estimate its mean $\hat{\nu}_{j,t}$ and variance $\hat{\psi}^2_{j,t}$ relying on the Bayesian update of a prior $\mathcal{N}(0, \psi_j^2)$~\cite{gelman2013bayesian}, as follows:
\begin{align*}
	&\hat{\nu}_{j,t-1} := \frac{\psi_j^2 \sum_{h=1}^{t-1} \tilde{v}_{j,h}}{\xi + (t-1)\psi_j^2},\\
	&\hat{\psi}^2_{j,t-1} := \frac{\psi_j^2 \xi}{\xi + (t-1)\psi_j^2},
\end{align*}
where $\xi$ is the measurement noise variance.
To summarize, the data needed for updating the model $\mathcal{M}_j$ corresponding to campaign $C_j$ at day $t$ consists of the following elements:
\begin{itemize}
	\item the values per click $\tilde{\boldsymbol{v}}_{j, t-1}$,
	\item the chosen bids $\boldsymbol{\realbi}_{j,t-1}$,
	\item the maximum number of achievable clicks $\tilde{\boldsymbol{n}}_{j, t-1}^{\mathsf{sat}}$,
	\item the number of clicks per unit of daily budget $\tilde{\boldsymbol{e}}_{j, t-1}^{\mathsf{sat}} := \left( \frac{\tilde{n}_{j,1}}{\tilde{c}_{j,1}}, \ldots, \frac{\tilde{n}_{j,t-1}}{\tilde{c}_{j,t-1}} \right)^\top$.
\end{itemize}

\subsection{\textsf{Sampling} Subroutine}
\label{sec:samplingsubroutine}

The \textsf{Sampling} subroutine aims at returning an estimate of the expected number of clicks and the value per click to use in the optimization problem stated in Equations~\eqref{formulation:objectivefunction}--\eqref{formulation:boxconstraints2}. 
The n\"aive choice of using the expected value computed from $\mathcal{M}_j$ may not provide any guarantee to minimize the regret $\mathcal{R}_T(\mathfrak{U})$, as it is well known in the bandit literature.
To guarantee that our algorithm minimizes the cumulative expected regret, we compute an estimation exploiting the information on the uncertainty provided by model $\mathcal{M}_j$.
More precisely, the model $\mathcal{M}_j$ associated with campaign $C_j$ provides a probability distribution over the values of the function $n_j(\cdot,\cdot)$ and the values $v_j$ can assume. 
This is equivalent to say that $\mathcal{M}_j$ provides a probability distribution over the possible instances of the optimization problem in Equations~\eqref{formulation:objectivefunction}--\eqref{formulation:boxconstraints2}. 
The \textsf{Sampling} subroutine generates, from $\mathcal{M}_j$, a single instance of the optimization problem, assigning a value to $n_j(\bi, \bu)$ for every $\bi \in X, \bu \in Y$ and $v_j$.

We propose two different approaches for the sampling phase, namely \textsf{AdComB-UCB} and \textsf{AdComB-TS}, taking inspiration from the GPUCB algorithm~\cite{srinivas2010gaussian}, and the Thompson Sampling (TS) algorithm~\cite{thompson1933likelihood}, respectively.~\footnote{
For the sake of clarity, in what follows we describe the our sampling procedure for the \alg-\textsf{F} version of \alg{}; the case for the unfactored model \alg-\textsf{U} is analogous.}

The \textsf{AdComB-UCB} algorithm uses upper confidence bounds on the expected value of the posterior distributions to estimate $n_{j}^{\mathsf{sat}}(\bi)$ and  $e_{j}^{\mathsf{sat}}(\bi)$.
More specifically, $n_{j}^{\mathsf{sat}}(\bi)$ and  $e_{j}^{\mathsf{sat}}(\bi)$ are replaced in the optimization problem defined in Equation~\eqref{formulation:objectivefunction}--\eqref{formulation:boxconstraints2} by:
\begin{align*}
&u^{(n)}_{j,t-1}(\bi) := \hat{\mu}_{j,t-1}(\bi) + \sqrt{b^{(n)}_{j,t-1}} \,\hat{\sigma}_{j,t-1}(\bi),\\
&u^{(e)}_{j,t-1}(\bi) := \hat{\eta}_{j,t-1}(\bi) + \sqrt{b^{(e)}_{j,t-1}} \,\hat{s}_{j,t-1}(\bi),
\end{align*}
respectively, where $\hat{\eta}_{j,t-1}(\bi)$ and $\hat{s}^2_{j,t-1}(\bi)$ are the mean and the variance provided by the GP modeling $e_{j}^{\mathsf{sat}}(\cdot)$, respectively, $b^{(n)}_{j,t-1} \in \mathbb{R}^+$ and $b^{(e)}_{j,t-1} \in \mathbb{R}^+$ are non-negative sequences of values, which will be discussed later on in Section~\ref{sec:analysis}.
Properly setting the values of $b^{(n)}_{i,t-1}$ and $b^{(e)}_{i,t-1}$ leads us to design optimistic bounds, that are necessary for the convergence of the algorithm to the optimal solution.
Similarly, for the value per click $v_j$, we use:
\begin{equation*}
	u^{(v)}_{j,t-1} := \hat{\nu}_{j,t-1} + \sqrt{b^{(v)}_{j,t-1}} \,\hat{\psi}_{j,t-1},
\end{equation*}
where $b^{(v)}_{j,t-1} \in \mathbb{R}^+$ is a non-negative sequence of values.

Conversely, the \textsf{AdComB-TS} algorithm draws samples from the distributions corresponding to $n_{j}^{\mathsf{sat}}(\bi)$ and $e_{j}^{\mathsf{sat}}(\bi)$ and, consequently, computes the value of $n_j(\bi, \bu)$ to be used in the following optimization phase.
More formally, at a given day $t$ and for every bid in $\bi \in X$, we replace $n_{j}^{\mathsf{sat}}(\bi)$ and  $e_{j}^{\mathsf{sat}}(\bi)$ in the optimization problem defined in Equation~\eqref{formulation:objectivefunction}--\eqref{formulation:boxconstraints2} with:
\begin{align*}
	&\theta^{(n)}_{j,t-1}(\bi) \sim \mathcal{N}(\hat{\mu}_{j,t-1}(\bi),\hat{\sigma}^2_{j,t-1}(\bi)),\\
	&\theta^{(e)}_{j,t-1}(\bi) \sim \mathcal{N}(\hat{\eta}_{j,t-1}(\bi), \hat{s}^2_{j,t-1}(\bi)),
\end{align*}
respectively.
Similarly, for the value per click, we draw a sample $\theta^{(v)}_{j,t-1}(\bi)$ as follows:
\begin{equation*}
	\theta^{(v)}_{j,t-1} \sim \mathcal{N}(\hat{\nu}_{j,t-1}, \hat{\psi}^2_{j,t-1}).
\end{equation*}
Finally, given the values for $n_{j}^{\mathsf{sat}}(\bi)$ and $e_{j}^{\mathsf{sat}}(\bi)$ generated by one of the two aforementioned methods, we compute $n_j(\bi, \bu)$ as prescribed by Equation~\eqref{eq:nmax} for each bid $\bi \in X$ and for each daily budget $\bu \in Y$, and use them in the following optimization procedure.

\subsection{\textsf{Optimize} Subroutine}
\label{sec:optimmizesubroutine}

Finally, for every campaign $C_j$, we need to choose the best bid/daily budget pair to set at day $t$.
We resort to a modified version of the algorithm in~\cite{kellerer2004multiple} used for the solution of the knapsack problem.
Let us define the set of the feasible bid and daily budgets for the round $t$ and the campaign $C_j$ as
$X_{j,t} := X \cap [\underline{\bi}_{j,t}, \overline{\bi}_{j,t}]$ and $Y_{j,t} := Y \cap [\underline{\bu}_{j,t}, \overline{\bu}_{j,t}]$, respectively.
At first, for every value of daily budget $\bu \in Y_{j,t}$, we define $z_j(\bu) \in X_{j,t}$ as the bid maximizing the number of clicks, formally:
\begin{equation*}
	z_j(\bu) := \arg \max_{\bi \in X_{j,t}} n_j(\bi, \bu).
\end{equation*}
The value $z_j(\bu)$ is easily found by enumeration. 
Then, for each value of daily budget $\bu \in Y$, we define $w_j(\bu)$ as the value we expect to receive by setting the daily budget of campaign $C_j$ equal to $\bu$ and the bid equal to $z_j(\bu)$, formally:
\begin{equation*}
	w_j(\bu) := \left\{ \begin{array}{ll}
		v_j \, n_j(z_j(\bu), \bu) & \underline{\bu}_{j,t} \leq \bu \leq \overline{\bu}_{j,t} \\
		0 & \bu < \underline{\bu}_{j,t} \vee \bu > \overline{\bu}_{j,t}\\
		\end{array} \right..
\end{equation*}
This allows one to remove the dependency of the optimization problem defined in Equations~\eqref{formulation:objectivefunction}--\eqref{formulation:boxconstraints2} from $\bi$, letting variables $\bu$ the only variables to deal with.

Finally, the optimization problem is solved in a dynamic programming fashion.
We use a matrix $M(j, \bu)$ with $j \in \{1, \ldots, N \}$ and $\bu \in Y$.
We fill iteratively the matrix as follows.
Each row is initialized as $M(j, \bu) = 0$ for every $j$ and $\bu \in Y$.
For $j = 1$, we set $M(1, \bu) = w_1(\bu)$ for every $\bu \in Y$, corresponding to the best budget assignment for every value of $\bu$ if the campaign $C_j$ were the only campaign in the problem.
For $j > 1$, we set for every $\bu \in Y$:
\begin{equation*}
	M(j, \bu) = \max_{\bu' \in Y, \bu' \leq \bu} \Big\{ M(j-1, \bu') + w_j(\bu - \bu') \Big\}.
\end{equation*}
That is, the value in each cell $M(j, \bu)$ is found by scanning all the elements $M(j-1, \bu')$ for $\bu'\leq \bu$, taking the corresponding value, adding the value given by assigning a budget of $\bu - \bu'$ to campaign $C_j$ and, finally, taking the maximum among all these combinations.
At the end of the iterative process, the optimal value of the optimization problem can be found in the cell corresponding to $\max_{\bu \in Y} M(N, \bu)$.
To find the optimal assignment of daily budget, it is sufficient to store the partial best assignments of budget in the cells of the matrix.

The complexity of the aforementioned algorithm is $O(N H^2)$, \emph{i.e.}, it is linear in the number of campaigns $N$ and quadratic in the number of different values of the budget $H := |Y|$, where $|\cdot|$ is the cardinality of a set.
%
%
When $H$ is huge, the above algorithm may require a long time.
In that case, it is sufficient to reduce $H$ by rounding the values of the budget as in the FPTAS of the knapsack problem.
%



\section{Regret Analysis} \label{sec:analysis}

We provide a theoretical finite-time analysis of the regret $\mathcal{R}_T(\mathfrak{U})$ of the algorithms proposed in the previous section.
The derivation of the guarantees of our algorithm exploits the results presented in~\cite{accabi2018cmab}.

Initially, we define the \emph{Maximum Information Gain}, which we use to bound the regret of the \alg{} algorithm.
Let us start defining the Information Gain of a set of samples drawn from a GP according to~\cite{srinivas2010gaussian} as follows:
\begin{restatable}[Information Gain]{defi}{informationgain} \label{thm:informationgain}
	Given a realization of a GP $f(\cdot)$ and a vector of noisy observations $\mathbf{y}(\mathbf{x}) = (y(x_1), \ldots, y(x_t))^\top$ over the input points $\mathbf{x} = (x_1, \ldots, x_t)^\top$ for the function $f(\cdot)$, the Information Gain of the set of samples $(\mathbf{x}, \mathbf{y}(\mathbf{x}))$ is defined as:
	\begin{equation*}
		IG(\mathbf{y}(\mathbf{x}) \,|\, f) := \frac{1}{2} \log \left| I + \frac{\Phi}{\lambda} \right|,
	\end{equation*}
	where $I$ is the identity matrix of order $t$, $\lambda$ is the noise variance of the realizations and $[\Phi]_{ij} := k(x_i, x_j)$ is the Gram matrix of the vector computed on the inputs $\mathbf{x}$.
\end{restatable}
Using the previous definition, we define the Maximum Information Gain, as follows.
\begin{restatable}[Maximum Information Gain]{defi}{maxig} \label{thm:maxig}
	Given a realization of a GP $f(\cdot)$, the Maximum Information Gain of a generic set of $t$ noisy observations $ \mathbf{y}(\mathbf{x})$ from the function $f(\cdot)$ is defined as:
	\begin{equation*}
		\gamma_t(f) := \max_{\mathbf{x} \in X^t} IG(\mathbf{y}(\mathbf{x}) \,|\, f),
	\end{equation*}
	where $X$ is the input space.
\end{restatable}

For the sake of presentation, we report our regret analysis separately for the case in which the model of $n_j(\cdot,\cdot)$ is unfactorized (Section~\ref{sec::regret2D}) and the case in which it is factorized (Section~\ref{sec::regretFactored}).

\subsection{Unfactorized Model}
\label{sec::regret2D}

We show that the worst-case pseudo-regret of the \alg{} algorithm when using the unfactorized model is upper bounded as follows.
\begin{restatable}[]{thm}{regretducb} \label{thm:regretducb}
	Let us consider an ABBA problem over $T$ rounds where the functions $n_j(\bi, \bu)$ is the realization of a GP.
	Using the \alg{}-\textsf{U-UCB} algorithm with the following upper bounds for the number of clicks and of value per click:
	\begin{align*}
		&\hat{u}^{(n)}_{j,t-1}(\bi, \bu) := \hat{\mu}_{j,t-1}(\bi, \bu) + \sqrt{b_{t}} \ \hat{\sigma}_{j,t-1}(\bi, \bu),\\
		& \hat{u}^{(v)}_{j,t-1} := \hat{\nu}_{j, t-1} + \sqrt{b'_{t}} \ \hat{\psi}^2_{j,t-1},
	\end{align*}
	respectively, with $b_{t} := 2 \log{ \left( \frac{\pi^2 N M t^2}{3 \delta} \right)}$ and $b'_{t} := 2 \log{ \left( \frac{\pi^2 N t^2}{3 \delta} \right)}$.
	For every $\delta \in (0,1)$, the following holds with probability at least $1-\delta$: 
	\begin{equation*}
		\mathcal{R}_T(\mathfrak{U}) \leq \sqrt{8 T N b_T \left[ \frac{v_{\max}^2}{\log \left( 1 + \frac{1}{\lambda} \right)} \sum_{j=1}^N \gamma_T(n_j) + \xi (n_{\max} + 2 \sqrt{b'_t} \sigma)^2 \sum_{j=1}^N \log \left( \frac{\xi}{\psi_j^2} + T \right) \right]},\\
	\end{equation*}
	where, $\lambda$ and $\xi$ are variances of the measurement noise of the click functions $n_j(\cdot)$ and of the value per click $v_j$, respectively, $v_{\max} := \max_{j \in \{1, \ldots, N \}} v_j$ is the maximum expected value per click, $n_{\max} := \max_{\bi \in X, \bu \in Y, j \in \{1, \ldots, N\}} n_j(\bi, \bu)$ is the maximum expected number of click we might obtain on average over all the campaigns $C_j$, and $\sigma^2 := k(\bm{a}, \bm{a}) \geq \hat{\sigma}^2_{j,t}(\bm{a})$ for each $j$, $t$ and $\bm{a}$.
	Equivalently, with probability at least $1 - \delta$, it holds:
	\begin{equation*}
		\mathcal{R}_T(\mathfrak{U}) = \tilde{O} \left( \sqrt{T N \sum_{j=1}^N \gamma_T(n_j)} \right),
	\end{equation*}
	where the notation $\tilde{O}(\cdot)$ disregards the logarithmic factors.
\end{restatable}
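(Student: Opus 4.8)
The plan is to bound the per-round regret $r_{\bm{\mu}}^* - r_{\bm{\mu}}(S_t)$ by a sum of confidence-interval widths, and then aggregate over $t = 1, \ldots, T$ using Cauchy--Schwarz together with the information-gain machinery of Srinivas \emph{et al.}~\cite{srinivas2010gaussian}. First I would establish a high-probability \emph{validity event} $\mathcal{E}$: for every campaign $C_j$, every arm $\bm{a} = (\bi, \bu) \in \mathcal{D}$, and every round $t$, the true mean $n_j(\bm{a})$ lies in $[\hat{\mu}_{j,t-1}(\bm{a}) - \sqrt{b_t}\,\hat{\sigma}_{j,t-1}(\bm{a}), \, \hat{\mu}_{j,t-1}(\bm{a}) + \sqrt{b_t}\,\hat{\sigma}_{j,t-1}(\bm{a})]$, and similarly $v_j$ lies in the corresponding interval built from $\hat{\nu}_{j,t-1}, \hat{\psi}_{j,t-1}, b'_t$. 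Using the Gaussian tail bound $\Pr(|Z| > \beta) \le e^{-\beta^2/2}$ and a union bound over the $NM$ arm/campaign pairs and the $t$ rounds (the $t^2$ inside $b_t$ making $\sum_t 1/t^2 = \pi^2/6$ converge), the choices $b_t = 2\log(\pi^2 N M t^2 / (3\delta))$ and $b'_t = 2\log(\pi^2 N t^2/(3\delta))$ give $\Pr(\mathcal{E}) \ge 1 - \delta$. This is essentially Lemma 5.1 of~\cite{srinivas2010gaussian} adapted to the combinatorial, multi-campaign setting, and I expect to invoke the corresponding result from~\cite{accabi2018cmab} directly.

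Next, on the event $\mathcal{E}$, I would show the \emph{optimism} property: since the \alg{}-\textsf{U-UCB} subroutine feeds the upper confidence bounds $\hat{u}^{(n)}_{j,t-1}$ and $\hat{u}^{(v)}_{j,t-1}$ into the \textsf{Optimize} routine, and the objective in~\eqref{formulation:objectivefunction} is monotone nondecreasing in each $n_j(\bm{a})$ and each $v_j$ (all quantities nonnegative), the optimized value over the optimistic instance is at least $r_{\bm{\mu}}^*$. Hence $r_{\bm{\mu}}^* - r_{\bm{\mu}}(S_t) \le \sum_{j=1}^N \big[\hat{u}^{(v)}_{j,t-1}\,\hat{u}^{(n)}_{j,t-1}(\bm{a}_{j,t}) - v_j\, n_j(\bm{a}_{j,t})\big]$, where $\bm{a}_{j,t}$ is the arm actually pulled for campaign $C_j$. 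The key algebraic step is to telescope this product difference: write $\hat{u}^{(v)}\hat{u}^{(n)} - v n = \hat{u}^{(v)}(\hat{u}^{(n)} - n) + n(\hat{u}^{(v)} - v)$, and on $\mathcal{E}$ bound $\hat{u}^{(n)}_{j,t-1}(\bm{a}_{j,t}) - n_j(\bm{a}_{j,t}) \le 2\sqrt{b_t}\,\hat{\sigma}_{j,t-1}(\bm{a}_{j,t})$ and $\hat{u}^{(v)}_{j,t-1} - v_j \le 2\sqrt{b'_t}\,\hat{\psi}_{j,t-1}$, while $\hat{u}^{(v)}_{j,t-1} \le v_{\max} + 2\sqrt{b'_t}\,\sigma$-type quantities and $n_j \le n_{\max}$ give the prefactors. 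This yields a per-round bound of the form $\sum_j \big(C_1 \sqrt{b_t}\,\hat{\sigma}_{j,t-1}(\bm{a}_{j,t}) + C_2 \sqrt{b_t'}\,\hat{\psi}_{j,t-1}\big)$.

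Then I would sum over $t$ and apply Cauchy--Schwarz in two places: once to pull the double sum over $(t,j)$ apart as $\sum_{t,j} \hat{\sigma}_{j,t-1}(\bm{a}_{j,t}) \le \sqrt{TN \sum_{t,j} \hat{\sigma}^2_{j,t-1}(\bm{a}_{j,t})}$, and then for each fixed $j$ use the standard inequality $\sum_{t=1}^T \hat{\sigma}^2_{j,t-1}(\bm{a}_{j,t}) \le \frac{2}{\log(1+1/\lambda)}\,\gamma_T(n_j)$, which relates the sum of posterior variances along the pulled sequence to the maximum information gain (Lemma 5.3 / 5.4 of~\cite{srinivas2010gaussian}). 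The value-per-click variances telescope even more simply: $\hat{\psi}^2_{j,t-1} = \psi_j^2\xi/(\xi + (t-1)\psi_j^2)$ is an explicit decreasing sequence, and $\sum_{t=1}^T \hat{\psi}^2_{j,t-1} = O\!\big(\xi \log(\xi/\psi_j^2 + T)\big)$ by comparing with an integral. Bounding $b_t \le b_T$ throughout, collecting constants, and substituting back reproduces the stated closed-form bound, and the $\tilde{O}(\sqrt{T N \sum_j \gamma_T(n_j)})$ form follows since $b_T$ and the logarithmic sums are all polylogarithmic in $T$.

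The main obstacle I anticipate is the \emph{product} structure $v_j n_j$ in the reward: unlike the scalar GP-UCB setting, optimism must be argued jointly over two independent estimators, and the cross term $n_j(\hat{u}^{(v)} - v_j)$ must be controlled so that its accumulation is still $\tilde O(\sqrt T)$ rather than larger — this is why the explicit form of $\hat{\psi}^2_{j,t-1}$ and its summability matter, and why the factor $(n_{\max} + 2\sqrt{b'_t}\,\sigma)^2$ appears. A secondary subtlety is that the \textsf{Optimize} routine solves the MCK \emph{exactly} (the dynamic program is not the rounded FPTAS here), so no approximation slack enters the optimism inequality; if one used the rounded version, an extra additive $O(\eps T)$ term would need to be tracked. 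Everything else — the union bound, the two Cauchy--Schwarz applications, and the information-gain bound on summed variances — is routine given the results already cited.
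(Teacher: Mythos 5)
Your proposal is correct and follows essentially the same route as the paper's proof: the same Gaussian-tail union bound with the same choice of $b_t, b'_t$, the same optimism-via-monotonicity argument through the optimizer, the same telescoping of the product $\hat{u}^{(v)}\hat{u}^{(n)} - v_j n_j$ with the $(n_{\max}+2\sqrt{b'_t}\sigma)$ prefactor, and the same final aggregation via Cauchy--Schwarz, the information-gain bound on summed posterior variances, and the explicit summation of $\hat{\psi}^2_{j,t-1}$. The only cosmetic difference is that the paper inserts the posterior-mean vector $\bm{\mu}_t$ as an intermediate pivot and bounds two differences separately, whereas you bound $\hat{u}^{(v)}\hat{u}^{(n)} - v_j n_j$ in one pass; the resulting per-round bound is identical.
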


\begin{restatable}[]{thm}{regretdts} \label{thm:regretdts}
	Let us consider an ABBA problem over $T$ rounds where the functions $n_j(\bi, \bu)$ is the realization of a GP.
	Using the \alg{}-\textsf{U-TS} algorithm, for every $\delta \in (0,1)$, the following holds with probability at least $1-\delta$:
	\begin{multline*}
		\mathcal{R}_T(\mathfrak{U}) \leq \Bigg\{ 8 T N \left[ \frac{v_{\max}^2}{\log \left( 1 + \frac{1}{\lambda} \right)} b_T \sum_{j=1}^N \gamma_T(n_j) \right.\\
		\left. + \xi b'_T (n_{\max} + \sqrt{b_T} \sigma)^2 \sum_{j=1}^N \log \left( \frac{\xi}{\psi_j^2} + T \right) \right] \Bigg\}^{1/2},
	\end{multline*}
	where $b_{t} := 8 \log \left( \frac{2 N M t^2}{3 \delta} \right)$, $b'_{t} := 8 \log \left( \frac{2 N t^2}{3 \delta} \right)$, $\lambda$ and $\xi$ are variances of the measurement noise of the click functions $n_j(\cdot)$ and of the value per click $v_j$, respectively, $v_{\max} := \max_{j \in \{1, \ldots, N \}} v_j$ is the maximum expected value per click, $n_{\max} := \max_{\bi \in X, \bu \in Y, j \in \{1, \ldots, N\}} n_j(\bi, \bu)$ is the maximum expected number of click we might obtain on average over all the campaigns $C_j$, and $\sigma^2 := k(\bm{a}, \bm{a}) \geq \hat{\sigma}^2_{j,t}(\bm{a})$ for each $j$, $t$ and $\bm{a}$.	
		
	Equivalently, with probability at least $1 - \delta$, it holds:
	\begin{equation*}
		\mathcal{R}_T(\mathfrak{U}) =  \tilde{O} \left( \sqrt{T N \sum_{j=1}^N \gamma_T(n_j)} \right).
		\end{equation*}
\end{restatable}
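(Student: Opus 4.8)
The plan is to recognise \alg{}-\textsf{U-TS} as an instance of the Gaussian-process combinatorial Thompson-sampling framework of~\cite{accabi2018cmab} and then to specialise its guarantee to the ABBA problem; the argument runs in parallel to the proof of Theorem~\ref{thm:regretducb} and differs only in the way the sampling step is controlled. First I would check that the reward $r_{\bm{\mu}}(S)=\sum_{j=1}^N v_j\,n_j(\bm{a}_j)$ meets the two structural requirements of that framework: it is \emph{monotone} in each per-arm mean (immediate, since the $v_j$ are non-negative and, in the unfactorized model, $r_{\bm{\mu}}$ is linear in the $n_j(\bm{a}_j)$), and it satisfies a unit-coefficient \emph{bounded-smoothness} condition whose constants are exactly the $v_{\max}$ and $n_{\max}$ appearing in the statement (a perturbation of the click means propagates with weight $\le v_{\max}$, a perturbation of the value per click with weight $\le n_{\max}$). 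I would also record the block structure of the arm space --- one $\mathcal{GP}$ over $\mathcal{D}$ per campaign for $n_j$, plus one scalar Gaussian per campaign for $v_j$ --- the fact that every feasible superarm, the clairvoyant optimum $S^*$ included, picks exactly one arm from each block, and the fact that each of the $N$ click-GPs and $N$ value estimators is updated exactly once per day, so that $\hat\psi^2_{j,t-1}=\psi_j^2\xi/(\xi+(t-1)\psi_j^2)$ decays deterministically.

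The second step is to set up the confidence events. A Gaussian tail bound applied simultaneously to (i) the deviation of the true value $n_j(\bm{a})$ from the GP posterior mean $\hat\mu_{j,t-1}(\bm{a})$ and (ii) the deviation of the TS draw $\theta^{(n)}_{j,t-1}(\bm{a})$ from $\hat\mu_{j,t-1}(\bm{a})$, followed by a union bound over the $\le NM$ arm/campaign pairs and over $t$ (using $\sum_t t^{-2}=\pi^2/6$ and splitting the failure probability between the click part and the value-per-click part), yields that with probability $\ge 1-\delta$, for every $t$, every campaign and every arm, both quantities are at most $\sqrt{b_t}\,\hat\sigma_{j,t-1}(\bm{a})$ with $b_t=8\log\big(\tfrac{2NMt^2}{3\delta}\big)$ --- the constant being larger than in the UCB case because it must cover the extra randomness of the sample and the passage from sample-to-mean to sample-to-truth, which costs a factor $2$ in the width and hence $4$ in $b_t$. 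The analogous bound holds for $v_j$ with $b'_t=8\log\big(\tfrac{2Nt^2}{3\delta}\big)$ and the closed-form $\hat\psi^2_{j,t-1}$. On this event, for \emph{every} feasible superarm $S$ one obtains $|r_{\bm{\theta}_t}(S)-r_{\bm{\mu}}(S)|\le \Delta_t(S)$ where, using monotonicity to distribute the product perturbations in $r_{\bm{\theta}_t}(S)=\sum_j \theta^{(v)}_{j,t-1}\,\theta^{(n)}_{j,t-1}(\bm{a}_j^S)$, the width is $\Delta_t(S):=2\sum_{j}\big(v_{\max}\sqrt{b_t}\,\hat\sigma_{j,t-1}(\bm{a}_j^S)+(n_{\max}+\sqrt{b_T}\sigma)\sqrt{b'_t}\,\hat\psi_{j,t-1}\big)$, the factor $n_{\max}+\sqrt{b_T}\sigma$ being the uniform bound on $\theta^{(n)}_{j,t-1}$ that also appears squared in the statement.

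It remains to control $\mathcal{R}_T(\mathfrak{U})=\sum_t\big(r_{\bm{\mu}}(S^*)-r_{\bm{\mu}}(S_t)\big)$. Since \alg{}-\textsf{U-TS} plays $S_t\in\arg\max_S r_{\bm{\theta}_t}(S)$ over feasible superarms and $S^*$ is feasible, $r_{\bm{\theta}_t}(S^*)\le r_{\bm{\theta}_t}(S_t)$, so on the confidence event $r_{\bm{\mu}}(S^*)-r_{\bm{\mu}}(S_t)\le \Delta_t(S^*)+\Delta_t(S_t)$. The term $\Delta_t(S_t)$ is evaluated at the pulled arms: summing over $t$ and using Cauchy--Schwarz together with the standard Gaussian-process inequality that links $\sum_t \hat\sigma^2_{j,t-1}(\bm{a}_{j,t})$ to the Maximum Information Gain (Definitions~\ref{thm:informationgain}--\ref{thm:maxig}, following~\cite{srinivas2010gaussian}), one gets $\sum_t\sum_j \hat\sigma_{j,t-1}(\bm{a}_{j,t})\le \sqrt{\tfrac{2TN}{\log(1+1/\lambda)}\sum_j \gamma_T(n_j)}$, while the value-per-click part sums to $\le\sqrt{TN\xi\sum_j \log(\xi/\psi_j^2+T)}$ by the deterministic decay of $\hat\psi^2_{j,t-1}$. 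The step I expect to be the crux is bounding $\sum_t\Delta_t(S^*)$: the arms of $S^*$ need never be pulled, so this sum does not telescope on its own, and it must be traded against $\sum_t\Delta_t(S_t)$ by the combinatorial Thompson-sampling argument of~\cite{accabi2018cmab} --- showing that, conditionally on the history, the sampled instance is ``optimistic on $S^*$'' (i.e.\ $r_{\bm{\theta}_t}(S^*)\ge r_{\bm{\mu}}(S^*)$) with at least a universal constant probability, via Gaussian anti-concentration of the per-campaign click samples, and then passing from the resulting conditional-expectation bound back to a high-probability statement with an Azuma--Hoeffding step. Collecting the two contributions, pulling $\sqrt{b_T},\sqrt{b'_T}$ out of the sum over $t$, and a final Cauchy--Schwarz step that merges the click and value-per-click terms under a single square root, gives the displayed bound; the $\tilde O\big(\sqrt{TN\sum_j\gamma_T(n_j)}\big)$ form then follows because $b_T$, $b'_T$ and $\log(\xi/\psi_j^2+T)$ are all logarithmic in $T$.
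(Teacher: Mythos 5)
Your setup matches the paper's: the triangle-inequality trick that inflates $b_t$ by a factor of $4$ (from $2\log(\cdot)$ to $8\log(\cdot)$), the union bounds giving the stated $b_t$ and $b'_t$, the decomposition $reg_t\le|r_{\bm{\mu}}(S^*)-r_{\bm{\theta}_t}(S^*)|+|r_{\bm{\theta}_t}(S_t)-r_{\bm{\mu}}(S_t)|$ via $r_{\bm{\theta}_t}(S^*)\le r_{\bm{\theta}_t}(S_t)$, and the Cauchy--Schwarz/information-gain finish. The divergence, and the gap, is at what you yourself flag as the crux: the term $\sum_t\Delta_t(S^*)$. You propose to handle it with the frequentist Thompson-sampling machinery --- constant-probability optimism on $S^*$ via Gaussian anti-concentration plus an Azuma--Hoeffding conversion back to a high-probability statement --- but you neither carry this out nor is it what the argument needs. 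The paper's resolution is immediate from the confidence event you already established: since that event controls $|\theta^{(n)}_{j,t-1}(\bm{a})-n_j(\bm{a})|\le\sqrt{b_t}\,\hat{\sigma}_{j,t-1}(\bm{a})$ for \emph{every} arm $\bm{a}\in\mathcal{D}$, not only the pulled ones, the bound
\begin{equation*}
|r_{\bm{\theta}_t}(S)-r_{\bm{\mu}}(S)|\le v_{\max}\sqrt{b_t}\sum_{j=1}^{N}\max_{\bm{a}\in\mathcal{D}}\hat{\sigma}_{j,t-1}(\bm{a})+\sqrt{b'_t}\,(n_{\max}+\sqrt{b_t}\,\sigma)\sum_{j=1}^{N}\hat{\psi}_{j,t-1}
\end{equation*}
holds \emph{uniformly over feasible superarms} $S$, in particular for $S^*$, once each $\hat{\sigma}_{j,t-1}(\bm{a}_j^{S})$ is replaced by the maximum over $\mathcal{D}$. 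The sum over $t$ of these per-campaign maxima is then absorbed into $\gamma_T(n_j)$ exactly as in the proof of Theorem~\ref{thm:regretducb}, where the same device is already needed (even there the pulled arm is not the argmax-variance arm).

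So no anti-concentration, no conditional-expectation argument, and no trading of $\Delta_t(S^*)$ against $\Delta_t(S_t)$ is required: the ``non-telescoping'' worry dissolves once the max over $\mathcal{D}$ is taken inside each campaign, which is possible precisely because the assumption that $n_j$ is a GP realization makes the posterior concentrate around the truth at all arms simultaneously. As written, your proof has a hole at this step --- the deferral to an unproved optimism lemma is doing all the work --- and the route you sketch would at best reprove the result with a substantially longer argument and worse constants.
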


\subsection{Factorized Model}
\label{sec::regretFactored}
We show that the worst-case pseudo-regret of the \alg{} algorithm  when using the factorized model is upper bounded as follows.

\begin{restatable}[]{thm}{regretfucb} \label{thm:regretfucb}
	Let us consider an ABBA problem over $T$ rounds where the functions $n_{j}^{\mathsf{sat}}(\bi)$ and $e^{\mathsf{sat}}_{j}(\bi)$ are the realization of GPs.
	Using the \alg{}-\textsf{F-UCB} algorithm with the following upper bounds for the number of clicks, the number of clicks per unit of budget, and the value per click, respectively:
	\begin{align*}
		&u^{(n)}_{j,t-1}(\bi) := \hat{\mu}_{j,t-1}(\bi) + \sqrt{b_{t}} \hat{\sigma}_{j,t-1}(\bi),\\
		&u^{(e)}_{j,t-1}(\bi) := \hat{\eta}_{j,t-1}(\bi) + \sqrt{b_{t}} \hat{s}_{j,t-1}(\bi),\\
		&u^{(v)}_{j,t-1} := \hat{\nu_{j,t-1}} + \sqrt{b'_{t}} \hat{\psi}_{j,t-1},
\end{align*}
	with $b_{t} = 2 \log \left( \frac{\pi^2 N M t^2}{2 \delta} \right)$ and $b'_{t} := 2 \log \left( \frac{\pi^2 N t^2}{2 \delta} \right)$.
	For every $\delta \in (0, 1)$, the following holds with probability at least $1 - \delta$:
	\begin{multline*}
		\mathcal{R}_T(\mathfrak{U}) \leq \Bigg\{ T N \left[ \bar{c}_1 b_T \sum_{j=1}^{N} \gamma_T(n_j) + \bar{c}_2 b_T \sum_{j=1}^{N} \gamma_T(e_j) \right.\\
		\left. + \bar{c}_3 b'_T \left(2 s \bu_{\max} \sqrt{b_T} + 2 \sigma \sqrt{b_T} + n_{\max}^{\mathsf{sat}} \right)^2 \sum_{j=1}^{N} \log \left( \frac{\xi}{\psi^2_j} + T \right) \right]\Bigg\}^{1/2},
	\end{multline*}
	where $\bar{c}_1 := \frac{12 v^2_{\max}}{\log \left( 1 + \frac{1}{\lambda} \right)}$, $\bar{c}_2 := \frac{12 v^2_{\max} \bu^2_{\max}}{\log \left( 1 + \frac{1}{\lambda'} \right)}$, and $\bar{c}_3 := 12 \xi$, $\xi$, $\lambda$ and $\lambda'$ are the variance of the value per click, measurement noise on the maximum number of clicks and number of clicks per unit of daily budget, respectively, $v_{\max} := \max_{j \in \{1, \ldots, N \}} v_j$ is the maximum expected value per click, $n_{\max} := \max_{\bi \in X, \bu \in Y, j \in \{1, \ldots, N\}} n_j(\bi, \bu)$ is the maximum expected number of click we might obtain on average over all the campaigns $C_j$, $\bu_{\max} := \max_{\bu \in Y} \bu$ is the maximum budget one can allocate on a campaign, and $\sigma^2 := k(\bi, \bi) \geq \hat{\sigma}^2_{j,t}(\bi)$, $s^2 := k'(\bi, \bi) \geq \hat{s}^2_{j,t}(\bi)$ for each $j$, $t$ and $\bi$.
	
	Equivalently, with probability at least $1-\delta$, it holds:
	\begin{equation*}
	\mathcal{R}_T(\mathfrak{U}) = \tilde{O} \left( \sqrt{T N \sum_{j=1}^N [\gamma_T(n_j) +  \gamma_T(e_j)]} \right).
	\end{equation*}
\end{restatable}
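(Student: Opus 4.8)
The plan is to follow the optimism-in-the-face-of-uncertainty recipe for combinatorial semi-bandits, specialized to the knapsack constraint and to the factorized click model, leaning on the machinery of~\cite{accabi2018cmab} and on the GP confidence/information-gain toolkit of~\cite{srinivas2010gaussian}. First I would define a high-probability \emph{clean event} on which the three families of confidence bounds hold simultaneously: by the Gaussian tail bound $\Pr(|Z|>\sqrt{b})\le e^{-b/2}$, the posterior Gaussianity of the GPs modeling $n_j^{\mathsf{sat}}$ and $e_j^{\mathsf{sat}}$ and of the Bayesian estimator of $v_j$, and a union bound over the $N$ campaigns, the bids (at most $M$), and all rounds $t\ge1$, the choices $b_t=2\log(\pi^2 N M t^2/(2\delta))$ and $b'_t=2\log(\pi^2 N t^2/(2\delta))$ --- combined with $\sum_{t\ge1}t^{-2}=\pi^2/6$ and splitting the failure budget $\delta$ evenly across the $n^{\mathsf{sat}}$, $e^{\mathsf{sat}}$ and $v$ families --- make the total failure probability at most $\delta$. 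On the clean event one obtains, for all $j$, all bids $\bi$, all $t$: $u^{(n)}_{j,t-1}(\bi)\ge n_j^{\mathsf{sat}}(\bi)$ with $u^{(n)}_{j,t-1}(\bi)-n_j^{\mathsf{sat}}(\bi)\le 2\sqrt{b_t}\,\hat\sigma_{j,t-1}(\bi)$, and the analogous statements for $e_j^{\mathsf{sat}}$ (with $\hat s_{j,t-1}$, $b_t$) and for $v_j$ (with $\hat\psi_{j,t-1}$, $b'_t$).

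Next I would transfer optimism to the level of the whole optimization problem (everything below is on the clean event). Applying the elementary inequality $\min\{a,b\}-\min\{c,d\}\le(a-c)^{+}+(b-d)^{+}$ to $\hat n_j(\bi,\bu)=\min\{u^{(n)}_{j,t-1}(\bi),\,\bu\,u^{(e)}_{j,t-1}(\bi)\}$ against $n_j(\bi,\bu)=\min\{n_j^{\mathsf{sat}}(\bi),\,\bu\,e_j^{\mathsf{sat}}(\bi)\}$, the clean event gives $0\le\hat n_j-n_j\le 2\sqrt{b_t}\,\hat\sigma_{j,t-1}(\bi)+2\bu\sqrt{b_t}\,\hat s_{j,t-1}(\bi)$, while $0\le\hat v_j-v_j\le 2\sqrt{b'_t}\,\hat\psi_{j,t-1}$. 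Hence $\sum_j\hat v_j\hat n_j(\bm{a}_j)\ge\sum_j v_j n_j(\bm{a}_j)$ for every feasible superarm, so the value the exact \textsf{Optimize} routine assigns to its maximizer $S_t$ is at least $r_{\bm{\mu}}^*$, and therefore
\[
r_{\bm{\mu}}^*-r_{\bm{\mu}}(S_t)\le\sum_{j=1}^{N}\big(\hat v_j\hat n_j(\bm{a}_{j,t})-v_j n_j(\bm{a}_{j,t})\big)\le\sum_{j=1}^{N}\Big(v_j(\hat n_j-n_j)+\hat n_j(\hat v_j-v_j)\Big).
\]
Bounding $v_j\le v_{\max}$, $\bu_{j,t}\le\bu_{\max}$ and $\hat n_j(\bm{a}_{j,t})\le n_{\max}^{\mathsf{sat}}+2\sqrt{b_T}\,\sigma+2\bu_{\max}\sqrt{b_T}\,s$ (this last estimate, again via the $\min$ inequality, is the source of the $(2 s \bu_{\max}\sqrt{b_T}+2\sigma\sqrt{b_T}+n_{\max}^{\mathsf{sat}})^2$ coefficient in the statement) turns the per-round regret into a sum over $j$ of three terms proportional to $\sqrt{b_t}\,\hat\sigma_{j,t-1}(\bi_{j,t})$, $\bu_{\max}\sqrt{b_t}\,\hat s_{j,t-1}(\bi_{j,t})$ and $(n_{\max}^{\mathsf{sat}}+2\sqrt{b_T}\sigma+2\bu_{\max}\sqrt{b_T}s)\sqrt{b'_t}\,\hat\psi_{j,t-1}$.

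Then, still on the clean event, I would sum over the horizon. Two applications of Cauchy--Schwarz, first over the $N$ campaigns inside a round and then over the $T$ rounds, yield $\mathcal{R}_T(\mathfrak{U})\le\sqrt{TN\sum_{j}\sum_{t}g_{j,t}^2}$, where $g_{j,t}^2$ is, up to a universal constant, the sum of $b_t\hat\sigma_{j,t-1}^2$, $\bu_{\max}^2 b_t\hat s_{j,t-1}^2$ and $(n_{\max}^{\mathsf{sat}}+2\sqrt{b_T}\sigma+2\bu_{\max}\sqrt{b_T}s)^2 b'_t\hat\psi_{j,t-1}^2$, all evaluated at the pulled bid $\bi_{j,t}$. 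Since campaign $j$'s GP for $n_j^{\mathsf{sat}}$ receives exactly one observation per round, the information-gain argument of~\cite{srinivas2010gaussian} gives $\sum_{t=1}^{T}\hat\sigma_{j,t-1}^2(\bi_{j,t})\le\frac{2}{\log(1+1/\lambda)}\gamma_T(n_j)$, and likewise $\sum_{t=1}^{T}\hat s_{j,t-1}^2(\bi_{j,t})\le\frac{2}{\log(1+1/\lambda')}\gamma_T(e_j)$; for the value-per-click term, directly summing the closed form $\hat\psi_{j,t-1}^2=\psi_j^2\xi/(\xi+(t-1)\psi_j^2)$ gives $\sum_{t=1}^{T}\hat\psi_{j,t-1}^2\le\xi\log(\xi/\psi_j^2+T)$. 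Replacing $b_t,b'_t$ by their monotone maxima $b_T,b'_T$ and collecting constants produces the explicit bound with $\bar{c}_1,\bar{c}_2,\bar{c}_3$; since $b_T$, $b'_T$ and $\log(\xi/\psi_j^2+T)$ are polylogarithmic in $T$, the $\tilde{O}(\sqrt{TN\sum_j[\gamma_T(n_j)+\gamma_T(e_j)]})$ form follows.

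The main obstacle lives in the second step: keeping both optimism and an \emph{additive} decomposition of the confidence width into the two per-bid GP widths alive across the inner $\min\{n_j^{\mathsf{sat}},\bu\,e_j^{\mathsf{sat}}\}$ composition and the outer multiplication by $v_j$ is the one place the factorized analysis genuinely departs from the unfactorized Theorems~\ref{thm:regretducb}--\ref{thm:regretdts}, and it is what forces the extra $\bu_{\max}$- and $s$-dependent contributions into $\bar{c}_2$ and into the $b'_T$ coefficient of the statement. A secondary point to check carefully is that the per-campaign posterior-variance sequences are genuinely the ones to which the information-gain lemma applies, even though $N$ arms are pulled each round and even though, whenever the daily budget is exhausted, the quantity fed to the $n_j^{\mathsf{sat}}$-GP is the extrapolation $\tfrac{24}{\tilde g_{j,t}}\tilde n_{j,t}$ rather than a raw click count --- this is handled by reading $\lambda$ as the effective measurement-noise variance. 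The remaining ingredients (tail and union bounds, Cauchy--Schwarz, and constant bookkeeping) are routine.
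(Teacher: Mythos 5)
Your proposal is correct and follows essentially the same route as the paper's proof: the same clean event via Lemma~\ref{lem:gauss} and a union bound over rounds, campaigns, and arms with the stated $b_t,b'_t$ (splitting $\delta$ over the three bound families), the same optimism step through the UCB reward vector, the same per-round width $2\sqrt{b_t}\left(\hat{\sigma}_{j,t-1}+\bu_{\max}\hat{s}_{j,t-1}\right)$ for the factorized click estimate, and the same double Cauchy--Schwarz / information-gain / $\hat{\psi}^2$-summation finish. The only cosmetic difference is that you replace the paper's explicit four-case analysis of $\left|\min\{n_j^{\mathsf{sat}},\bu\, e_j^{\mathsf{sat}}\}-\min\{\hat{\mu}_{j,t-1},\bu\,\hat{\eta}_{j,t-1}\}\right|$ (carried out once against the posterior mean and once against the UCB) with the one-line inequality $\min\{a,b\}-\min\{c,d\}\le(a-c)^{+}+(b-d)^{+}$ applied directly to the UCB-versus-truth gap, collapsing the paper's two-step decomposition through $\bm{\mu}_t$ into a single comparison that lands on the same coefficients.
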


\begin{restatable}{thm}{regretfts} \label{thm:regretfts}
	Let us consider an ABBA problem over $T$ rounds where the functions $n_{j}^{\mathsf{sat}}(\bi)$ and $e^{\mathsf{sat}}_{j}(\bi)$ are the realization of GPs.
	Using the \alg{}-\textsf{F-TS} algorithm, for every $\delta \in (0,1)$, the following holds with probability at least $1-\delta$:
	\begin{multline*}
		\mathcal{R}_T(\mathfrak{U}) \leq \Bigg\{ T N \left[ \bar{c}_1 b_T \sum_{j=1}^{N} \gamma_T(n_j) + \bar{c}_2 b_T \sum_{j=1}^{N} \gamma_T(e_j) \right.\\
		\left. + \bar{c}_3 b'_T \left(2 s \bu_{\max} \sqrt{b_T} + 2 \sigma \sqrt{b_T} + n_{\max}^{\mathsf{sat}} \right)^2 \sum_{j=1}^{N} \log \left( \frac{\xi}{\psi^2_j} + T \right) \right]\Bigg\}^{1/2},
	\end{multline*}
	where $b_{t} = 2 \log \left( \frac{\pi^2 N M t^2}{2 \delta} \right)$, $b'_{t} := 2 \log \left( \frac{\pi^2 N t^2}{2 \delta} \right)$, $\bar{c}_1 := \frac{48 v^2_{\max}}{\log \left( 1 + \frac{1}{\lambda} \right)}$, $\bar{c}_2 := \frac{48 v^2_{\max} \bu^2_{\max}}{\log \left( 1 + \frac{1}{\lambda'} \right)}$, and $\bar{c}_3 := 12 \xi$, $\xi$, $\lambda$ and $\lambda'$ are the variance of the value per click, measurement noise on the maximum number of clicks and number of clicks per unit of daily budget, respectively, $v_{\max} := \max_{j \in \{1, \ldots, N \}} v_j$ is the maximum expected value per click, $n_{\max} := \max_{\bi \in X, \bu \in Y, j \in \{1, \ldots, N\}} n_j(\bi, \bu)$ is the maximum expected number of click we might obtain on average over all the campaigns $C_j$, $\bu_{\max} := \max_{\bu \in Y} \bu$ is the maximum budget one can allocate on a campaign, and $\sigma^2 := k(\bi, \bi) \geq \hat{\sigma}^2_{j,t}(\bi)$, $s^2 := k'(\bi, \bi) \geq \hat{s}^2_{j,t}(\bi)$ for each $j$, $t$ and $\bi$.
	
	Equivalently, with probability at least $1-\delta$, it holds:
	\begin{equation*}
		\mathcal{R}_T(\mathfrak{U}) = \tilde{O} \left( \sqrt{T N \sum_{j=1}^N [\gamma_T(n_j) +  \gamma_T(e_j)]} \right).
	\end{equation*}
\end{restatable}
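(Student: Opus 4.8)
The proof follows the template of Theorem~\ref{thm:regretfucb}, replacing the deterministic upper confidence bounds by the random Thompson samples and paying a constant-factor price for the extra randomness; in essence it instantiates, for the factorized model of the ABBA problem, the combinatorial GP-based Thompson-sampling analysis of~\cite{accabi2018cmab}. The first step is to fix a \emph{clean event} $\mathcal{E}$ on which, simultaneously for every round $t \le T$, every campaign $j$, and every bid $\bi \in X$, both the true values $n_j^{\mathsf{sat}}(\bi), e_j^{\mathsf{sat}}(\bi), v_j$ \emph{and} the matching Thompson samples $\theta^{(n)}_{j,t-1}(\bi), \theta^{(e)}_{j,t-1}(\bi), \theta^{(v)}_{j,t-1}$ lie within $\sqrt{b_t}$ (resp.\ $\sqrt{b'_t}$) posterior standard deviations of the posterior means $\hat\mu_{j,t-1}, \hat\eta_{j,t-1}, \hat\nu_{j,t-1}$. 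Since $n_j^{\mathsf{sat}}$ and $e_j^{\mathsf{sat}}$ are assumed to be GP realizations, the relevant posteriors are exactly Gaussian and the samples are drawn from them, so a Gaussian tail bound together with a union bound over the $N$ campaigns, the $M = |X|\,|Y|$ arms, the $T$ rounds, and the two sources of randomness yields $\Pr(\mathcal{E}) \ge 1-\delta$ for the stated $b_t = 2\log(\pi^2 N M t^2/(2\delta))$ and $b'_t = 2\log(\pi^2 N t^2/(2\delta))$, the $\pi^2$ coming from $\sum_{t \ge 1} t^{-2} = \pi^2/6$.

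Next I bound the instantaneous regret on $\mathcal{E}$. Let $\hat{\bm{\mu}}_t$ denote the vector of \emph{sampled} arm rewards obtained by plugging $\theta^{(n)}_{j,t-1}, \theta^{(e)}_{j,t-1}$ into Eq.~\eqref{eq:nmax} and multiplying by $\theta^{(v)}_{j,t-1}$. Using that $r$ is additive over the arms of a superarm, that $S^*$ is feasible, and that $S_t$ solves the sampled knapsack instance (hence $r_{\hat{\bm{\mu}}_t}(S_t) \ge r_{\hat{\bm{\mu}}_t}(S^*)$), I split
\begin{equation*}
r_{\bm{\mu}}^* - r_{\bm{\mu}}(S_t) \le \big( r_{\bm{\mu}}(S^*) - r_{\hat{\bm{\mu}}_t}(S^*) \big) + \big( r_{\hat{\bm{\mu}}_t}(S_t) - r_{\bm{\mu}}(S_t) \big).
\end{equation*}
On $\mathcal{E}$ the second bracket is a sum, over the arms of $S_t$, of sampled-vs-true reward gaps, which I bound by the posterior widths \emph{at the pulled arms} via the Lipschitz estimates $|\min(a,b) - \min(a',b')| \le |a-a'| + |b-b'|$ and $|ab - a'b'| \le |a'|\,|b-b'| + |b|\,|a-a'|$, together with $\hat\sigma_{j,t-1} \le \sigma$, $\hat s_{j,t-1} \le s$, $\bu \le \bu_{\max}$, and the bound $2\sigma\sqrt{b_T} + 2 s\bu_{\max}\sqrt{b_T} + n_{\max}^{\mathsf{sat}}$ on the sampled $\min$-term; this is exactly where the parenthetical $(2 s\bu_{\max}\sqrt{b_T} + 2\sigma\sqrt{b_T} + n_{\max}^{\mathsf{sat}})^2$ in $\bar c_3$ comes from.

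The first bracket is the genuinely Thompson-sampling-specific part, and I expect it to be the main obstacle: unlike the UCB case, where optimism is deterministic and the contribution of $S^*$ vanishes outright, the samples are only ``optimistic enough'' in a probabilistic sense, so the gap at the (never necessarily pulled) superarm $S^*$ must be transferred onto the posterior widths at the pulled superarms while keeping the constant explicit in the combinatorial, knapsack-constrained setting. This step is precisely the one inherited from~\cite{accabi2018cmab}, and the constant factor it costs is the reason $\bar c_1, \bar c_2$ here ($48 v_{\max}^2/\log(1+1/\lambda)$ and $48 v_{\max}^2\bu_{\max}^2/\log(1+1/\lambda')$) are larger than their counterparts ($12 v_{\max}^2/\log(1+1/\lambda)$, $12 v_{\max}^2\bu_{\max}^2/\log(1+1/\lambda')$) in Theorem~\ref{thm:regretfucb}. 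Combining the two brackets gives an instantaneous-regret bound of the form $\sum_j \big[ c\, v_{\max}\sqrt{b_T}\,(\hat\sigma_{j,t-1}(\bi_{j,t}) + \bu_{\max}\hat s_{j,t-1}(\bi_{j,t})) + c'(\cdots)\sqrt{b'_T}\,\hat\psi_{j,t-1} \big]$.

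Finally I sum over $t \in \{1,\dots,T\}$ and apply Cauchy--Schwarz, $\sum_{t=1}^T a_t \le \sqrt{T \sum_{t=1}^T a_t^2}$, to the cumulative regret. The terms $\sum_{t=1}^T \hat\sigma_{j,t-1}^2(\bi_{j,t})$ and $\sum_{t=1}^T \hat s_{j,t-1}^2(\bi_{j,t})$ are bounded by $\frac{2}{\log(1+1/\lambda)}\gamma_T(n_j)$ and $\frac{2}{\log(1+1/\lambda')}\gamma_T(e_j)$ through the information-gain inequality of~\cite{srinivas2010gaussian} (Definitions~\ref{thm:informationgain}--\ref{thm:maxig}), while $\sum_{t=1}^T \hat\psi_{j,t-1}^2 \le \xi\log(\xi/\psi_j^2 + T)$ follows by telescoping the explicit scalar-Gaussian posterior-variance recursion of Section~\ref{sec:updatesubroutine}. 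Substituting these into the squared bound and collecting $\bar c_1, \bar c_2, \bar c_3$ produces the displayed inequality; since $b_T, b'_T$ and (for the squared-exponential kernel) $\gamma_T$ are polylogarithmic in $T$, dropping these factors gives the $\tilde O\big(\sqrt{T N \sum_{j=1}^N[\gamma_T(n_j) + \gamma_T(e_j)]}\big)$ form. A secondary, purely mechanical, nuisance throughout is carrying the per-coordinate confidence errors through the nonlinearity $n_j = \min\{n_j^{\mathsf{sat}}, \bu\, e_j^{\mathsf{sat}}\}$ and the product with $v_j$, which is what makes the $\bar c_3$-term so heavy.
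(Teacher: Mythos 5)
Your skeleton --- the clean event built from Gaussian tails plus a union bound, the decomposition $reg_t \le \big(r_{\bm{\mu}}(S^*) - r_{\hat{\bm{\mu}}_t}(S^*)\big) + \big(r_{\hat{\bm{\mu}}_t}(S_t) - r_{\bm{\mu}}(S_t)\big)$ justified by $r_{\hat{\bm{\mu}}_t}(S_t)\ge r_{\hat{\bm{\mu}}_t}(S^*)$, the case analysis through the $\min\{n^{\mathsf{sat}}_j,\bu\,e^{\mathsf{sat}}_j\}$ nonlinearity, and the Cauchy--Schwarz / information-gain / telescoping finish --- is exactly the paper's. The gap is the step you yourself single out as ``the main obstacle'': the term $r_{\bm{\mu}}(S^*) - r_{\hat{\bm{\mu}}_t}(S^*)$. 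You leave it unresolved and propose to import an anti-concentration / ``optimism transfer onto the pulled arms'' mechanism; the paper uses no such thing, and none is needed given the clean event you already defined. Because the union bound is taken over \emph{all} arms of every campaign, not only the pulled ones, on $\mathcal{E}$ the two-sided concentration $|\theta^{(n)}_{j,t-1}(\bi) - n^{\mathsf{sat}}_j(\bi)|\le \sqrt{b_t}\,\hat\sigma_{j,t-1}(\bi)$ (and its analogues for $e^{\mathsf{sat}}_j$ and $v_j$) holds at the arms of $S^*$ just as at those of $S_t$. The paper therefore bounds $|r_{\bm{\mu}}(S)-r_{\bm{\theta}_t}(S)|$ for a \emph{generic} superarm $S$ by $\sum_j \max_{\bm{a}\in\mathcal{D}}$ of the relevant posterior widths, takes twice that quantity as the bound on $reg_t$, and absorbs $\sum_{t}\max_{\bm{a}}\hat\sigma^2_{j,t-1}(\bm{a})$ into $\gamma_T(n_j)$ exactly as in the UCB proofs. (That last absorption --- replacing variances at pulled arms by round-wise maxima before invoking the information-gain identity --- is the genuinely delicate point of the whole analysis, and it is inherited from Theorems~\ref{thm:regretducb} and~\ref{thm:regretfucb}, not from any TS-specific machinery.)

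Relatedly, your attribution of the jump from $12$ to $48$ in $\bar c_1,\bar c_2$ to an optimism-transfer cost is incorrect. Both the UCB and the TS proofs already pay the factor of two for having two brackets in the decomposition; the additional factor of two (hence of four after squaring) in the TS case comes solely from concentrating the \emph{sample} around the \emph{truth} via the posterior mean, $|\theta - f|\le|\theta-\hat\mu|+|\hat\mu-f|$, which doubles the per-arm deviation to $2\sqrt{b_t}\big(\bu_{\max}\hat s_{j,t-1}(\bi)+\hat\sigma_{j,t-1}(\bi)\big)$ and is also why the proof selects $b_t$ of the form $8\log(\cdot)$ rather than $2\log(\cdot)$ (the theorem statement and the proof in the paper are in fact inconsistent on this constant). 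With the $S^*$ term handled directly on the clean event as above, and the constant bookkeeping corrected, your argument coincides with the paper's proof.
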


The upper bounds provided by Theorems~\ref{thm:regretducb}--\ref{thm:regretfts} are expressed in terms of the maximum information gain $\gamma_T(\cdot)$ one might obtain selecting $T$ samples from the GPs defined in Section~\ref{sec:updatesubroutine}.
The problem of bounding $\gamma_T(f)$ for a generic GP $f$ has been already addressed by~\cite{srinivas2010gaussian}, where the authors present the bounds for the squared exponential kernel $\gamma_T(f) = O((\log{T})^{d+1})$, where $d$ is the dimension of the input space of the GP ($d = 2$ for \alg{}-\textsf{U}, and $d = 1$ for \alg{}-\textsf{F}).
Notice that, thanks to the previous result our \alg{} algorithm suffers from a sublinear pseudo-regret since the terms $\gamma_{T}(n_j)$ and $\gamma_T(e_j)$ are bounded by $O( (\log{T})^{d+1})$, and the bound in Theorems~\ref{thm:regretducb}--\ref{thm:regretfts} is then $O(N \sqrt{T (\log{T})^{d+1})})$.
%
%

\section{Experimental Evaluation}
\label{sec:experiments}

This section is structured as follows.
In Section~\ref{sec:experimentalevaluation}, we experimentally evaluate the convergence to the optimal solution and the empirical regrets of our algorithms in synthetic settings.
In Section~\ref{sec:experimentalevaluationreal}, we present the results of the adoption of our algorithms in a real-world setting.


\subsection{Evaluation in the Synthetic Setting}\label{sec:experimentalevaluation}

We evaluate our algorithms in synthetic settings generated as follows.
In every setting, there is a single advertiser optimizing a set of $N$ campaigns by our algorithms, and, for every campaign $C_j$, there are other $\delta_j-1$ advertisers whose behavior is, instead, stochastic. 
At day $t$, every campaign $C_j$ can be involved in a set $AU_{j,t}$ of auctions, whose number $|AU_{j,t}|$ is drawn from a Gaussian probability distribution $\mathcal{N}(\mu^{(s)}_j, (\sigma^{(s)}_j)^2)$ with mean $\mu^{(s)}_j$ and standard deviation $\sigma^{(s)}_j$ and subsequently rounded to the nearest integer.
We denote, for campaign $C_j$, the click and conversion probabilities of the advertiser using our algorithms with $p^{(cl)}_j$ and $p^{(co)}_j$, respectively.
These probabilities are the same for all the auctions in which $C_j$ is involved. 
We assign a tuple of parameters $\mu^{(b)}, (\sigma^{(b)})^2$ to every other advertiser before the beginning of the experiment, and, at every auction, the bids $b_h$ are drawn from a Gaussian distribution $\mathcal{N}(\mu^{(b)}, (\sigma^{(b)})^2)$, being $\mu^{(b)}$ and $\sigma^{(b)}$ the mean and standard deviation parameters of the bid distribution, respectively.
Similarly, the click probabilities $\rho_h$ are uniformly sampled in the interval $[0, 1]$ at every auction.

The auction mechanism we use is the Vickrey-Clarke-Groves~\cite{mas1995microeconomic} and the number of available slots is  $\gamma_j$, with $\gamma_j \leq \delta_j$.
%
%
%
%
Once the optimal allocation is found, we simulate a user who may or may not click the ad, and generate a click and/or a conversion according to probabilities $p^{(cl)}_{j}$ and $p^{(co)}_{j}$, respectively.
After the click, the daily budget of the advertiser using our algorithms is reduced as prescribed by the Vickrey-Clarke-Groves mechanism.

\begin{table}[t!]
\caption{Parameters of the synthetic settings.} \label{tab:para}
\scriptsize
\centering
\begin{tabu}{|l|[2pt]l|l|l|l|}
\hline
& $C_1$  & $C_2$ & $C_3$ & $C_4$ \\ \tabucline[2pt]{-}
$\mu_j$       & 1000  & 1500 & 1500  & 1250  \\  \hline
$\sigma_j$    & 50    & 50   & 50   & 50 \\ \hline
$\gamma_j$  & 5     & 5    & 5    & 5  \\  \hline
$\delta_j$    & 7     & 7    & 7    & 7 \\  \hline
$\mu^{(b)}$    & 0.5  & 0.33 & 0.4 & 0.39  \\  \hline
$\sigma^{(b)}$ & 0.1  & 0.07 & 0.1  & 0.51   \\  \hline
$p^{(obs)}(1)$              & 0.9   & 0.9  & 0.9  & 0.9  \\  \hline
$p^{(obs)}(2)$              & 0.7   & 0.8  & 0.7  & 0.8   \\  \hline
$p^{(obs)}(3)$              & 0.6   & 0.7  & 0.6  & 0.6  \\  \hline
$p^{(obs)}(4)$              & 0.4   & 0.6  & 0.4  & 0.5   \\  \hline
$p^{(obs)}(5)$              & 0.2   & 0.5  & 0.3  & 0.3   \\  \hline
$p^{(cl)}_j$                     & 0.5   & 0.3  & 0.4  & 0.4  \\  \hline
$p^{(co)}_j$                     & 0.05 & 0.05 & 0.04  & 0.05  \\  \hline
\end{tabu}
\end{table}

\subsubsection{Experiment $\#1$}

This experiment aims at showing that the algorithms, which do not sufficiently explore the space of the arms, may not converge to the (clairvoyant) optimal solution.
We use a setting with $N = 4$ different campaigns.
The parameters describing the setting are provided in Table~\ref{tab:para}.
Furthermore, in Figure~\ref{fig:n_functions}, we report, for each campaign $C_j$, the best instantaneous revenue $v_j \max_x n_j(x, y)$ as the daily budget allocated to the single campaign varies (this is done maximizing the performance over the feasible bid values $\bi \in X$).
The peculiarity of this setting is the similarity of the performance of campaigns $C_1$ and $C_4$.
Indeed, this similarity makes the identification of the optimal solution hard.
We set the following limits for every $t \leq T$ where $T = 200$ days and for every campaign $C_j$: cumulative budget $\overline{\bu}_t = 500$, minimum and maximum bid values $\underline{\bi}_{j,t} = 0$, and $\overline{\bi}_{j,t} = 2$, respectively, minimum and maximum daily budget values $\underline{\bu}_{j,t} = 0$, $\overline{\bu}_{j,t} = 500$, respectively.
Furthermore, we use an evenly spaced discretization of $|X| = 10$ bids and $|Y| = 10$ budgets over the aforementioned intervals.
We assume a uniform distribution of the clicks and conversions over the day (see Section~\ref{sec:method}).

We compare the experimental results of our algorithms (\textsf{\alg-U-UCB}, \textsf{\alg-U-TS} \textsf{\alg-F-UCB}, and \textsf{\alg-F-TS}) to identify the modeling and/or exploration strategies providing the best performance.~\footnote{Notice that a straightforward extension of the algorithm proposed in~\cite{chen2013combinatorial}, \emph{i.e.}, designing a version accounting for Gaussian distribution, would require $100$ days to have a single sample for each different bid/daily budget pair.
Indeed, it would purely explore the space of arms without any form of exploitation for $t \leq 100$.
Therefore, we omit the comparison that would not provide any meaningful insight to the problem.
}
Furthermore, we introduce a baseline represented by the algorithm \textsf{\alg-F-MEAN}, which is a ``pure exploration'' version of \textsf{\alg-F} such that, at every round $t$, the posterior expected value of the number of clicks for each bid/daily budget pair is given in input to the optimization procedure.
In the GPs used by all the algorithms, we adopt a squared exponential kernel, whose hyper-parameters are chosen as prescribed by the GP literature, see~\cite{rasmussen2006gaussian} for details, and we started from an uninformative zero-mean prior.

In this experiment, in addition to the cumulative pseudo-regret $R_t(\mathfrak{U})$, we also evaluate the expected value of the revenue $r_{\bm{\mu}}(S_t)$.
Obviously, in the case of the cumulative pseudo-regret, the performance improves as the cumulative pseudo-regret reduces, and, conversely, in the case of the revenue, the performance improves as it increases.
The experimental results are averaged over $100$ independent executions of the algorithms.

\begin{figure}[th!]
\centering
\includegraphics{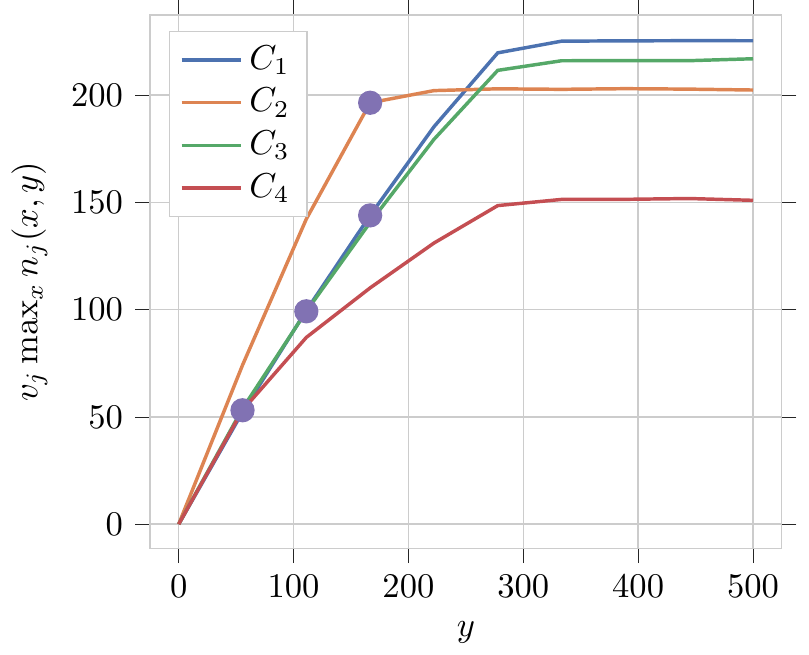}
\caption{Expected value of $v_j \max_x n_j(x, y)$ used in Experiment $\#1$ for each campaign and for each value of the daily budget.
Violet dots corresponds to the expected value of the number of conversions associated to the optimal daily budget allocation.}
\label{fig:n_functions}
\end{figure}

\begin{figure*}[th!]
\subfloat[]{\includegraphics{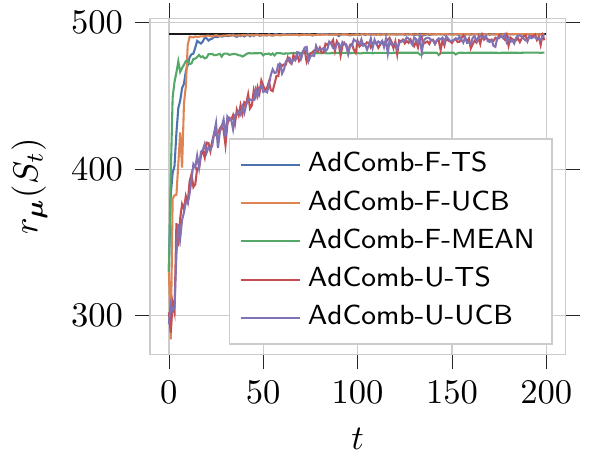}\label{fig:reward}}
\subfloat[]{\includegraphics{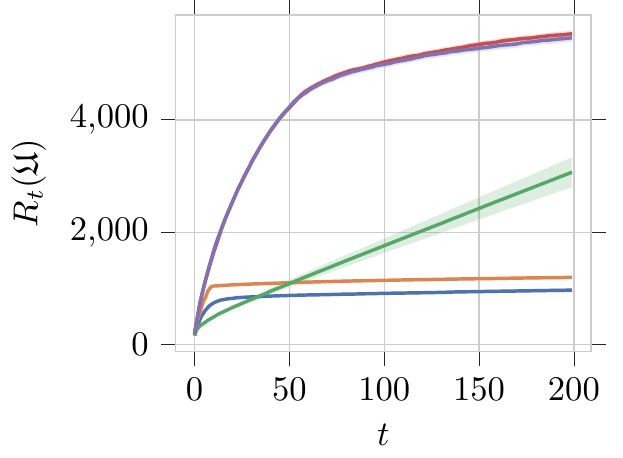}\label{fig:regret}}
\caption{Results for Experiment $\#1$: revenue (a), and cumulative pseudo-regret (b). The black horizontal line in (a) is the optimal reward of the clairvoyant algorithm $r_{\bm{\mu}}^*$. The shaded regions in (b) represent the 95\% confidence intervals of the mean.}
\end{figure*}

In Figure~\ref{fig:reward}, we report the average instantaneous reward $r_{\bm{\mu}}(S_t)$ of our algorithms, while, in Figure~\ref{fig:regret}, we report their average cumulative pseudo-regret $R_t(\mathfrak{U})$.
The reward provided by all the algorithms but \textsf{\alg-F-MEAN} converges to the optimal reward provided by a clairvoyant algorithm and presents a slightly varying reward even at the end of the time horizon due the variance of the GP used to choose the daily budget allocation over time.
This variance is larger at the beginning of the process, thus incentivising exploration, and it decreases as the number of observations increases, allowing the algorithms to reach the (clairvoyant) optimal reward asymptotically.
Although all our algorithms converge to the (clairvoyant) optimal solution, the \textsf{\alg-F-TS} algorithm provides the smallest cumulative pseudo-regret for every $t \geq 30$.
\textsf{\alg-F-UCB} has performance slightly worse than that of \textsf{\alg-F-TS}.
The \textsf{\alg-F-MEAN} algorithm provides the best performance for $t \leq 30$, but it is not capable to achieve the (clairvoyant) optimal solution.
As a result, for larger values of $t$, the performance of \textsf{\alg-F-MEAN} decreases achieving, at $t = 200$, a regret significantly larger than that one provided by \textsf{\alg-F-TS}.
This is because \textsf{\alg-F-MEAN} does not explore the arms space properly and, as a consequence, in some of the $100$ independent runs, it gets stuck in a suboptimal solution of the optimization problem.
Conversely, \textsf{\alg-F-TS} and \textsf{\alg-F-UCB}, thanks to their exploration incentives, converge to the optimal solution asymptotically in all the runs.
We have a similar behavior of the algorithms is a situation in which the performance of the algorithms are rather different and the observations are very noisy. 
Finally, we observe that \textsf{\alg-U-UCB} and \textsf{\alg-U-TS} suffer from a much larger regret than that one of their factorized counterparts---more than $100\%$ at $t = 200$---and this is mainly accumulated over the first half of the time horizon.

In real-world settings, it may be usual dealing with scenarios in which multiple campaigns have similar performance, or they have different performance and the observations are very noisy. 
In those situations, \textsf{\alg-F-MEAN} might get stuck in a suboptimal solution, thus providing a small expected revenue w.r.t.~a clairvoyant algorithm.
Conversely, both the unfactorized and the factorized versions of our algorithms might still be a viable solution since they have proven to converge to the optimum asymptotically.
For this reason, we do not recommend the adoption of the \textsf{\alg-F-MEAN} algorithm in practice and we omit its evaluation in the following experimental activities.

\FloatBarrier

\subsubsection{Experiment $\#2$}

%
This experiment aims at evaluating how the size of the discretization of the bid space (in terms of $|X|$) and daily budget space (in terms of  $|Y|$) used in our algorithms affect their performance.
Indeed, if, on the one hand, an increase in the number of the available bid/daily budget pairs corresponds to an increase of the expected revenue of the clairvoyant solution, on the other hand, a larger arms space results in larger exploration costs. %
We investigate how the impact on the exploration cost is mitigated by the correlation between arms that allows one to gain information over all the arms space once one arm is pulled.

In this experiment, we adopt the same setting used in Experiment $\#1$ (see Table~\ref{tab:para}) with different granularities of discretization in the bid space $X$ or daily budget space $Y$.
In particular, we study two settings over a time horizon of $T = 50$ rounds.
In the first setting, the number of values of the bid space is $|X| \in \{5, 10, 20, 40, 80 \}$, while the number of daily budget values is $|Y| = 10$.
In the second setting, the number of values of the daily budget space is $|Y| \in \{5, 10, 20, 40, 80 \}$, while the number of bid values is $|X| = 10$. 
These discretizations are such that every space with a larger number of values includes the space with a smaller number of values.
For instance, a set with $40$ values strictly includes the one with $20$ values.

We compare \textsf{\alg{}-{U-TS}} and \textsf{\alg{}-{F-TS}} in terms of both cumulative expected revenue (over time) $P_T(\mathfrak{U}) := \sum^{T}_{t = 1} r_{\bm{\mu}}(S_t)$ and the following performance index:
\begin{equation*}
	V(X, Y) = \frac{P_T(\mathfrak{U})}{T \,r_{\bm{\mu}}^*},
\end{equation*}
where both the algorithms selecting $S_t$ and the clairvoyant algorithm selecting $S^*$ (corresponding to $r_{\bm{\mu}}^*$) are run on the space $X \times Y$.~\footnote{
The results for \textsf{\alg{}-{U-UCB}} and \textsf{\alg{}-{F-UCB}} are omitted since they are in line with the ones we present and do not provide any further insight.}
Basically, $V(X,Y) \in [0, 1]$ is a ratio returning, given a space of arms $X \times Y$, the efficiency of a learning algorithm with respect to the optimal solution achievable with that arm space.
The normalization with respect to the optimal solution achievable with a given arm space mitigates the fact that, enlarging the arms space, the optimal revenue may increase.

\begin{figure*}[th!]
\centering
\subfloat[]{\includegraphics{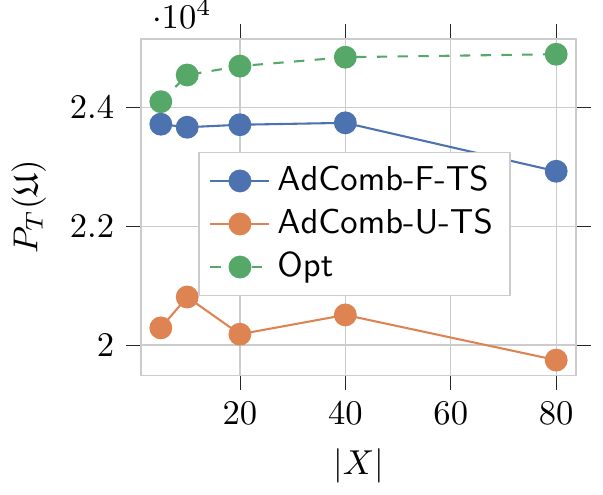}\label{fig:opt_bid_discr}}
\subfloat[]{\includegraphics{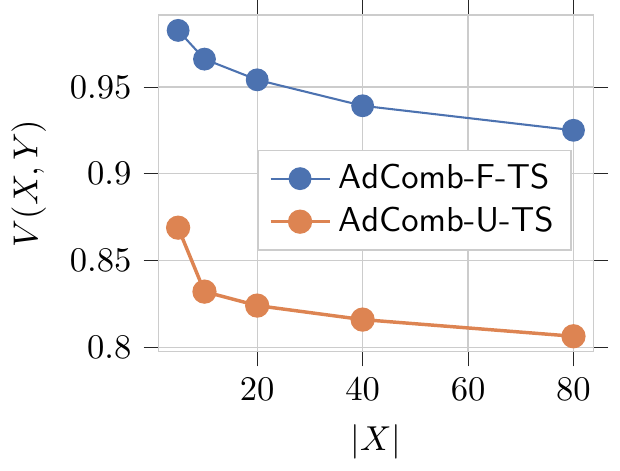}\label{fig:comp_bid_discr}}

\caption{Results for Experiment $\#2$: performance of the \textsf{\alg{}-F-TS} and \textsf{\alg{}-U-TS} algorithms as $|X|$ varies: (a) cumulative expected revenue, (b) $V(X, Y)$ index.}
\end{figure*}

\begin{figure*}[th!]
\centering
\subfloat[]{\includegraphics{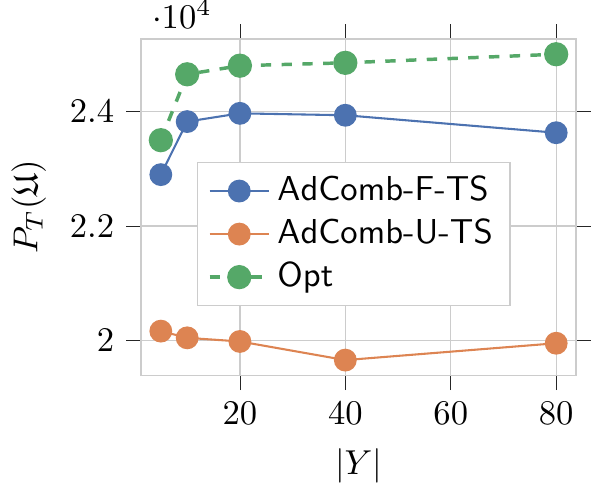}\label{fig:opt_bud_discr}}
\subfloat[]{\includegraphics{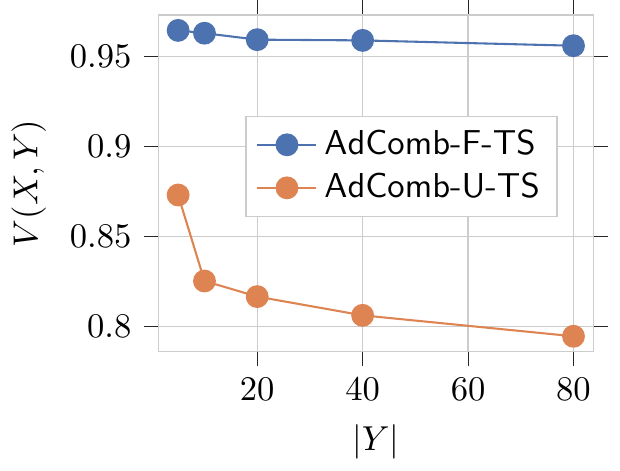}\label{fig:comp_bud_discr}}

\caption{Results for Experiment $\#2$: performance of the \textsf{\alg{}-F-TS} and \textsf{\alg{}-U-TS} algorithms as $|Y|$ varies: (a) cumulative expected revenue, (b) $V(X, Y)$ index.}
\end{figure*}

In Figure~\ref{fig:opt_bid_discr}, we show the value of $P_T(\mathfrak{U})$ for our algorithms and of the optimal solution (denoted with \textsf{Opt}) as the bid space granularity $|X|$ varies.
It is worth noting that the increase in the clairvoyant optimal reward for $|X| \geq 10$ is negligible.
The cumulative revenue provided by the \textsf{\alg{}-{F-TS}} algorithm is decreasing for $|X| > 40$. However, the loss w.r.t.~the revenue gained when $|X| = 80$ is about $3\%$.
The \textsf{\alg{}-{U-TS}} algorithm has a similar behavior. 
This result shows that the cost due to the exploration of a larger space of arms is larger than the increase of the optimal achievable reward, and it suggests, in practice, the adoption of a discretization of the bid space with about $|X| = 40$ evenly-spaced values.
In Figure~\ref{fig:comp_bid_discr}, we show the values of $V(X, Y)$ achieved by the \textsf{\alg{}-{U-TS}} and \textsf{\alg{}-{F-TS}} algorithms.
In this case, for both algorithms, the value of $V(X, Y)$ decreases as $|X|$ increases.
This is because the algorithms pays a larger exploration cost.
However, the empirical increase in inefficiency is only logarithmic in $|X|$.
This result shows that the performance of the algorithms is robust to an increase of the number of possible bids.

In Figure~\ref{fig:opt_bud_discr}, we show the value $P_T(\mathfrak{U})$ of our algorithms and of the optimal solution as $|Y|$ varies.
The results are similar to those obtained above when $|X|$ varies.
The only peculiarity, in this case, concerns the performance of the \textsf{\alg{}-{F-TS}} algorithm.
Despite the theoretical analysis provides a logarithmic dependency of the regret from $|Y|$, the empirical results seem to suggest that the values of $V(X, Y)$ are approximately constant as $|Y|$ varies.
This empirical result does not hold for the \textsf{\alg{}-{U-TS}} algorithm, whose performance are significantly affected by the number of daily budget intervals.
Intuitively, this is because, in the factorized model, the daily budget is not an input to the GPs we use, thus, it does not affect the exploration of the algorithm.
Conversely, in \textsf{\alg{}-{U-TS}}, the daily budget values constitute a part of the input to the GPs, and, therefore, an increase of the number of the daily budget values results in a larger regret since the algorithm needs to explore a wider space of arms.

In conclusion, the use of a more fine-grained bid/daily budget space to explore provides less revenue overall, but with a mild decrease in terms of performance, allowing, in practical cases, to use a large discretization space.
\FloatBarrier

\subsubsection{Experiment $\#3$}

This experiment aims at evaluating the performance of our algorithms with random realistic settings generated by exploiting \emph{Yahoo!} Webscope $A3$ dataset. 
More specifically, we consider $N = 4$ campaigns whose parameters $\mu_j$, $\sigma_j$, $\gamma_j$, $\delta_j$ are those reported in Table~\ref{tab:para}. 
The values of the parameters $\mu^{(b)}$, $\sigma^{(b)}$, $p^{(obs)}(\delta)$, $p^{(cl)}_j$ are, instead, generated according to distributions estimated from the auctions of the \emph{Yahoo!} Webscope $A3$ dataset. 
We set a constant cumulative daily budget $\overline{\bu}_t = 100$ over a time horizon of $T = 100$ days, with limits $\underline{\bu}_{j,t} = 0$, $\overline{\bu}_{j,t} = 100$, $\underline{\bi}_{j,t} = 0$, and $\overline{\bi}_{j,t} = 1$ for every $t \leq T,C_j$.
Furthermore, we use an evenly spaced discretization of $|X| = 10$ values of bid and $|Y| = 10$ values of daily budget.
We generate $10$ different scenarios, for each of them, we run $100$ independent experiments over the same scenarios and averaged over them.
Given a setting and an algorithm, we denote with $\beta$ the percentage of runs in which the given algorithm has the best performance in terms of cumulative reward in the given setting.

In Table~\ref{tab:rand_performance}, we report, for every algorithm and every setting, the average cumulative regret $R_T$, its standard deviation $\sigma_{R_T}$, and $\beta$. 
In almost all the settings, the best algorithm is \textsf{\alg{}-F-TS}. 
Furthermore, \textsf{\alg{}-F-TS} outperforms the other algorithms in more than the $70\%$ of the runs in all settings. 
However, it is worth to note that in settings $1-5-6$, for $t \leq 25$, \textsf{\alg-{F-UCB}} outperforms the other algorithms in more than $11\%, 73\%, 7\%$ of the cases, respectively. 
This provides evidence that only in some specific scenarios \textsf{\alg-{F-UCB}} provides a viable solution to the ads optimization problem.
Conversely, \textsf{\alg-{U-TS}} and \textsf{\alg-{U-UCB}} achieve lower performance than \textsf{\alg-{F-TS}} and \textsf{\alg-{F-UCB}}  algorithms in all runs.
For this reason, in the real-world setting, we adopt the \textsf{\alg{}-F-TS} algorithm.

\begin{table}

	\centering
\renewcommand{\arraystretch}{1}
	\resizebox{\columnwidth}{!}{\begin{tabular}{l|l||c|c|c||c|c|c||c|c|c||c|c|c|}
\multicolumn{2}{c||}{}& \multicolumn{3}{c||}{\textsf{\alg{}-F-TS}} & \multicolumn{3}{c||}{\textsf{\alg{}-F-UCB}} & \multicolumn{3}{c||}{\textsf{\alg{}-U-TS}} & \multicolumn{3}{c|}{\textsf{\alg{}-U-UCB}}  \\ \cline{3-14}
\multicolumn{2}{c||}{}& $R_T$ & $\sigma_{R_T}$ & $\beta$ & $R_T$ & $\sigma_{R_T}$ & $\beta$ & $R_T$ & $\sigma_{R_T}$ & $\beta$ &$R_T$ & $\sigma_{R_T}$ & $\beta$  \\ \hline\hline
Setting 1 & $t=25$ & 73  & 13 & $89 \%$ & 104 & 21 & $11\%$& 287 & 24 &$0\%$ &264 & 23 &$0\%$  \\ \cline{2-14}
          & $t=50$  & 111 & 22 & $97 \%$ & 169 & 24 & $3\%$ & 377 & 28 &$0\%$ &345 & 30 &$0\%$  \\ \cline{2-14}
          & $t=100$ & 170 & 37 & $99 \%$ & 279 & 29 & $1\%$ & 485 & 35 &$0\%$ &444 & 36 &$0\%$  \\ \hline\hline
Setting 2 & $t=25$  & 66  & 16 & $98 \%$ & 125 & 15 & $2\%$ & 255 & 21 &$0\%$ &263 & 21 &$0\%$  \\ \cline{2-14}
          & $t=50$  & 93  & 21 & $ 98\%$ & 155 & 18 & $2\%$ & 327 & 26 &$0\%$ &340 & 24 &$0\%$  \\ \cline{2-14}
          & $t=100$ & 132 & 31 & $92 \%$ & 194 & 22 & $8\%$ & 415 & 28 &$0\%$ &424 & 30 &$0\%$  \\ \hline\hline
Setting 3 & $t=25$  & 75  & 15 & $100\%$ & 132 & 18 & $0\%$ & 319 & 31 &$0\%$ &316 & 30 &$0\%$  \\ \cline{2-14}
          & $t=50$  & 103 & 21 & $100\%$ & 182 & 20 & $0\%$ & 421 & 37 &$0\%$ &397 & 32 &$0\%$  \\ \cline{2-14}
          & $t=100$ & 145 & 32 & $100\%$ & 261 & 24 & $0\%$ & 533 &  42&$0\%$ &488 & 40 &$0\%$  \\ \hline\hline
Setting 4 & $t=25$  & 67  & 15 & $100\%$ & 130 & 29 & $0\%$ & 334 & 27 &$0\%$ &306 & 26 &$0\%$  \\ \cline{2-14}
          & $t=50$  & 103 & 19 & $100\%$ & 196 & 38 & $0\%$ & 414 & 31 &$0\%$ &395 & 29 &$0\%$  \\ \cline{2-14}
          & $t=100$ & 164 & 29 & $100\%$ & 297 & 55 & $0\%$ & 512 & 38 &$0\%$ &499 & 35 &$0\%$  \\ \hline\hline
Setting 5 & $t=25$  & 112 & 18 & $27\% $ & 99  & 12 & $73\%$& 345 & 30 &$0\%$ &321 & 25 &$0\%$  \\ \cline{2-14}
          & $t=50$  & 157 & 22 & $83\% $ & 180 & 13 & $17\%$& 479 & 39 &$0\%$ &457 & 30 &$0\%$  \\ \cline{2-14}
          & $t=100$ & 222 & 24 & $100\%$ & 331 & 14 & $0\%$ & 648 & 62 &$0\%$ &627 & 36 &$0\%$  \\ \hline\hline
Setting 6 & $t=25$  & 100 & 15 & $ 93\%$ & 99  & 9  & $7\%$ & 272 & 20 &$0\%$ &287 & 24 &$0\%$  \\ \cline{2-14}
          & $t=50$  & 140 & 19 & $100\%$ & 180 & 13 & $0\%$ & 370 & 32 &$0\%$ &391 & 28 &$0\%$  \\ \cline{2-14}
          & $t=100$ & 221 & 32 & $100\%$ & 331 & 20 & $0\%$ & 480 & 37 &$0\%$ &507 & 34 &$0\%$  \\ \hline\hline
Setting 7 & $t=25$  & 100 & 15 & $98\% $ & 142 & 14 & $2\%$ & 336 & 24 &$0\%$ &344 & 28 &$0\%$  \\ \cline{2-14}
		  & $t=50$  & 145 & 19 & $94\%$ & 184 & 13 & $6\%$ & 453 & 34 &$0\%$ &456 & 33 &$0\%$  \\ \cline{2-14}
		  & $t=100$ & 220 & 30 & $83\%$ & 250 & 14 & $17\%$ & 595 & 43 &$0\%$ &587 & 37 &$0\%$ \\ \hline\hline
Setting 8 & $t=25$  & 90  & 14 & $ 98\%$ & 144 & 22 & $2\%$ & 296 & 30 &$0\%$ &278 & 24 &$0\%$ \\ \cline{2-14}
		  & $t=50$  & 128 & 17 & $99\%$ & 220 & 21 & $1\%$ & 386 & 30 &$0\%$ &363 & 28 &$0\%$  \\ \cline{2-14}
		  & $t=100$ & 181 & 20 & $100\%$ & 303 & 27 & $0\%$ & 495 & 35 &$0\%$ &466 & 34 &$0\%$ \\ \hline\hline
Setting 9 & $t=25$  & 89  & 17 & $ 90\%$ & 127 & 21 & $10\%$ & 334 & 23 &$0\%$ &329 & 26 &$0\%$  \\ \cline{2-14}
		  & $t=50$  & 119 & 23 & $91\%$ & 162 & 24 & $9\%$ & 417 & 28 &$0\%$ &421 & 34 &$0\%$  \\ \cline{2-14}
		  & $t=100$ & 165 & 33 & $88\%$ & 224 & 32 & $12\%$ & 510 & 36 &$0\%$ &519 & 40 &$0\%$ \\ \hline\hline
Setting 10 & $t=25$  & 91 & 20 & $ 100\%$ & 197  & 10  & $7\%$ & 331 & 25 &$0\%$ &302 & 29 &$0\%$  \\ \cline{2-14}
		  & $t=50$  & 138 & 23 & $100\%$ & 248 & 16 & $0\%$ & 424 & 31 &$0\%$ &402 & 39 &$0\%$  \\ \cline{2-14}
		  & $t=100$ & 214 & 35 & $100\%$ & 333 & 22 & $0\%$ & 537 & 38 &$0\%$ &518 & 44 & $0\%$ \\ \hline\end{tabular}}\caption{Results for Experiment $\#3$: performance of algorithms \textsf{\alg{}-F-TS}, \textsf{\alg{}-F-UCB}, \textsf{\alg{}-U-TS}, \textsf{\alg{}-U-UCB} with 10 different random settings at round $t \in \{25,50,100\}$.}
		  \label{tab:rand_performance}
\end{table}

\subsection{Evaluation in a Real-world Setting}
\label{sec:experimentalevaluationreal}

We adopted the \textsf{AdComB-F-TS} algorithm to advertise in Italy a set of campaigns for a loan product of a large international company.
The goal was the maximization of the number of the leads. 
Due to reasons of industrial secrecy, we cannot disclose the name of the product and the name of the company. 
The campaigns started July $1^{st}$ $2017$ and were active up to December $31^{st}$ $2019$.
In our discussion, we report the results corresponding to the first $365$ days of the experiment, grouped by weeks as the behavior of the users can be slightly different during the days of a single week and some campaigns were active only some specific days of the week.
During these $365$ days, the set of campaigns changed over time, some being added, others being discarded or changed, due to business needs such as, \emph{e.g.}, the creation of new graphical logos, messages, or new user profiles to target.
In particular, the total number of campaigns used is $29$, while, initially, the campaigns were $8$.
The activation/deactivation of the campaigns in time and their actual costs per week are depicted in Figure~\ref{fig:matrix}.
After $365$ days, the previously active campaigns were completely discarded and a new set of campaigns was used.

The campaigns were optimized by human specialists from week 0 to week $5^{\textnormal{th}}$ and by our algorithm from week $6^{\textnormal{th}}$ on. 
In addition, the cumulative daily budget was changed at weeks $6^{\textnormal{th}}$ and $29^{\textnormal{th}}$ due to business reasons of the company, as follows:
\begin{itemize}
\item $200$ Euros per day from week $0^{\textnormal{th}}$ to week $5^{\textnormal{th}}$;
\item $1,100$ Euros per day from week $6^{\textnormal{th}}$ to week $28^{\textnormal{th}}$ (the increase in the daily budget was motivated to obtain more leads);
\item $700$ Euros per day from week $29^{\textnormal{th}}$ on (the decrease in the daily budget was motivated to reduce the cost per lead).
\end{itemize}
The algorithm was implemented in Python $2.7.12$ and executed on Ubuntu $16.04.1$ LTS with an Intel(R) Xeon(R) CPU E$5$-$2620$ v$3$ \@ $2.40$GHz. 
We used a discretization of the bid and daily budget space such that $|X| = 100$ and $|Y| = 500$. 
Furthermore, the estimates are based on the data collected in the last $20$ weeks, thus using a sliding window of $140$ days, to discard observations that were considered excessively old by human specialists.
The algorithm ran at midnight, collecting the observations of the day before, updating the models, computing the next values bid/daily budget to use, and, finally, setting these values on the corresponding platforms.
The total computing time of the algorithm per execution was shorter than $5$ minutes. 

\begin{figure}
	\centering
	\includegraphics[angle=90,width=\textwidth]{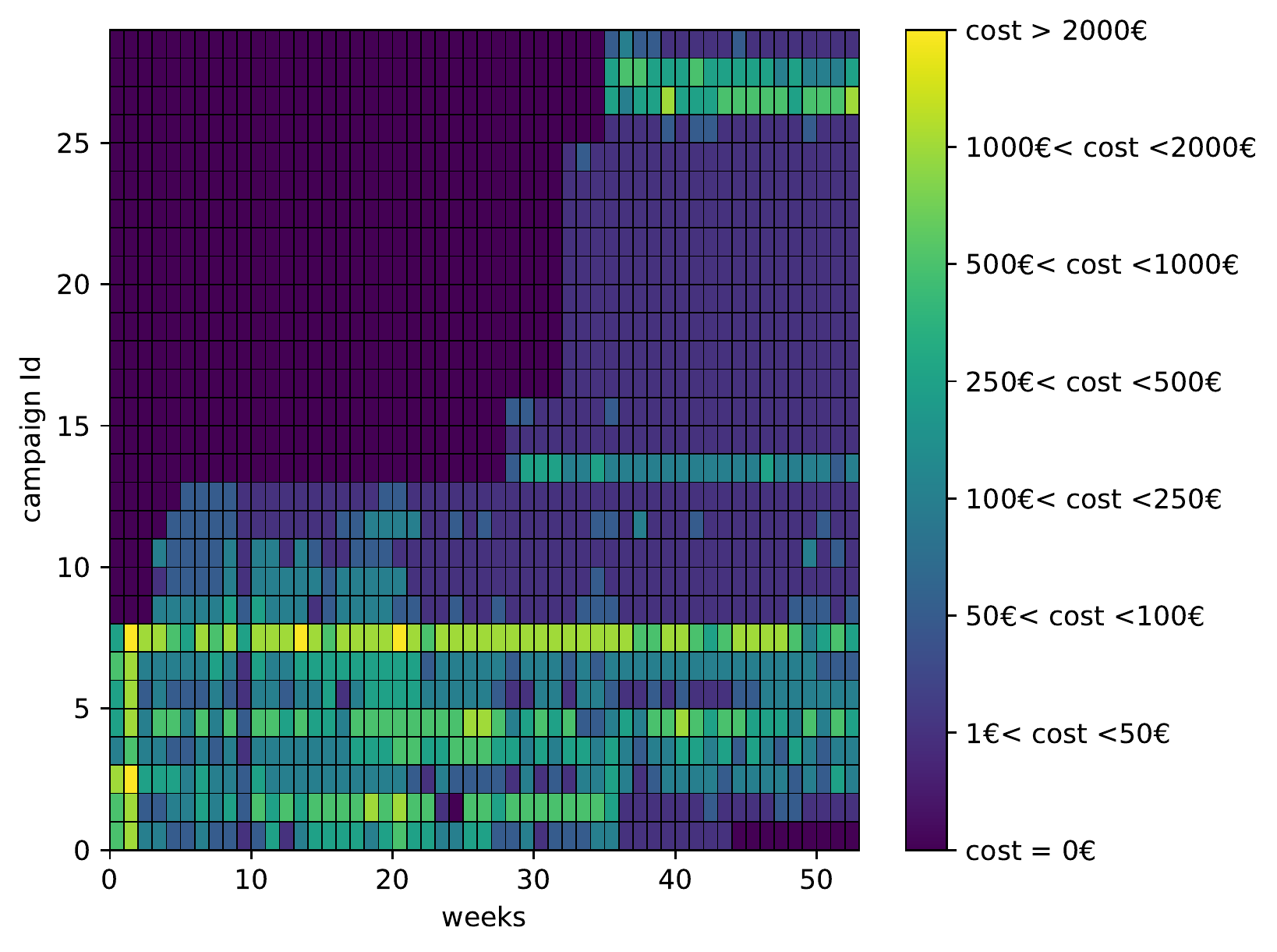}
	\caption{Actual costs for the campaigns. A cost of $0$ Euros means that the campaign is not active.}
	\label{fig:matrix}
\end{figure}

\begin{figure}[t!]
\begin{center}
\includegraphics{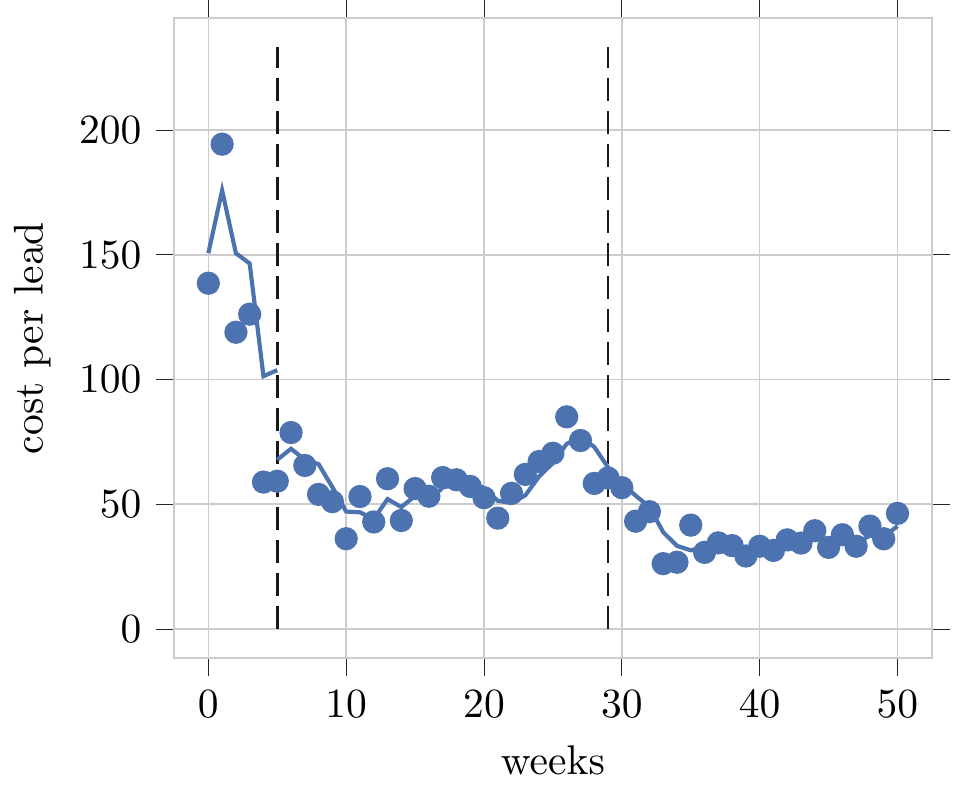}
\end{center}
\caption{Cost per lead across weeks.}
\label{fig:cost_per_conv}
\end{figure}

\begin{figure}[t!]
\begin{center}
\includegraphics{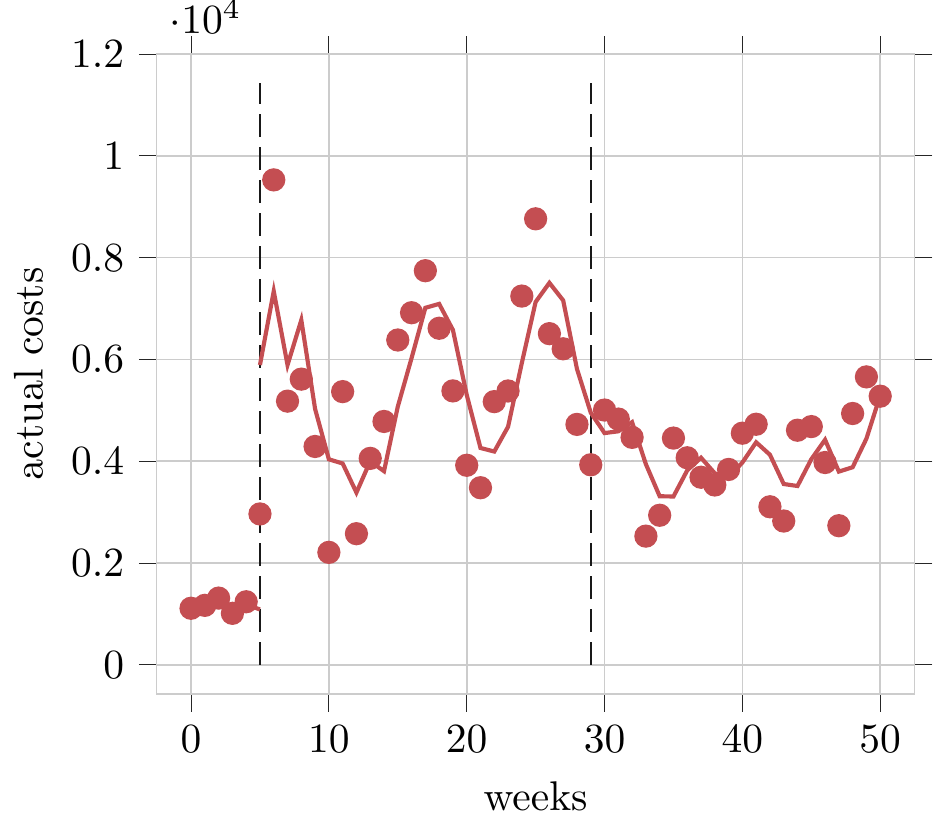}
\end{center}
\caption{Actual costs across weeks.}
\label{fig:costs}
\end{figure}

The cost per lead and the actual costs per week are reported in Figures~\ref{fig:cost_per_conv} and~\ref{fig:costs}, respectively.  
The cost per lead dramatically reduced thanks to the activation of algorithm at the $6^{\textnormal{th}}$ week, from an average of about $120$ Euros per lead to about $65$ Euros per lead.
Furthermore, the algorithm spent about $4$ weeks to reduce the cost per lead from about $65$ Euros (at the $6^{\textnormal{th}}$ week) to about $50$ Euros (at the $10^{\textnormal{th}}$ week).
The algorithm exhibited a rather explorative behavior until the $15^{\textnormal{th}}$ week due to the need to collect samples, whereas, from the $16^{\textnormal{th}}$ week to the $26^{\textnormal{th}}$ week, the algorithm presented a rather stable behavior.
Notably, even if the algorithm was rather explorative up to the $15^{\textnormal{th}}$ week, the performance in these weeks was evaluated rather stable by the human specialists.
The oscillations from the $15^{\textnormal{th}}$ week to the $28^{\textnormal{th}}$ week were due to seasonability effects and the campaigns of the competitors. 
More precisely, the human specialists confirmed that from the $12^{\textnormal{th}}$ week (beginning of September, corresponding to the conclusion of the summer holidays in Italy) to the $18^{\textnormal{th}}$ week (middle of October), the users are usually less interested to make loans. 
This behavior leaded to an increase of the cost per lead.
A similar effect was observed from the $22^{\text{th}}$ week to the $27^{\textnormal{th}}$ week (corresponding to the period from the beginning of December to the middle of January).
At the $29^{\textnormal{th}}$ week, the reduction of the daily budget pushed the algorithms to explore better the values of the functions to learn for smaller values of the daily budget.
This task required $4$ weeks, in which the cost per lead reduced from about $65$ Euros to about $35$ Euros.

The actual cost per week, differently from the cost per lead, was subject to prominent oscillations.
These oscillations were due to seasonability and the possibility of overspending on the platform (\emph{i.e.}, even if the platforms allow the introduction of daily budget constraints, these constraints can be violated by the platforms).  
The evaluation of the performance of our algorithm was very positive for the company, as to motivate its adoption for other sets of campaigns. 

\section{Conclusions and Future Works}
\label{sec:conclusions}

In the current paper, we present the \textsf{AdComB} algorithm, a method capable of choosing automatically the values of the bid and the daily budget of a set of Internet advertising campaigns to maximize, in an online fashion, the revenue under a budget constraint.
The algorithm exploits Gaussian Processes to estimate the campaigns performance, combinatorial bandit techniques to address the exploration/exploitation dilemma in the bid/daily budget choice, and dynamic programming techniques to solve the allocation optimization problem.
We propose four flavors of our algorithms: \textsf{AdComB-U-UCB} which uses an unfactorized model for the bid/daily budget space and confidence bounds for exploration, \textsf{AdComB-U-TS} which uses an unfactorized model and sampling as exploration strategy, \textsf{AdComB-F-UCB} which uses a factorized model and upper confidence bounds, and \textsf{AdComB-F-TS} which uses a factorized model and sampling for exploration. 
We theoretically analyze our algorithms and we provide high probability bounds on the regret of $\tilde{\mathcal{O}}(\sqrt{T})$, where $T$ is the time horizon of the learning process.
Our experimental results, on both synthetic settings and real-world settings, show that our algorithms tackle the problem properly, outperforming other naive algorithms based on existing solutions and the human expert, respectively.

As future work, we plan to study an algorithm for the adaptive discretization of the bid and budget space, depending on the complexity of the setting and the time horizon $T$.
Furthermore, while in the present work we assume that the environment, including the users and the other advertisers, is stationary over time, we will investigate non-stationary environments, \emph{e.g.}, including in the model the seasonality over different time scales of the user behaviour, as well as sudden changes due the modification of the competitors marketing policy.

\section*{Acknowledgments}

We would like to thank \emph{Mediamatic}, part of the Marketing Multimedia group, for supporting this research.

\clearpage
\bibliography{references}
\clearpage
\appendix
\section{Proofs}
\label{sex::appendix}

\begin{restatable}[From~\citep{srinivas2010gaussian}]{lem}{gauss} \label{lem:gauss}
	Given the realization of a GP $f(\cdot)$, the estimates of the mean $\hat{\mu}_{t-1}(x)$ and variance $\hat{\sigma}^2_{t-1}(x)$ for the input $x$ belonging to the input space $X$, for each $b \in \mathbb{R}^+$ the following condition holds:
	$$\mathbb{P} \left( |f(x) - \hat{\mu}_{t-1}(x)| \geq \sqrt{b} \ \hat{\sigma}_{t-1}(x) \right) \leq e^{-\frac{b}{2}},$$
	for each $x \in X$.
\end{restatable}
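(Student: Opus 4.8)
The plan is to reduce the claim to a one‑dimensional Gaussian tail inequality by exploiting the defining property of GP regression. First I would recall the standard GP posterior formula (see~\cite{rasmussen2006gaussian}): conditioned on the vector of input locations queried up to round $t-1$ and on the corresponding noisy observations, the posterior law of $f(x)$ is exactly $\mathcal{N}(\hat{\mu}_{t-1}(x), \hat{\sigma}^2_{t-1}(x))$. Crucially, as is visible already from the expression for $\hat{\sigma}^2_{t-1}$ given in Section~\ref{sec:updatesubroutine}, the posterior variance depends only on the queried locations and not on the realized observations, so $\hat{\sigma}_{t-1}(x)$ is measurable with respect to (indeed deterministic given) the history $\mathcal{F}_{t-1}$ of queried locations. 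Consequently, conditionally on $\mathcal{F}_{t-1}$, the rescaled quantity $Z := (f(x) - \hat{\mu}_{t-1}(x))/\hat{\sigma}_{t-1}(x)$ is a standard normal random variable (the degenerate case $\hat{\sigma}_{t-1}(x)=0$ is trivial, as then $f(x)=\hat{\mu}_{t-1}(x)$ almost surely and the event in question has probability $0$).

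Second I would invoke the sharp sub‑Gaussian tail bound: for $Z\sim\mathcal{N}(0,1)$ and every $r\geq 0$ one has $\mathbb{P}(Z\geq r)\leq \tfrac12 e^{-r^2/2}$, hence $\mathbb{P}(|Z|\geq r)\leq e^{-r^2/2}$. I would establish the one‑sided bound by a short monotonicity argument — the map $r\mapsto \tfrac12 e^{-r^2/2}-\mathbb{P}(Z\geq r)$ vanishes at $r=0$ and as $r\to\infty$, its derivative changes sign exactly once on $[0,\infty)$, so it is unimodal there and therefore nonnegative — or simply cite it as a classical fact. Taking $r=\sqrt{b}$ yields $\mathbb{P}(|Z|\geq \sqrt{b}\mid \mathcal{F}_{t-1})\leq e^{-b/2}$ for every $b\in\mathbb{R}^+$.

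Finally I would remove the conditioning. On the event $\{\hat{\sigma}_{t-1}(x)>0\}$ the event $\{|f(x)-\hat{\mu}_{t-1}(x)|\geq \sqrt{b}\,\hat{\sigma}_{t-1}(x)\}$ coincides with $\{|Z|\geq \sqrt{b}\}$, so by the tower property $\mathbb{P}\big(|f(x)-\hat{\mu}_{t-1}(x)|\geq \sqrt{b}\,\hat{\sigma}_{t-1}(x)\big)=\mathbb{E}\big[\mathbb{P}(|Z|\geq \sqrt{b}\mid \mathcal{F}_{t-1})\big]\leq e^{-b/2}$, which is exactly the stated inequality, valid for each fixed $x\in X$.

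I do not expect a genuine obstacle, since this lemma is classical and is imported from~\cite{srinivas2010gaussian}; the only points needing care are (i) using the sharp factor‑$\tfrac12$ one‑sided Gaussian bound rather than the crude Chernoff bound $\mathbb{P}(|Z|\geq r)\leq 2e^{-r^2/2}$, which would be vacuous for small $b$ and hence too weak to give $e^{-b/2}$, and (ii) noting explicitly that $\hat{\sigma}_{t-1}(x)$ is $\mathcal{F}_{t-1}$‑measurable so that the conditioning step is legitimate.
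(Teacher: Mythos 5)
Your proposal is correct and follows essentially the same route as the paper: both reduce the claim to the standard Gaussian tail bound $\mathbb{P}(|Z|\geq r)\leq e^{-r^2/2}$ for $Z\sim\mathcal{N}(0,1)$ and then apply it to the standardized variable $(f(x)-\hat{\mu}_{t-1}(x))/\hat{\sigma}_{t-1}(x)$, whose conditional law is standard normal by the GP posterior formula. The only difference is in how the elementary one-sided bound $\mathbb{P}(Z\geq r)\leq \tfrac12 e^{-r^2/2}$ is verified --- you propose a unimodality argument for $r\mapsto \tfrac12 e^{-r^2/2}-\mathbb{P}(Z\geq r)$ (which checks out), while the paper uses the exponential-tilting computation $\int_c^\infty e^{-r^2/2}\,dr = e^{-c^2/2}\int_c^\infty e^{-(r-c)^2/2-c(r-c)}\,dr$ --- and your added care about $\mathcal{F}_{t-1}$-measurability of $\hat{\sigma}_{t-1}(x)$ and the degenerate case $\hat{\sigma}_{t-1}(x)=0$ is a harmless refinement the paper leaves implicit.
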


\begin{proof}
Be $r \sim \mathcal{N}(0,1)$ and $c \in \mathbb{R}^+$, we have:
\begin{equation*}
\mathbb{P} [ r > c ] = \frac{1}{\sqrt{2 \pi}} e^{-\frac{c^2}{2}} \int_{c}^{\infty} e^{-\frac{(r - c)^{2}}{2} - c(r-c)} \mathop{dr} \leq e^{- \frac{c^2}{2}} \mathbb{P} [ r > 0 ] = \frac{1}{2}e^{- \frac{c^2}{2}},
\end{equation*}
since $e^{-c (r-c) } \leq 1$ for $r \geq c$.
For the symmetry of the Gaussian distribution, we have:
\begin{equation*}
\mathbb{P} [ |r| > c ] \leq e^{- \frac{c^2}{2}}.
\end{equation*}
Applying the above result to $r = \frac{f(x) - \hat{\mu}_{t-1}(x)}{\hat{\sigma}_{t-1}(x)}$ and $c = \sqrt{b}$ concludes the proof.
\end{proof}

\regretducb*

\begin{proof}
In \alg{}-\textsf{U-UCB}, we assume the number of clicks $n_j(\bi, \bu) = n_j(\bm{a})$ of a campaign $C_j$ be the realization of a GP over the space $\mathcal{D}$ of the bid/daily budget pairs $\bm{a} = (\bi, \bu)$.
Using the selected input $\bm{a}_{j, h}$ and the corresponding observations $\tilde{n}_{j, h} = \tilde{n}_{j, h}(\bm{a}_{j, h})$ for each $h \in \{1 ,\ldots, t-1\}$, the GP provides the estimates of the mean $\hat{\mu}_{j,t-1}(\bm{a})$ and variance $\hat{\sigma}^2_{j,t-1}(\bm{a})$ for each $\bm{a} \in \mathcal{D}$.
The sampling phase is based on the upper bounds on the number of clicks and on the value per click, formally:
\begin{align}
	& u^{(n)}_{j,t-1}(\mathbf{a}) := \hat{\mu}_{j,t-1}(\mathbf{a}) + \sqrt{b_t} \ \hat{\sigma}_{j,t-1}(\mathbf{a}), \label{eq:upperclicku}\\
	& u^{(v)}_{j.t-1} := \hat{\nu}_{j, t-1} + \sqrt{b'_t} \ \hat{\psi}_{j,t-1} \label{eq:uppervalueu}.
\end{align}

Applying Lemma~\ref{lem:gauss} to Equation~\eqref{eq:upperclicku} for a generic arm $\mathbf{a}$ and $b = b_{t}$ we have:
\begin{equation*}
	\mathbb{P} \left[ |n_j(\bm{a}) - \mu_{j, t-1}(\bm{a})| > \sqrt{b_{t}} \hat{\sigma}_{j, t-1}(\bm{a}) \right] \leq e^{- \frac{b_{t}}{2}}.
\end{equation*}

In the execution of the \alg{}-\textsf{U-UCB} algorithm, after $t-1$ rounds, each arm can be chosen a number of times from $0$ to $t-1$.
Applying the union bound over the rounds ($t \in \{1, \ldots, T\}$), the campaigns ($j \in \{1, \ldots, N\}$) and the available arms in each campaign $\mathcal{D}$ ($\bm{a} \in \mathcal{D}$), and exploiting Lemma~\eqref{lem:gauss}, we obtain:
\begin{align*}
	&\mathbb{P} \left[ \bigcup_{t \in \{1, \ldots, T\} } \bigcup_{j \in \{1, \ldots, N\} } \bigcup_{\bm{a} \in \mathcal{D}} \left( | n_j(\bm{a}) - \hat{\mu}_{j,t-1}(\bm{a})| > \sqrt{b_{t}} \ \hat{\sigma}_{j,t-1}(\bm{a}) \right) \right]\\
	& \leq \sum_{t = 1}^T \sum_{j = 1}^N M e^{- \frac{b_{t}}{2}}.
\end{align*}
Thus, choosing $b_{t} = 2 \log{ \left( \frac{\pi^2 N M t^2}{3 \delta} \right)}$, we obtain:
\begin{align*}
	& \sum_{t = 1}^T \sum_{j = 1}^N M e^{- \frac{b_{t}}{2}} = \sum_{j = 1}^N \sum_{t=1}^{T} M \frac{3 \delta}{\pi^2 N M t^2} \leq \frac{\delta}{2} \frac{1}{N} \sum_{j = 1}^N \left(\frac{6}{\pi^2} \sum_{t=1}^\infty \frac{1}{t^2}\right) = \frac{\delta}{2}.
\end{align*}

Similarly, using Lemma~\ref{lem:gauss}, we have:
\begin{equation*}
	\mathbb{P} \left[ |v_j - \hat{\nu}_{j, t-1}| > \sqrt{b'_{t}} \ \hat{\psi}_{j,t-1} \right] \leq e^{- \frac{b'_{t}}{2} },
\end{equation*}
which holds for each $j \in \{1, \ldots, N\}$.
Choosing $b'_{t} = 2 \log{ \left( \frac{\pi^2 N t^2}{3 \delta} \right)}$ and applying an union bound we have:
\begin{align*}
	& \mathbb{P} \left[ \bigcup_{t \in \{1, \ldots, T\} } \bigcup_{j \in \{1, \ldots, N\} } \left( | v_j - \hat{\nu}_{j,t-1} | > \sqrt{b'_{t}} \ \hat{\psi}_{j,t-1} \right) \right]\\
	& \leq \sum_{j = 1}^N \sum_{t=1}^{T} e^{- \frac{b_{t}}{2}} = \sum_{j = 1}^N \sum_{t=1}^{T} \frac{3\delta}{\pi^2 N t^2} \leq \frac{\delta}{2}.
\end{align*}
Therefore, the event that at least one of the upper bounds over the number of clicks and the value per click does not hold has probability less than $\delta$.

Assume to be in the event that all the previous bounds hold.
The instantaneous pseudo-regret $reg_t$ at round $t$ satisfies the following inequality:
\begin{align*}
	reg_t &= r^*_{\bm{\mu}} - r_{\bm{\mu}}(S_t) \leq  r^*_{\bm{\mu}} - r_{\bar{\bm{\mu}}_{t}}(S_t) + r_{\bar{\bm{\mu}}_{t}}(S_t) - r_{\bm{\mu}}(S_t),
\end{align*}
where $\bar{\bm{\mu}}_t :=( u^{(v)}_{1, t-1} u^{(n)}_{1,t-1}(\mathbf{a}_1), \ldots, u^{(v)}_{N, t-1} u^{(n)}_{N,t-1}(\mathbf{a}_{M}))$ is the vector composed of all the upper bounds of the different arms (of dimension $N M$).
Let us recall that, given a generic superarm $S$, if all the elements of a vector $\bm{\mu}$ are larger than the ones of $\bm{\mu}'$ the following holds:
\begin{equation*}
	r_{\bm{\mu}}(S) \geq r_{\bm{\mu'}}(S).
\end{equation*}
Let us focus on the term $r_{\bar{\bm{\mu}}_t}(S_t)$. The following inequality holds:
\begin{align}
	r_{\bar{\bm{\mu}}_t}(S_t) & \geq \, r^*_{\bar{\bm{\mu}}_t} \geq  \,r_{\bar{\bm{\mu}}_t}(S^{*}_{\bm{\mu}}) \geq \, r_{\bm{\mu}}(S^{*}_{\bm{\mu}}) = \, r^*_{\bm{\mu}}, \label{eq:monoapprox}
\end{align}
where $S^*_{\bm{\mu}} \in \arg \max_{S \in \mathcal{S}}(r_{\bm{\mu}}(S))$ is the super-arm providing the optimum expected reward when the expected rewards are $\bm{\mu}$.
Thus, we have:
\begin{align*}
	reg_t & \leq r_{\bar{\bm{\mu}}_t}(S_t) - r_{\bm{\mu}}(S_t)\\
	& \leq r_{\bar{\bm{\mu}}_t}(S_t) - r_{\bm{\mu}_t}(S_t) + r_{\bm{\mu}_t}(S_t) - r_{\bm{\mu}}(S_t),
\end{align*}
where $\bm{\mu}_t := ( \hat{\eta}_{1, t-1} \hat{\mu}_{1,t-1}(\mathbf{a}_1), \ldots, \hat{\eta}_{N, t-1} \hat{\mu}_{N,t-1}(\mathbf{a}_{M}))$ is the vector composed of the estimated average payoffs for each arm $\bm{a} \in \mathcal{D}$.

A bound the terms $(r_{\bar{\bm{\mu}}_t}(S_t) - r_{\bm{\mu}_t}(S_t))$ is provided by the definition of the upper confidence bounds:
\begin{align*}
&r_{\bar{\bm{\mu}}_t} (S_t) - r_{\bm{\mu}_t}(S_t) = \sum_{j=1}^{N} \left[ u^{(v)}_{j,t-1} u^{(n)}_{j,t-1}(\mathbf{a}_{j,t}) - \hat{\nu}_{j,t-1} \hat{\mu}_{j,t-1}(\mathbf{a}_{j,t})\right]\\
&= \sum_{j=1}^{N} \left[ \hat{\nu}_{j,t-1} \sqrt{b_{t}} \ \hat{\sigma}_{j,t-1}(\mathbf{a}_{j,t}) + \hat{\mu}_{j,t-1}(\mathbf{a}_{j,t}) \sqrt{b'_{t}} \ \hat{\psi}_{j,t-1} + \sqrt{b_{t}} \ \hat{\sigma}_{j,t-1}(\mathbf{a}_{j,t}) \sqrt{b'_{t}} \ \hat{\psi}_{j,t-1} \right]\\
& \leq \sum_{j=1}^{N} \bigg\{ \left[ v_j + \sqrt{b'_{t}} \ \hat{\psi}_{j,t-1} \right] \sqrt{b_{t}} \ \hat{\sigma}_{j,t-1}(\mathbf{a}_{j,t}) \\
& + \left[n_j(\mathbf{a}_{j,t}) + \sqrt{b_{t}} \ \hat{\sigma}_{j,t-1}(\mathbf{a}_{j,t}) \right] \sqrt{b'_{t}} \ \hat{\psi}_{j,t-1} + \sqrt{b_{t}} \ \hat{\sigma}_{j,t-1}(\mathbf{a}_{j,t}) \sqrt{b'_{t}} \ \hat{\phi}_{j,t-1} \bigg\}\\
& \leq \sum_{j = 1}^N \bigg[ v_{\max} \sqrt{b_{t}} \max_{\bm{a} \in \mathcal{D}} \hat{\sigma}_{j,t-1}(\bm{a}) + n_{\max} \sqrt{b'_{t}} \ \hat{\psi}_{j,t-1} + 3 \sqrt{b_{t} b'_{t}} \ \hat{\psi}_{j,t-1} \max_{\bm{a} \in \mathcal{D}} \hat{\sigma}_{j,t-1}(\bm{a}) \bigg]\\
& \leq v_{\max} \sqrt{b_t} \sum_{j = 1}^N \max_{\bm{a} \in \mathcal{D}} \hat{\sigma}_{j,t-1}(\bm{a}) + n_{\max} \sqrt{b_t} \sum_{j = 1}^N \hat{\psi}_{j,t-1} + 3 \sqrt{b_t b'_t} \sum_{j = 1}^N \hat{\psi}_{j,t-1} \max_{\bm{a} \in \mathcal{D}} \hat{\sigma}_{j,t-1}(\bm{a})\\
& \leq v_{\max} \sqrt{b_t} \sum_{j = 1}^N \max_{\bm{a} \in \mathcal{D}} \hat{\sigma}_{j,t-1}(\bm{a}) + (n_{\max} \sqrt{b_t} + 3 \sqrt{b_t b'_t} \sigma) \sum_{j = 1}^N \hat{\psi}_{j,t-1},
\end{align*}
where $\mathbf{a}_{j,t}$ is the arm chosen for campaign $C_j$ in the superarm $S_t$, $v_{\max} := \max_{j \in \{1, \ldots, N \}} v_j$ is the maximum expected value per click, $n_{\max} := \max_{j, \mathbf{a}} n_j(\mathbf{a})$ is the maximum expected number of clicks for any campaign.
In the above derivation we used that $\sigma^2_{j,t}(\bm{a}) \leq k(\bm{a}, \bm{a}) =: \sigma^2$ for each $j$, $t$ and $\bm{a}$.


Let us focus on the term  $(r_{\bm{\mu}_t}(S_t) - r_{\bm{\mu}}(S_t))$:
\begin{align*}
&r_{\bm{\mu}_t}(S_t) - r_{\bm{\mu}}(S_t) = \sum_{j=1}^{N} \left[ \hat{\nu}_{j,t-1} \hat{\mu}_{j,t-1}(\mathbf{a}_{j,t}) - v_j n_j(\mathbf{a}_{j,t}) \right]\\
& = \sum_{j=1}^{N} \left[ \hat{\nu}_{j,t-1} \hat{\mu}_{j,t-1}(\mathbf{a}_{j,t}) - \hat{\nu}_{j,t-1} n_j(\mathbf{a}_{j,t}) + \hat{\nu}_{j,t-1} n_j(\mathbf{a}_{j,t}) - v_j n_j(\mathbf{a}_{j,t}) \right]\\
& \leq \sum_{j=1}^{N} \left[ (v_j + \sqrt{b'_{t}} \ \hat{\psi}_{j,t-1}) (\hat{\mu}_{j,t-1}(\mathbf{a}_{j,t}) - n_j(\mathbf{a}_{j,t})) + n_j(\mathbf{a}_{j,t}) (\hat{\nu}_{j,t-1} - v_j ) \right]\\
& \leq \sum_{j=1}^{N} (v_{\max} + \sqrt{b'_t} \hat{\psi}_{j,t-1}) \sqrt{b_t} \max_{\bm{a} \in \mathcal{D}} \hat{\sigma}_{j,t-1}(\bm{a}) + n_{\max} \sqrt{b'_t} \sum_{j=1}^{N} \hat{\psi}_{j,t-1}\\
& \leq v_{\max} \sqrt{b_t} \sum_{j=1}^{N} \max_{\bm{a} \in \mathcal{D}} \hat{\sigma}_{j,t-1}(\bm{a}) + (n_{\max} \sqrt{b_t} + \sqrt{b_t b'_t} \sigma) \sum_{j=1}^{N} \hat{\psi}_{j,t-1},
\end{align*}
where we used arguments similar to the ones considered in the previous derivation.

Overall, summing up the two terms, we have:
\begin{align*}
& reg_t  \leq  v_{\max} \sqrt{b_t} \sum_{j = 1}^N \max_{\bm{a} \in \mathcal{D}} \hat{\sigma}_{j,t-1}(\bm{a}) + (n_{\max} \sqrt{b_t} + 3 \sqrt{b_t b'_t} \sigma) \sum_{j = 1}^N \hat{\psi}_{j,t-1}\\
& + v_{\max} \sqrt{b_t} \sum_{j=1}^{N} \max_{\bm{a} \in \mathcal{D}} \hat{\sigma}_{j,t-1}(\bm{a}) + (n_{\max} \sqrt{b_t} + \sqrt{b_t b'_t} \sigma) \sum_{j=1}^{N} \hat{\psi}_{j,t-1}\\
& = 2 \sqrt{b_t} \left[ v_{\max} \sum_{j = 1}^N \max_{\bm{a} \in \mathcal{D}} \hat{\sigma}_{j,t-1}(\bm{a}) + (n_{\max} + 2 \sqrt{b'_t} \sigma) \sum_{j = 1}^N \hat{\psi}_{j,t-1} \right].
\end{align*}

We need now to upper bound $\hat{\sigma}_{i,t-1}(\bm{a})$ and $\hat{\psi}_{j,t-1}$.
Recall that, thanks to Lemma~$5.3$ in~\cite{srinivas2010gaussian}, under the Gaussian assumption we can express the information gain provided by the observations $\bm{n}_{t-1} = (\tilde{n}_{j,1}, \ldots, \tilde{n}_{j,t-1})$ corresponding to the sequence of arms $(\bm{a}_{j,1}, \ldots, \bm{a}_{j,t-1})$ as:
\begin{equation*}
IG(\bm{n}_{t-1} \,|\, n_j) = \frac{1}{2} \sum_{h = 1}^{t-1} \log \left( 1 + \frac{\hat{\sigma}^{2}_{j,h}(\bm{a}_{j,h})}{\lambda} \right).
\end{equation*}
Since $b_{h}$ is non-decreasing in $h$, we can write:
\begin{align}
\sigma^{2}_{j, h}(\bm{a}_{j,h}) & = \lambda \left[ \frac{\hat{\sigma}^{2}_{j,h}(\bm{a}_{j,h})}{\lambda} \right] \leq \frac{\log \left( {1 + \frac{\hat{\sigma}^{2}_{j,h}(\bm{a}_{j,h})}{\lambda}} \right)} {\log \left( 1 + \frac{1}{\lambda} \right) }, \label{eq:boundvar}
\end{align}
since $s^{2} \leq \frac{\log{(1 + s^{2})}}{\lambda \log \left( 1 + \frac{1}{\lambda} \right)}$ for all $s \in [0, \frac{1}{\lambda}]$, and $\frac{\hat{\sigma}^{2}_{j,h}(\bm{a}_{j,h})}{\lambda} \leq \frac{k(\bm{a}_{j,h}, \bm{a}_{j,h})}{\lambda} \leq \frac{1}{\lambda}$.

Using the definition of $\hat{\psi}_{j,t-1}$ we have:
\begin{align*}
	\sum_{h=1}^{t-1} \hat{\psi}^2_{j,t-1} = \sum_{h=1}^{t-1} \frac{\psi_j^2 \xi}{\xi + (h-1)\psi_j^2} \leq \xi \log \left( \frac{\xi}{\psi_j^2} + t \right).
\end{align*}

Since Equation~\eqref{eq:boundvar} holds for any $\bm{a} \in \mathcal{D}$, then it also holds for the arm $\bm{a}_{\max}$ maximizing the variance $\sigma^{2}_{j,h}(\bm{a}_{j,h})$ in $n_j$ defined over $\mathcal{D}$.
Thus, using the Cauchy-Schwarz inequality, we obtain:

\begin{align*}
&\mathcal{R}_T^{2}(\mathfrak{U}) \leq T \sum_{t=1}^{T} reg^{2}_t \\
& \leq 4 T b_T \sum_{t=1}^{T} \left[ 2 v_{\max}^2 \left( \sum_{j = 1}^N \max_{\bm{a} \in \mathcal{D}_j} \sigma_{j,t-1}(\bm{a}) \right)^2 + 2 ( n_{\max} + 2 \sqrt{b'_t}\sigma)^2 \left( \sum_{j = 1}^N \hat{\psi}_{j,t-1} \right)^2 \right]\\
& \leq 8 T b_T \Bigg\{ \sum_{t=1}^{T} \left[ v_{\max}^2 N \sum_{j=1}^N \max_{\bm{a} \in \mathcal{D}} \hat{\sigma}^2_{j,t-1}(\bm{a}) \right] + \sum_{t=1}^{T} \left[ (n_{\max} + 2 \sqrt{b'_t} \sigma)^2 N \sum_{j = 1}^N \hat{\psi}^2_{j,t-1} \right] \Bigg\}\\
& \leq 8 T N b_T \Bigg\{ v_{\max}^2 \sum_{j=1}^N \sum_{t=1}^{T} \left[ \max_{\bm{a} \in \mathcal{D}} \frac{\log \left(1 + \frac{ \hat{\sigma}^{2}_{i,n-1}(\bm{a}) }{\lambda} \right) }{\log \left( 1 + \frac{1}{\lambda} \right)} \right]\\
& + (n_{\max} + 2 \sqrt{b'_t} \sigma)^2 \sum_{j=1}^N \sum_{t=1}^{T} \hat{\psi}^{2}_{j,t-1} \Bigg\}\\
& \leq 8 T N b_T \Bigg\{ \frac{v_{\max}^2}{\log \left( 1 + \frac{1}{\lambda} \right)} \sum_{j=1}^N \underbrace{ \sum_{t=1}^{T} \max_{\bm{a} \in \mathcal{D}} \log \left(1 + \frac{ \hat{\sigma}^{2}_{i,n-1}(\bm{a}) }{\lambda} \right) }_{= \gamma_T(n_j)}\\
& + \xi (n_{\max} + 2 \sqrt{b'_t} \sigma)^2 \sum_{j=1}^N \log \left( \frac{\xi}{\psi_j^2} + T \right) \Bigg\}\\
& \leq 8 T N b_T \left[ \frac{v_{\max}^2}{\log \left( 1 + \frac{1}{\lambda} \right)} \sum_{j=1}^N \gamma_T(n_j) + \xi (n_{\max} + 2 \sqrt{b'_t} \sigma)^2 \sum_{j=1}^N \log \left( \frac{\xi}{\psi_j^2} + T \right) \right].
\end{align*}
We conclude the proof by taking the square root on both the r.h.s.~and the l.h.s.~of the last inequality.
\end{proof}

\regretdts*

\begin{proof}
Recall that in \alg{}-\textsf{2D-TS} we assume the number of clicks $n_j(\bi, \bu) = n_j(\bm{a})$ of a campaign $C_j$ is the realization of a GP over the space $\mathcal{D}$ of the bid/budget pairs $\bm{a} = (\bi, \bu)$.
Using the selected input $\bm{a}_{j, h}$ and corresponding observations $\tilde{n}_{j, h} = \tilde{n}_{j, h}(\bm{a}_{j, h})$ for each $h \in \{1 ,\ldots, t-1\}$ the GP provides us with the estimates of the mean $\hat{\mu}_{j,t-1}(\bm{a})$ and variance $\hat{\sigma}^2_{j,t-1}(\bm{a})$ for each $\bm{a} \in \mathcal{D}$.
The sampling phase generates the following two values for the number of clicks and on the value per click, formally, for each campaign $C_j$, a sample $\theta^{(n)}_{j,t-1}(\mathbf{a})$ is extracted from $\mathcal{N}(\hat{\mu}_{j,t-1}(\mathbf{a}), \hat{\sigma}^2_{j,t-1}(\mathbf{a}))$ for the number of clicks, and a sample $\theta^{(v)}_{j,t-1}$ is extracted from $\mathcal{N}(\hat{\nu}_{j, t-1}, \hat{\psi}_{j,t-1})$.

Let us focus on $\theta^{(n)}_{j, t-1}(\mathbf{a})$.
Since Lemma~\ref{lem:gauss} also applies to univariate Gaussian distributions, it holds for $\theta^{(n)}_{j,t-1}(\mathbf{a})$, for a generic arm $\bm{a}$, and, formally, we have:
\begin{equation*}
	\mathbb{P} \left[ |\theta^{(n)}_{j, t-1}(\bm{a}) - \hat{\mu}_{j, t-1}(\bm{a})| > \sqrt{b_{t}} \hat{\sigma}_{j, t-1}(\bm{a}) \right] \leq e^{- \frac{b_{t}}{2}},
\end{equation*}
for each $b_{t} > 0$.
By relying on the triangle inequality, fora each $\bm{a} \in \mathcal{D}$ we have:
\begin{align*}
	& \mathbb{P} \left[ | \theta^{(n)}_{j,t-1}(\bm{a}) - n_j(\bm{a}) | > \sqrt{b_{t}} \hat{\sigma}_{j, t-1}(\bm{a}) \right]\\
	& \leq \mathbb{P} \left[ |\theta^{(n)}_{j,t-1}(\bm{a}) - \hat{\mu}_{j,t-1}(\bm{a})| + |\hat{\mu}_{j,t-1}(\bm{a}) - n_j(\bm{a})| > \sqrt{b_{t}} \hat{\sigma}_{j, t-1}(\bm{a}) \right] \\
	& \leq \mathbb{P} \left[ |\theta^{(n)}_{j,t-1}(\bm{a}) - \hat{\mu}_{j,t-1}(\bm{a})| > \frac{1}{2} \sqrt{b_{t}} \hat{\sigma}_{j, t-1}(\bm{a}) \right]\\
	& + \mathbb{P} \left[ |\hat{\mu}_{j,t-1}(\bm{a}) - n_j(\bm{a})| > \frac{1}{2} \sqrt{b_{t}} \hat{\sigma}_{j, t-1}(\bm{a}) \right] \\
	& \leq 2 e^{- \frac{b_{t}}{8}}.
\end{align*}
Similarly to what done in the proof of Theorem~\ref{thm:regretducb}, setting $b_{t} := 8 \log \left( \frac{2 N M t^2}{3 \delta} \right)$, applying the union bound over the rounds, the subsets $\mathcal{D}$, the number of times the arms are chosen in $\mathcal{D}$, and the available arms, we have that the following holds with probability at least $1 - \frac{\delta}{2}$:
\begin{equation*}
	|\theta^{(n)}_{j, h-1}(\bm{a}) - n_j(\bm{a})| < \sqrt{b_{h}} \hat{\sigma}_{j, h-1}(\bm{a}),
\end{equation*}
for all $\bm{a} \in \mathcal{D}_j$, $j \in \{1, \ldots N\}$ and $h \in \{1, \ldots, t\}$.

The same reasoning can be carried out with $\theta^{(v)}_{j,t-1}$ setting $b'_{t} := 8 \log \left( \frac{2 N t^2}{3 \delta} \right)$, so that the following bound:
\begin{equation*}
|\theta^{(v)}_{j, t-1} - v_j| < \sqrt{b'_{t}} \hat{\psi}_{j, t-1},
\end{equation*}
holds for each $j \in \{1, \ldots, N\}$ and $h \in \{1, \ldots, t\}$ with probability at least $1 - \frac{\delta}{2}$.
Therefore, jointly, the bounds over the number of clicks and on the value per click hold with probability at least $1 - \delta$.

Let us assume that all previous bounds hold.
Consider the instantaneous pseudo-regret $reg_t$ at round $t$:
\begin{align*}
reg_t &= r^*_{\bm{\mu}} - r_{\bm{\mu}}(S_t)\\
& =  r^*_{\bm{\mu}} - r_{\bm{\theta_t}}(S^*_{\bm{\mu}}) + r_{\bm{\theta_t}}(S^*_{\bm{\mu}}) - r_{\bm{\theta_t}}(S_t) + r_{\bm{\theta_t}}(S_t) - r_{\bm{\mu}}(S_t)\\
& \leq |r_{\bm{\mu}}(S^*_{\bm{\mu}}) - r_{\bm{\theta_t}}(S^*_{\bm{\mu}})| + |r_{\bm{\theta_t}}(S_t) - r_{\bm{\mu}}(S_t)|,
\end{align*}
where $\bm{\theta_t} := (\theta^{(v)}_{1,t-1} \theta^{(n)}_{1,t-1}(\bm{a}_1), \ldots, \theta^{(v)}_{N,t-1} \theta^{(n)}_{N,t-1}(\bm{a}_{M}))$ is the vector of the drawn payoffs for the turn $t$ and $r_{\bm{\theta_t}}(S^*_{\bm{\mu}}) - r_{\bm{\theta_t}}(S_t) \leq 0$ for the fact that the chosen arm $S_t$ maximize the reward assuming an expected reward over the arms of $\bm{\theta_t}$.

Let us focus on the term  $| r_{\bm{\mu}}(S) - r_{\bm{\theta_t}}(S) |$ on a generic superarm $S = (\bm{a}_1, \ldots \bm{a}_N)$:
\begin{align*}
&|r_{\bm{\mu}}(S) - r_{\bm{\theta_t}}(S)| = \sum_{j=1}^{N} | v_j n_j(\mathbf{a}_j) - \theta^{(v)}_{j,t-1} \theta^{(n)}_{j,t-1}(\mathbf{a}_j) |\\
&= \sum_{j=1}^{N} | v_j n_j(\mathbf{a}_j) - \theta^{(v)}_{j,t-1} n_j(\mathbf{a}_j) | + | \theta^{(v)}_{j,t-1} n_j(\mathbf{a}_j) - \theta^{(v)}_{j,t-1} \theta^{(n)}_{j,t-1}(\mathbf{a}_j) | \\
& = \sum_{j=1}^{N} \left[ n_{\max} \sqrt{b'_{t}} \hat{\psi}_{j, t-1} + \theta^{(v)}_{j,t-1} \sqrt{b_{t}} \hat{\sigma}_{j, t-1}(\bm{a}_j) \right]\\
& = n_{\max} \sqrt{b'_t} \sum_{j=1}^{N} \hat{\psi}_{j, t-1} + \sum_{j=1}^{N} \left( v_j + \sqrt{b'_{t}} \hat{\psi}_{j, t-1} \right) \sqrt{b_{t}} \hat{\sigma}_{j, t-1}(\bm{a}_j) \\
& = n_{\max} \sqrt{b'_t} \sum_{j=1}^{N} \hat{\psi}_{j, t-1} + v_{\max} \sqrt{b_t} \sum_{j=1}^{N} \hat{\sigma}_{j, t-1}(\bm{a}_j) + \sqrt{b_t b'_t} \sigma \sum_{j=1}^{N} \hat{\psi}_{j, t-1}\\
& \leq  v_{\max} \sqrt{b_t} \sum_{j=1}^{N} \max_{\bm{a} \in \mathcal{D}} \hat{\sigma}_{j,t-1}(\bm{a}) + \sqrt{b'_t}(n_{\max} + \sqrt{b_t} \sigma) \sum_{j=1}^{N} \hat{\psi}_{j, t-1},
\end{align*}
and, therefore, the instantaneous regret $reg_t$ is bounded by twice the quantity we derived.

The cumulative regret becomes:
\begin{align*}
&\mathcal{R}_T^{2}(\mathfrak{U}) \leq T \sum_{t=1}^{T} reg^{2}_t \\
& \leq 4 T \sum_{t=1}^{T} \left[ 2 v_{\max}^2 b_t \left( \sum_{j = 1}^N \max_{\bm{a} \in \mathcal{D}_j} \sigma_{j,t-1}(\bm{a}) \right)^2 + 2 b'_t ( n_{\max} + \sqrt{b_t} \sigma)^2 \left( \sum_{j = 1}^N \hat{\psi}_{j,t-1} \right)^2 \right]\\
& \leq 8 T \Bigg\{ b_T \sum_{t=1}^{T} \left[ v_{\max}^2 N \sum_{j=1}^N \max_{\bm{a} \in \mathcal{D}} \hat{\sigma}^2_{j,t-1}(\bm{a}) \right] + b'_T \sum_{t=1}^{T} \left[ (n_{\max} + \sqrt{b_t} \sigma)^2 N \sum_{j = 1}^N \hat{\psi}^2_{j,t-1} \right] \Bigg\}\\
& \leq 8 T N \left[ \frac{v_{\max}^2}{\log \left( 1 + \frac{1}{\lambda} \right)} b_T \sum_{j=1}^N \gamma_T(n_j) + \xi b'_T (n_{\max} + \sqrt{b_T} \sigma)^2 \sum_{j=1}^N \log \left( \frac{\xi}{\psi_j^2} + T \right) \right].
\end{align*}
We conclude the proof by taking the square root on both the r.h.s.~and the l.h.s.~of the last inequality.

\end{proof}
\clearpage
\regretfucb*

\begin{proof}
At first notice that Lemma~\ref{lem:gauss} can be applied to the quantities of maximum number of clicks, maximum cost and value per click.
This allows, setting $b_{t}:= 2 \log \left( \frac{\pi^2 N M t^2}{2 \delta} \right)$ and $b'_{t} := 2 \log \left( \frac{\pi^2 N t^2}{2 \delta} \right)$, that the following bounds for each arm $\bi$ and each round $t$ hold at the same time:
\begin{align*}
	&| n^{\mathsf{sat}}_j(\bi) - \hat{\mu}_{j,t-1}(\bi)| \leq \sqrt{b_{t}} \hat{\sigma}_{j,t-1}(\bi),\\
	&| e_{j}^{\mathsf{sat}}(\bi) - \hat{\eta}_{j,t-1}(\bi) | \leq \sqrt{b_{t}} \hat{s}_{j,t-1}(\bi),\\
	&|v_j - \hat{\nu}_{j,t-1}| \leq \sqrt{b'_{t}} \hat{\psi}_{j,t-1},
\end{align*}
with probability at least $1 - \delta$, since each one of the above events holds with probability at least $1 - \frac{\delta}{3}$.

Assume that the previous bounds hold and consider the following quantity:
\begin{equation} \label{eq:nfac}
	\left| \min \left\{n_{j}^{\mathsf{sat}}(\bi), \bu e^{\mathsf{sat}}_{j}(\bi) \right\} - \min \left\{\hat{\mu}_{j,t-1}(\bi), \bu \hat{\eta}_{j,t-1}(\bi) \right\} \right|.
\end{equation}
If we are able to provide a bound for this quantity, then following the proof of Theorem~\ref{thm:regretducb} it is possible to provide a bound on the regret of the \alg{}-\textsf{F-UCB} algorithm.

Depending on the values of $y$, $e^{\mathsf{sat}}_{j}(\bi)$ and $\hat{\eta}_{j,t-1}(\bi)$ we can distinguish the following $4$ cases:

\noindent \textbf{Case 1}: if $\bu c^{\mathsf{sat}}_{j}(\bi) > n^{\mathsf{sat}}_j(\bi) \wedge \bu \hat{\eta}_{j,t-1}(\bi) > \hat{\mu}_{j,t-1}(\bi)$ the quantity in Equation~\ref{eq:nfac} becomes:
\begin{align}
	&\left| \min \left\{n_{j}^{\mathsf{sat}}(\bi), \bu e^{\mathsf{sat}}_{j}(\bi) \right\} - \min \left\{\hat{\mu}_{j,t-1}(\bi), \bu \hat{\eta}_{j,t-1}(\bi) \right\} \right|\\
	&\leq |n_{j}^{\mathsf{sat}}(\bi) - \hat{\mu}_{j,t-1}(\bi)| \leq \sqrt{b_{t}} \hat{\sigma}_{j,t-1}(\bi).
\end{align}
	
\noindent \textbf{Case 2} $y c^{\mathsf{sat}}_{j}(\bi) < n^{\mathsf{sat}}_j(\bi) \wedge \bu \hat{\eta}_{j,t-1}(\bi) < \hat{\mu}_{j,t-1}(\bi)$ the quantity in Equation~\ref{eq:nfac} becomes:
\begin{align}
	&\left| \min \left\{n_{j}^{\mathsf{sat}}(\bi), \bu e^{\mathsf{sat}}_{j}(\bi) \right\} - \min \left\{\hat{\mu}_{j,t-1}(\bi), \bu \hat{\eta}_{j,t-1}(\bi) \right\} \right|\\
	& = \bu \left| e^{\mathsf{sat}}_{j}(\bi) - \hat{\eta}_{j,t-1}(\bi) \right| \leq \bu_{\max} \sqrt{b_{t}} \hat{s}_{j,t-1}(\bi).
\end{align}

\noindent \textbf{Case 3}: $ \frac{n^{\mathsf{sat}}_{j}(\bi)}{e^{\mathsf{sat}}_{j}(\bi)} < \bu < \frac{\hat{\mu}_{j,t-1}(\bi)}{\hat{\eta}_{j,t-1}(\bi)}$ the quantity in Equation~\ref{eq:nfac} becomes:
\begin{align}
	&\left| \min \left\{n_{j}^{\mathsf{sat}}(\bi), \bu e^{\mathsf{sat}}_{j}(\bi) \right\} - \min \left\{\hat{\mu}_{j,t-1}(\bi), \bu \hat{\eta}_{j,t-1}(\bi) \right\} \right|\\
	& = \left| n_{j}^{\mathsf{sat}}(\bi) - \bu \hat{\eta}_{j,t-1}(\bi) \right|\\
	& \leq \bu \left| e^{\mathsf{sat}}_{j}(\bi) - \hat{\eta}_{j,t-1}(\bi) \right| \leq \bu_{\max} \sqrt{b_{t}} \hat{s}_{j,t-1}(\bi),
\end{align}
where we used that $n^{\mathsf{sat}}_{j}(\bi) \leq \bu e^{\mathsf{sat}}_{j}(\bi)$.

\noindent \textbf{Case 4}: $\frac{\hat{\mu}_{j,t-1}(\bi)}{\hat{\eta}_{j,t-1}(\bi)} < \bu < \frac{n^{\mathsf{sat}}_{j}(\bi)}{e^{\mathsf{sat}}_{j}(\bi)}$ the quantity in Equation~\ref{eq:nfac} becomes:
\begin{align}
	&\left| \min \left\{n_{j}^{\mathsf{sat}}(\bi), \bu e^{\mathsf{sat}}_{j}(\bi) \right\} - \min \left\{\hat{\mu}_{j,t-1}(\bi), \bu \hat{\eta}_{j,t-1}(\bi) \right\} \right|\\
	& = \left| \bu e^{\mathsf{sat}}_{j}(\bi) - \hat{\mu}_{j,t-1}(\bi) \right|\\
	& \leq | n_{j}^{\mathsf{sat}}(\bi) - \hat{\mu}_{j,t-1}(\bi)| \leq \sqrt{b_{t}} \hat{\sigma}_{j,t-1}(\bi),
\end{align}
where we used that $n^{\mathsf{sat}}_{j}(\bi) \geq \bu e^{\mathsf{sat}}_{j}(\bi)$.

Overall we have that:
\begin{align}
	& \left| \min \left\{n_{j}^{\mathsf{sat}}(\bi), \bu e^{\mathsf{sat}}_{j}(\bi) \right\} - \min \left\{\hat{\mu}_{j,t-1}(\bi), \bu \hat{\eta}_{j,t-1}(\bi) \right\} \right|\\
	& \leq \sqrt{b_t} \left( \bu_{\max} \hat{s}_{j,t-1}(\bi) +  \hat{\sigma}_{j,t-1}(\bi) \right).
\end{align}

Similarly to what has been provided above, let us focus on the quantity:
\begin{equation} \label{eq:nfac2}
	\left| \min \left\{ u^{(n)}_{j,t-1}(\bi), \bu u^{(e)}_{j,t-1}(\bi) \right\} -  \min \left\{ \hat{\mu}_{j,t-1}(\bi), \bu \hat{\eta}_{1,t-1}(\bi) \right\} \right|,
\end{equation}
which can be bounded by looking at the same $4$ different cases.

\noindent \textbf{Case 1}: if $\bu u^{(e)}_{j,t-1}(\bi) > u^{(n)}_{j,t-1}(\bi) \wedge \bu \hat{\eta}_{j,t-1}(\bi) > \hat{\mu}_{j,t-1}(\bi)$ the quantity in Equation~\ref{eq:nfac2} becomes by definition:
\begin{align}
	&\left| \min \left\{ u^{(n)}_{j,t-1}(\bi), \bu u^{(e)}_{j,t-1}(\bi) \right\} -  \min \left\{ \hat{\mu}_{j,t-1}(\bi), \bu \hat{\eta}_{1,t-1}(\bi) \right\} \right|\\
	& = | u^{(n)}_{j,t-1}(\bi) - \hat{\mu}_{j,t-1}(\bi)| = \sqrt{b_{t}} \hat{\sigma}_{j,t-1}(\bi).
\end{align}

\noindent \textbf{Case 2} $\bu u^{(e)}_{j,t-1}(\bi) < u^{(n)}_{j,t-1}(\bi) \ \wedge \ \bu \hat{\eta}_{j,t-1}(\bi) < \hat{\mu}_{j,t-1}(\bi)$ the quantity in Equation~\ref{eq:nfac2} becomes:
\begin{align}
	&\left| \min \left\{ u^{(n)}_{j,t-1}(\bi), \bu u^{(e)}_{j,t-1}(\bi) \right\} -  \min \left\{ \hat{\mu}_{j,t-1}(\bi), \bu \hat{\eta}_{j,t-1}(\bi) \right\} \right|\\
	& = \bu \left| u^{(e)}_{j,t-1}(\bi) - \hat{\eta}_{1,t-1}(\bi) \right|\\
	& \leq \bu_{\max} \sqrt{b_{t}} \hat{s}_{j,t-1}(\bi).
\end{align}

\noindent \textbf{Case 3}:  $ \frac{u^{(n)}_{j,t-1}(\bi)}{u^{(e)}_{j,t-1}(\bi)} < \bu < \frac{\hat{\mu}_{j,t-1}(\bi)}{\hat{\eta}_{j,t-1}(\bi)}$ the quantity in Equation~\ref{eq:nfac2} becomes:
\begin{align}
	&\left| \min \left\{ u^{(n)}_{j,t-1}(\bi), \bu u^{(e)}_{j,t-1}(\bi) \right\} -  \min \left\{ \hat{\mu}_{j,t-1}(\bi), \bu \hat{\eta}_{1,t-1}(\bi) \right\} \right|\\
	& = \left| u^{(n)}_{j,t-1}(\bi) - \bu \hat{\eta}_{j,t-1}(\bi) \right|\\
	& \leq \bu \left| u^{(e)}_{j,t-1}(\bi) - \hat{\eta}_{j,t-1}(\bi) \right|\\
	& \leq \bu_{\max} \sqrt{b_{t}} \hat{s}_{j,t-1}(\bi).
\end{align}

\textbf{Case 4}: $\frac{\hat{\mu}_{j,t-1}(\bi)}{\hat{\eta}_{j,t-1}(\bi)} < \bu < \frac{u^{(n)}_{j,t-1}(\bi)}{u^{(e)}_{j,t-1}(\bi)}$ the quantity in Equation~\ref{eq:nfac2} becomes:
\begin{align}
	&\left| \min \left\{ u^{(n)}_{j,t-1}(\bi), \bu u^{(e)}_{j,t-1}(\bi) \right\} -  \min \left\{ \hat{\mu}_{j,t-1}(\bi), \bu \hat{\eta}_{1,t-1}(\bi) \right\} \right|\\
	& = \left| \bu u^{(e)}_{j,t-1}(\bi) - \hat{\mu}_{j,t-1}(\bi) \right|\\
	& \leq \left| u^{(n)}_{j,t-1}(\bi) - \hat{\mu}_{j,t-1}(\bi) \right| = \sqrt{b_{t}} \hat{\sigma}_{j,t-1}(\bi).
\end{align}

Overall we have the bound:
\begin{align}
	& \left| \min \left\{ u^{(n)}_{j,t-1}(\bi), \bu u^{(e)}_{j,t-1}(\bi) \right\} - \min \left\{ \hat{\mu}_{j,t-1}(\bi), \bu \hat{\eta}_{1,t-1}(\bi) \right\} \right|\\
	& \leq \sqrt{b_t} \left( \bu_{\max} \hat{s}_{j,t-1}(\bi) +  \hat{\sigma}_{j,t-1}(\bi) \right).
\end{align}

Since the definition of per round regret $reg_t$ is the same as the one in Theorem~\ref{thm:regretducb}, defining:
\begin{align*}
	\bar{\bm{\mu}}_t := \Bigg( u^{(v)}_{1, t-1} & \min \left\{ u^{(n)}_{1,t-1}(x_1), \bu_1 u^{(e)}_{1,t-1}(x_1) \right\}, \ldots,\\
	& u^{(v)}_{N, t-1} \min \left\{ u^{(n)}_{N,t-1}(x_{M}), \bu_{M} u^{(e)}_{N, t-1}(x_{M}) \right\} \Bigg),\\
	\bm{\mu}_t := \Bigg( \hat{\nu}_{1,t-1} & \min \left\{ \hat{\mu}_{1,t-1}(x_1), \bu_1 \hat{\eta}_{1,t-1}(x_1) \right\}, \ldots, \\
	&\hat{\nu}_{N, t-1} \min \left\{ \hat{\mu}_{N,t-1}(x_{M}), \bu_{M} \hat{\eta}_{N, t-1}(x_{M}) \right\} \Bigg)\\
	\bm{\mu} := \Bigg( v_1 \ & \min \left\{ n_1^{\mathsf{sat}}(x_1), \bu_1 c_1^{\mathsf{sat}}(x_1) \right\}, \ldots, \\
	&v_N \ \min \left\{ n_N^{\mathsf{sat}}(x_{M}), \bu_{M} c_N^{\mathsf{sat}}(x_{M}) \right\} \Bigg),
\end{align*}
we have:
\begin{align*}
	reg_t & \leq r_{\bar{\bm{\mu}}_t}(S_t) - r_{\bm{\mu}}(S_t)\\
	& \leq r_{\bar{\bm{\mu}}_t}(S_t) - r_{\bm{\mu}_t}(S_t) + r_{\bm{\mu}_t}(S_t) - r_{\bm{\mu}}(S_t),
\end{align*}

A bound the terms $(r_{\bar{\bm{\mu}}_t}(S_t) - r_{\bm{\mu}_t}(S_t))$ is provided by the definition of the upper confidence bounds:
\begin{align*}
&r_{\bar{\bm{\mu}}_t} (S_t) - r_{\bm{\mu}_t}(S_t)\\
& = \sum_{j=1}^{N} \left[ u^{(v)}_{j,t-1} \min \left\{ u^{(n)}_{j,t-1}(x_{j,t}), \bu_{j,t} u^{(e)}_{j,t-1}(x_{j,t}) \right\} - \hat{\nu}_{j,t-1} \min \left\{ \hat{\mu}_{j,t-1}(x_{j,t}), \bu_{j,t} \hat{\eta}_{j,t-1}(x_{j,t}) \right\} \right]\\
& = \sum_{j=1}^{N} \left[ u^{(v)}_{j,t-1} \min \left\{ u^{(n)}_{j,t-1}(x_{j,t}), \bu_{j,t} u^{(e)}_{j,t-1}(x_{j,t}) \right\} - u^{(v)}_{j,t-1} \min \left\{ \hat{\mu}_{j,t-1}(x_{j,t}), \bu_{j,t} \hat{\eta}_{j,t-1}(x_{j,t}) \right\} \right.\\
& \left. + u^{(v)}_{j,t-1} \min \left\{ \hat{\mu}_{j,t-1}(x_{j,t}), \bu_{j,t} \hat{\eta}_{j,t-1}(x_{j,t}) \right\} - \hat{\nu}_{j,t-1} \min \left\{ \hat{\mu}_{j,t-1}(x_{j,t}), \bu_{j,t} \hat{\eta}_{j,t-1}(x_{j,t}) \right\} \right]\\
& \leq \sum_{j=1}^{N} \Bigg\{ \left[ v_{\max} + 2 \sqrt{b'_{t}} \hat{\psi}_{j,t-1} \right] \Big( \min \left\{ u^{(n)}_{j,t-1}(x_{j,t}), \bu_{j,t} u^{(e)}_{j,t-1}(x_{j,t}) \right\} \\
& - \min \left\{ \hat{\mu}_{j,t-1}(x_{j,t}), \bu_{j,t} \hat{\eta}_{j,t-1}(x_{j,t}) \right\} \Big)\\
& + (u^{(v)}_{j,t-1} - \hat{\nu}_{j,t-1}) \left[ n_{\max}^{\mathsf{sat}} + \sqrt{b_t} \left( \bu_{\max} \hat{s}_{j,t-1}(\bi) + \hat{\sigma}_{j,t-1}(\bi) \right) \right] \Bigg\}\\
& \leq \sum_{j=1}^{N} \left\{ \left[ v_{\max} + 2 \sqrt{b'_{t}} \hat{\psi}_{j,t-1} \right] \sqrt{b_t} \left( \bu_{\max} \hat{s}_{j,t-1}(\bi) + \hat{\sigma}_{j,t-1}(\bi) \right) \right.\\
& + \left. \left( n_{\max}^{\mathsf{sat}} + \sqrt{b_t} \left( \bu_{\max} \hat{s}_{j,t-1}(\bi) + \hat{\sigma}_{j,t-1}(\bi) \right) \right) \sqrt{b'_{t}} \hat{\psi}_{j,t-1} \right\}\\
& = v_{\max} \ \bu_{\max} \sqrt{b_t} \sum_{j=1}^{N} \hat{s}_{j,t-1}(\bi) + v_{\max} \sqrt{b_t} \sum_{j=1}^{N} \hat{\sigma}_{j,t-1}(\bi) + 2 \sqrt{b_t b'_t} (s \bu_{\max} + \sigma) \sum_{j=1}^{N} \hat{\psi}_{j,t-1}\\
& + n_{\max}^{\mathsf{sat}} \sqrt{b'_t} \sum_{j=1}^{N} \hat{\psi}_{j,t-1} + s \bu_{\max} \sqrt{b_t b'_t} \sum_{j=1}^{N} \hat{\psi}_{j,t-1} + \sigma \sqrt{b_t b'_t} \sum_{j=1}^{N} \hat{\psi}_{j,t-1}\\
& = v_{\max} \ \bu_{\max} \sqrt{b_t} \sum_{j=1}^{N} \hat{s}_{j,t-1}(\bi) + v_{\max} \sqrt{b_t} \sum_{j=1}^{N} \hat{\sigma}_{j,t-1}(\bi) + \bigg( 3 s \bu_{\max} \sqrt{b_t b'_t} \\
& + n_{\max}^{\mathsf{sat}} \sqrt{b'_t} + 3 \sigma \sqrt{b_t b'_t} \bigg) \sum_{j=1}^{N} \hat{\psi}_{j,t-1},
\end{align*}
where $v_{\max} := \max_{j=1}^N v_j$ is the maximum expected value per click, $n_{\max} := max_{j, x} \ n_j(x)$ is the maximum number of clicks for any campaign, and we have $\hat{\sigma}_{j,t}(x) \leq \sigma$ and $\hat{s}_{j,t}(x) \leq s$ for each $j$, $t$, and $x$.

Let us focus on the term $(r_{\bm{\mu}_t}(S_t) - r_{\bm{\mu}}(S_t))$:
\begin{align*}
&r_{\bm{\mu}_t}(S_t) - r_{\bm{\mu}}(S_t)\\
& = \sum_{j=1}^{N} \left[ \hat{\nu}_{j,t-1} \min \left\{ \hat{\mu}_{j,t-1}(x_{j,t}), \bu_{j,t} \hat{\eta}_{j,t-1}(x_{j,t}) \right\}  - v_j \min \left\{ n_j^{\mathsf{sat}}(x_{j,t}), \bu_{j,t} e_j^{\mathsf{sat}}(x_{j,t}) \right\} \right]\\
& = \sum_{j=1}^{N} \left[ \hat{\nu}_{j,t-1} \min \left\{ \hat{\mu}_{j,t-1}(x_{j,t}), \bu_{j,t} \hat{\eta}_{j,t-1}(x_{j,t}) \right\} - \hat{\nu}_{j,t-1} \min \left\{ n_j^{\mathsf{sat}}(x_{j,t}), \bu_{j,t} e_j^{\mathsf{sat}}(x_{j,t}) \right\} \right.\\
& \left. + \hat{\nu}_{j,t-1} \min \left\{ n_j^{\mathsf{sat}}(x_{j,t}), \bu_{j,t} e_j^{\mathsf{sat}}(x_{j,t}) \right\} - v_j \min \left\{ n_j^{\mathsf{sat}}(x_{j,t}), \bu_{j,t} e_j^{\mathsf{sat}}(x_{j,t}) \right\} \right]\\
& \leq \sum_{j=1}^{N} \left\{ \left[ v_{\max} + \sqrt{b_{t}} \hat{\psi}_{j,t-1} \right] \Big( 
\min \left\{ \hat{\mu}_{j,t-1}(x_{j,t}), \bu_{j,t} \hat{\eta}_{j,t-1}(x_{j,t}) \right\} \right. \\
& \left. - \min \left\{ n_j^{\mathsf{sat}}(x_{j,t}), \bu_{j,t} e_j^{\mathsf{sat}}(x_{j,t}) \right\} \Big) + (\hat{\nu}_{j,t-1} - v_{j,t-1}) n_{\max}^{\mathsf{sat}} \right\}\\
& \leq \sum_{j=1}^{N} \left\{ \left[ v_{\max} + \sqrt{b'_{t}} \hat{\psi}_{j,t-1} \right] \sqrt{b_t} \left( \bu_{\max} \hat{s}_{j,t-1}(\bi) + \hat{\sigma}_{j,t-1}(\bi) \right) + n_{\max}^{\mathsf{sat}} \sqrt{b'_{t}} \hat{\psi}_{j,t-1} \right\}\\
& = v_{\max} \ \bu_{\max} \sqrt{b_t} \sum_{j=1}^{N} \hat{s}_{j,t-1}(\bi) +
v_{\max} \sqrt{b_t} \sum_{j=1}^{N} \hat{\sigma}_{j,t-1}(\bi)\\
& + ( \sqrt{b_t b'_t} \sigma + s \bu_{\max} \sqrt{b_t b'_t} + n_{\max}^{\mathsf{sat}}  \sqrt{b'_t}) \sum_{j=1}^{N} \hat{\psi}_{j,t-1} \\
\end{align*}
where we used arguments similar to the ones we considered in the previous derivation.

Summing up we have:
\begin{align*}
reg_t & \leq 2 v_{\max} \sqrt{b_t} \sum_{j=1}^{N} \hat{\sigma}_{j,t-1}(\bi) + 2 v_{\max} \ \bu_{\max} \sqrt{b_t} \sum_{j=1}^{N} \hat{s}_{j,t-1}(\bi)\\
& + \sqrt{b'_t} \left(4 s \bu_{\max} \sqrt{b_t} + 4 \sigma \sqrt{b_t} + 2 n_{\max}^{\mathsf{sat}} \right)  \sum_{j=1}^{N} \hat{\psi}_{j,t-1},
\end{align*}

Using arguments similar to what has been used in Theorem~\ref{thm:regretducb} we have:
\begin{align*}
&\mathcal{R}_T^{2}(\mathfrak{U}) \leq T \sum_{t=1}^{T} reg^{2}_t \\
& = T \sum_{t=1}^{T} \left[ 2 v_{\max} \sqrt{b_t} \sum_{j=1}^{N} \hat{\sigma}_{j,t-1}(\bi) + 2 v_{\max} \ \bu_{\max} \sqrt{b_t} \sum_{j=1}^{N} \hat{s}_{j,t-1}(\bi)\right.\\
&\left.  + \sqrt{b'_t} \left(4 s \bu_{\max} \sqrt{b_t} + 4 \sigma \sqrt{B_t} + 2 n_{\max}^{\mathsf{sat}} \right) \sum_{j=1}^{N} \hat{\psi}_{j,t-1} \right]^2\\
& \leq T \left[ 12 v^2_{\max} b_T N \sum_{j=1}^{N} \sum_{t=1}^{T} \max_{\bi} \hat{\sigma}^2_{j,t-1}(\bi) + 12 v^2_{\max} \ \bu^2_{\max} b_T N \sum_{j=1}^{N} \sum_{t=1}^{T} \max_{\bi} \hat{s}^2_{j,t-1}(\bi) \right.\\
&\left. + 3 b'_T N \left(4 s \bu_{\max} \sqrt{b_T} + 4 \sigma \sqrt{b_T} + 2 n_{\max}^{\mathsf{sat}} \right)^2 \sum_{j=1}^{N} \sum_{t=1}^{T} \hat{\psi}^2_{j,t-1} \right]^2\\
& T N \left[ \bar{c}_1 b_T \sum_{j=1}^{N} \gamma_T(n_j) + \bar{c}_2 b_T \sum_{j=1}^{N} \gamma_T(e_j) \right.\\
&\left. + \bar{c}_3 b'_T \left(2 s \bu_{\max} \sqrt{b_T} + 2 \sigma \sqrt{b_T} + n_{\max}^{\mathsf{sat}} \right)^2 \sum_{j=1}^{N} \log \left( \frac{\xi}{\psi^2_j} + T \right) \right],
\end{align*}
where we defined $\bar{c}_1 := \frac{12 v^2_{\max}}{\log \left( 1 + \frac{1}{\lambda} \right)}$, $\bar{c}_2 := \frac{12 v^2_{\max} \bu^2_{\max}}{\log \left( 1 + \frac{1}{\lambda'} \right)}$, and $\bar{c}_3 := 12 \xi$, where $\lambda$ and $\lambda'$ are the variance of the measurement noise on the maximum number of clicks and number of clicks per unit of daily budget.
Taking the square root of both right and left hand side of this inequality concludes the proof.

\end{proof}

\regretfts*
\begin{proof}
We recall that the decision we take are based on samples drawn from these distributions:
\begin{align*}
	&\theta^{(n)}_{j,t-1}(\bi) \sim \mathcal{N}(\hat{\mu}_{j,t-1}(\bi),\hat{\sigma}^2_{j,t-1}(\bi)),\\
	&\theta^{(e)}_{j,t-1}(\bi) \sim \mathcal{N}(\hat{\eta}_{j,t-1}(\bi), \hat{s}^2_{j,t-1}(\bi)),\\
	&\theta^{(v)}_{j,t-1} \sim \mathcal{N}(\hat{\nu}_{j,t-1}, \hat{\psi}^2_{j,t-1}).
\end{align*}

Ad we derived in the proof of Theorem~\ref{thm:regretdts}, by extracting samples from the predictive distributions of the GP we have that:
\begin{align*}
	& \mathbb{P} \left[ |\theta^{(n)}_{j,t}(\bi) - n^{\mathsf{sat}}_j(\bi)| > \sqrt{b_{t}} \hat{\sigma}_{j, t-1}(\bi) \right] \leq 2 e^{- \frac{b_{t}}{8}},\\
	& \mathbb{P} \left[ |\theta^{(e)}_{j,t}(\bi) - e^{\mathsf{sat}}_j(\bi)| > \sqrt{b_{t}} \hat{s}_{j, t-1}(\bi) \right] \leq 2 e^{- \frac{b_{t}}{8}},\\
	& \mathbb{P} \left[ |\theta^{(v)}_{j,t} - v_j| > \sqrt{b'_{t}} \hat{\psi}_{j, t-1} \right] \leq 2 e^{- \frac{b'_{t}}{8}},
\end{align*}
and, therefore, setting $b_{t} := 8 \log \left( \frac{2 N M t^2}{2 \delta} \right)$ and $b'_{t} := 8 \log \left( \frac{2 N t^2}{2 \delta} \right)$ we have that the bounds over the rounds $t$, the different GPs $j$ and the different arms $\bi$ holds together with probability greater than $1 - \delta$.

Using arguments similar to what has been used in Theorem~\ref{thm:regretfucb}, we obtain the following bound for a generic bid/budget pair $(\bi,\bu)$:
\begin{align*}
	& |\min \{ n_j^{\mathsf{sat}}(\bi), \bu e_j^{\mathsf{sat}}(\bi) \} - \min \{\theta^{(n)}_{j,t}(\bi_j), \bu_j \theta^{(e)}_{j,t}(\bi_j) \}| \\
	& \leq |\min \{ n_j^{\mathsf{sat}}(\bi), \bu e_j^{\mathsf{sat}}(\bi) \} - \min \left\{ \hat{\mu}_{j,t-1}(\bi), \bu \hat{\eta}_{1,t-1}(\bi) \right\}|\\
	& + |\min \left\{ \hat{\mu}_{j,t-1}(\bi), \bu \hat{\eta}_{1,t-1}(\bi) \right\} - \min \{\theta^{(n)}_{j,t}(\bi_j), \bu_j \theta^{(e)}_{j,t}(\bi_j) \}|\\
	& \leq 2 \sqrt{b_t} \left( \bu_{\max} \hat{s}_{j,t-1}(\bi) +  \hat{\sigma}_{j,t-1}(\bi) \right).
\end{align*}

As in Theorem~\ref{thm:regretdts}, the instantaneous pseudo-regret $reg_t$ at round $t$ is bounded as follows:
\begin{align*}
reg_t \leq |r_{\bm{\mu}}(S^*_{\bm{\mu}}) - r_{\bm{\theta_t}}(S^*_{\bm{\mu}})| + |r_{\bm{\theta_t}}(S_t) - r_{\bm{\mu}}(S_t)|,
\end{align*}
where $\bm{\mu}$ us defined as in Theorem~\ref{thm:regretfucb} and 
\begin{align*}
	\bm{\theta_t} := (\theta^{(v)}_{1,t} \min \{\theta^{(n)}_{1,t}(\bi_1), & \bu_1 \theta^{(e)}_{1,t}(\bi_1) \}, \ldots,\\ 
	& \theta^{(v)}_{N,t} \min \{\theta^{(n)}_{N,t}(\bi_{M}), \bu_ {M} \theta^{(e)}_{1,t}(\bi_{M}) \}
\end{align*}
is the vector of the drawn payoffs for the turn $t$ and $r_{\bm{\theta_t}}(S^*_{\bm{\mu}}) - r_{\bm{\theta_t}}(S_t) \leq 0$ for the fact that the chosen arm $S_t$ maximize the reward assuming an expected reward over the arms of $\bm{\theta_t}$.

Let us focus on bounding the quantity $|r_{\bm{\mu}}(S) - r_{\bm{\theta_t}}(S)|$ on a generic superarm $S$:
\begin{align*}
	&|r_{\bm{\mu}}(S) - r_{\bm{\theta_t}}(S)| = \\
	& = \sum_{j=1}^{N} \left[v_j \min \{ n_j^{\mathsf{sat}}(x_j), \bu_j e_j^{\mathsf{sat}}(x_j) \} - \theta^{(v)}_{j,t} \min \{\theta^{(n)}_{j,t}(\bi_j), \bu_j \theta^{(e)}_{j,t}(\bi_j) \} \right]\\
	& = \sum_{j=1}^{N} \left[ v_j \min \{ n_j^{\mathsf{sat}}(x_j), \bu_j e_j^{\mathsf{sat}}(x_j) \}
	- v_j \min \{\theta^{(n)}_{j,t}(\bi_j), \bu_j \theta^{(e)}_{j,t}(\bi_j) \} \right.\\
	&\left. + v_j \min \{\theta^{(n)}_{j,t}(\bi_j), \bu_j \theta^{(e)}_{j,t}(\bi_j) \} - \theta^{(v)}_{j,t} \min \{\theta^{(n)}_{j,t}(\bi_j), \bu_j \theta^{(e)}_{j,t}(\bi_j) \} \right]\\
	& \leq \sum_{j=1}^{N} \left[ 2 v_{\max} \sqrt{b_t} \left( \bu_{\max} \hat{s}_{j,t-1}(\bi) + \hat{\sigma}_{j,t-1}(\bi) \right) \right.\\
	& \left. + \sqrt{b'_{t}} \hat{\psi}_{j, t-1} \left( n_{\max}^{\mathsf{sat}} + 2 \sqrt{b_t} \left( \bu_{\max} \hat{s}_{j,t-1}(\bi) + \hat{\sigma}_{j,t-1}(\bi) \right) \right) \right]\\
	& \leq 2 v_{\max} \bu_{\max} \sqrt{b_t} \sum_{j=1}^{N} \hat{s}_{j,t-1}(\bi) + 2 v_{\max} \sqrt{b_t} \sum_{j=1}^{N} \hat{\sigma}_{j,t-1}(\bi) + \\
	& + \sqrt{b'_t} ( n_{\max}^{\mathsf{sat}} + 2 s \bu_{\max} \sqrt{b_t} + 2 \sigma \sqrt{b_t}) \sum_{j=1}^{N} \hat{\psi}_{j, t-1})
\end{align*}

Thus, the the instantaneous pseudo-regret $reg_t$ at round $t$ is bounded as follows:
\begin{align*}
reg_t & \leq 4 v_{\max} \bu_{\max} \sqrt{b_t} \sum_{j=1}^{N} \hat{s}_{j,t-1}(\bi) + 4 v_{\max} \sqrt{b_t} \sum_{j=1}^{N} \hat{\sigma}_{j,t-1}(\bi) + \\
& + 2 \sqrt{b'_t} ( n_{\max}^{\mathsf{sat}} + 2 s \bu_{\max} \sqrt{b_t} + 2 \sigma \sqrt{b_t}) \sum_{j=1}^{N} \hat{\psi}_{j, t-1})
\end{align*}

Defining the constants $\bar{c}_1 := \frac{48 v^2_{\max}}{\log \left( 1 + \frac{1}{\lambda} \right)}$, $\bar{c}_2 := \frac{48 v^2_{\max} \bu^2_{\max}}{\log \left( 1 + \frac{1}{\lambda'} \right)}$, and $\bar{c}_3 := 12 \xi$ we have a cumulative regret of:
\begin{align*}
	&\mathcal{R}_T^{2}(\mathfrak{U}) \leq T \sum_{t=1}^{T} reg^{2}_t \\
	&T N \left[ \bar{c}_1 b_T \sum_{j=1}^{N} \gamma_T(n_j) + \bar{c}_2 b_T \sum_{j=1}^{N} \gamma_T(e_j) \right.\\
	&\left. + \bar{c}_3 b'_T \left(2 s \bu_{\max} \sqrt{b_T} + 2 \sigma \sqrt{b_T} + n_{\max}^{\mathsf{sat}} \right)^2 \sum_{j=1}^{N} \log \left( \frac{\xi}{\psi^2_j} + T \right) \right],
\end{align*}
and taking the square root of both the left and right hand side of the inequality concludes the proof.

\end{proof}
\clearpage

\end{document}